\documentclass{article} 
\usepackage{iclr2024_conference,times}


\usepackage{amsmath,amsfonts,bm}









\def\eqref#1{equation~\ref{#1}}









\def\1{\bm{1}}










\DeclareMathAlphabet{\mathsfit}{\encodingdefault}{\sfdefault}{m}{sl}
\SetMathAlphabet{\mathsfit}{bold}{\encodingdefault}{\sfdefault}{bx}{n}











\newcommand{\eg}{{\it e.g.}}
\newcommand{\ie}{{\it i.e.}}

\newcommand{\reals}{\mathbb{R}}


\newcommand{\expect}{\mathbb{E}}

\newcommand{\var}{\mathrm{Var}}



\newcommand{\fedavg}{{\sc FedAvg}\xspace}

\newcommand{\scaffold}{{\sc SCAFFOLD}\xspace}

\newcommand{\fedpd}{{\sc FedPD}\xspace}
\newcommand{\fedprox}{{\sc FedProx}\xspace}
\newcommand{\fednova}{{\sc FedNova}\xspace}

\usepackage[colorlinks,linkcolor=red,filecolor=blue,citecolor=blue,urlcolor=blue]{hyperref}
\usepackage{url}
\usepackage{amsthm,amsmath,amssymb, amscd}
\usepackage{subfigure}
\usepackage{graphicx}
\usepackage{enumerate}
\usepackage{enumitem}
\usepackage{algorithm, algorithmic}
\usepackage{dsfont}
\usepackage{booktabs}
\usepackage{multirow}
\usepackage{threeparttable}
\usepackage{makecell}
\usepackage{colortbl}
\definecolor{Ocean}{RGB}{129,194,234}
\definecolor{BPink}{rgb}{0.96, 0.76, 0.76}
\usepackage{soul, color, xcolor}

\setlength{\abovedisplayskip}{2pt}\setlength{\belowdisplayskip}{2pt}

\usepackage{float}
\restylefloat{algorithm} 
\setlength{\textfloatsep}{10pt} 

\usepackage{xspace}

\usepackage{mathtools}
\mathtoolsset{showonlyrefs=true}

\newtheorem{thm}{Theorem}

\newtheorem{asp}{Assumption}
\newtheorem{lemma}[thm]{Lemma}

\newcommand{\q}{\frac{N-S}{S(N-1)}}

\newcommand{\one}{\mathds{1}}

\newcommand{\EE}{\mathbb{E}}
\newcommand{\RR}{\mathbb{R}}

\newcommand{\cE}{\mathcal{E}}
\newcommand{\cS}{\mathcal{S}}

\newcommand{\fedcm}{{\sc FedCM}\xspace}
\newcommand{\fedavgm}{{\sc FedAvg-M}\xspace}
\newcommand{\fedavgmvr}{{\sc FedAvg-M-VR}\xspace}
\newcommand{\scaffoldm}{{\sc SCAFFOLD-M}\xspace}
\newcommand{\scaffoldmvr}{{\sc SCAFFOLD-M-VR}\xspace}


\usepackage{float}
\restylefloat{algorithm} 
\setlength{\textfloatsep}{8pt} 

\title{Momentum Benefits Non-IID Federated Learning 
Simply and Provably}


\author{Ziheng Cheng\thanks{Equal Contribution. This work is supported in part by National Natural Science Foundation of China (NSFC) Grant 12301392, 92370121, and 12288101} \\
Peking University\\
\texttt{alex-czh@stu.pku.edu.cn} 
\And
Xinmeng Huang$^*$ \\
University of Pennsylvania \\
\texttt{xinmengh@sas.upenn.edu}
\AND
Pengfei Wu \\
Peking University\\
\texttt{pengfeiwu1999@stu.pku.edu.cn}
\And 
\hspace{17mm} Kun Yuan\thanks{Corresponding Author. Kun Yuan is also affiliated with National Engineering Labratory for Big Data Analytics and Applications, and AI for Science Institute, Beijing, China.} \\
\hspace{18mm}Peking University\\
\hspace{18mm}\texttt{kunyuan@pku.edu.cn}
}

%

\iclrfinalcopy 
\begin{document}

\vspace{-20mm}
\maketitle
\vspace{-7mm}
\begin{abstract}
\vspace{-3mm}
Federated learning is a powerful paradigm for large-scale machine learning, but it faces significant challenges due to unreliable network connections, slow communication, and substantial data heterogeneity across clients. \fedavg and \scaffold are two prominent algorithms to address these challenges. In particular,  \fedavg employs multiple local updates before communicating with a 
central server, while \scaffold maintains a control variable on each client to compensate for ``client drift'' in its local updates. Various methods have been proposed to enhance the convergence of these two algorithms, but they either make impractical adjustments to the algorithmic structure or rely on the assumption of bounded data heterogeneity. This paper explores the utilization of momentum to enhance the performance of \fedavg and \scaffold. When all clients participate in the training process, we demonstrate that incorporating momentum allows \fedavg to converge without relying on the assumption of bounded data heterogeneity even using a constant local learning rate. This is novel and fairly surprising as existing analyses for \fedavg require
bounded data heterogeneity even with diminishing local learning rates. In partial client participation, we show that momentum enables \scaffold to converge provably faster without imposing any additional assumptions. Furthermore, we use momentum to develop new variance-reduced extensions of \fedavg and \scaffold, which exhibit state-of-the-art convergence rates. Our experimental results support all theoretical findings.
\end{abstract}

\vspace{-4mm}
\section{Introduction}
\vspace{-2mm}
Federated learning (FL) is a powerful paradigm for large-scale machine learning \citep{konevcny2016federated,mcmahan2017communication}. In situations where data and computational resources are dispersed among a diverse range of clients, including phones, tablets, sensors, hospitals, and other devices and agents, federated learning facilitates local data processing and collaboration among these clients \citep{kairouz2021advances}. Consequently, a centralized model can be trained without transmitting decentralized data from clients directly to servers, thereby ensuring a fundamental level of privacy. 

Federated learning encounters several significant challenges in algorithmic development. Firstly, the reliability and relatively slow nature of network connections between the server and clients pose obstacles to efficient communication during the training process. Secondly, the dynamic availability of only a small subset of clients for training at any given time demands strategies that can adapt to this variable environment. Lastly, the presence of substantial heterogeneity of non-iid data across different clients further complicates the training process.

\fedavg \citep{konevcny2016federated,mcmahan2017communication,stich2019local,yu2019linear,lin2020don,wang2021cooperative} emerges as a prevalent algorithm for FL, leveraging multiple stochastic gradient descent (SGD) steps within each client before communicating with a central server. While \fedavg is readily implementable and succeeds in certain applications, its performance is notably hindered by the presence of data heterogeneity, \ie, non-iid clients, even when {\em all clients} participate in the training process \citep{li2019convergence,yang2021achieving}. To mitigate the influence of data heterogeneity, \scaffold\ \citep{karimireddy2020scaffold} maintains a control variable on each client to compensate for ``client drift'' in its local SGD updates, making convergence more robust to data heterogeneity and client sampling. Due to their practicality and effectiveness, \fedavg\ and \scaffold\ have become foundational algorithms in federated learning, leading to the development of numerous variants that cater to decentralized \citep{koloskova2020unified,rizk2022privatized,nguyen2022performance,alghunaim2023local}, compressed \citep{haddadpour2021federated,reisizadeh2020fedpaq,mitra2021linear}, asynchronous \citep{chen2020vafl,chen2020asynchronous,xu2021asynchronous}, and personalized \citep{fallah2020personalized,pillutla2022federated,tan2022towards,t2020personalized} federated learning scenarios.

Various methods have been proposed to enhance the convergence of \fedavg, \scaffold, and their variance-reduced\footnote{Throughout the paper, variance reduction refers to techniques aiming to mitigate the influence of within-client gradient stochasticity, as opposed to the inter-client data heterogeneity.} extensions.
While exhibiting superior convergence rates, these approaches typically make impractical adjustments to algorithmic structures. For instance, {\sc STEM} \citep{khanduri2021stem} requires increasing either the batch size or the number of local steps with algorithmic iterations. Similarly, {\sc CE-LSGD} \citep{patel2022towards} and {\sc MIME} \citep{karimireddy2020mime} mandate computing a large-batch or even full-batch local gradient per round for each client. Additionally, \fedprox\ \citep{li2020federated}, \fedpd\ \citep{zhang2021fedpd}, and {\sc FedDyn} \citep{durmus2021federated} rely on solving ``local problems'' to an extremely high precision. These adjustments may not align with the practical constraints in federated learning setups. 

Furthermore, many of these algorithms, including \fedavg, {\sc STEM}, \fedprox, {\sc MIME}, and {\sc CE-SGD}, still rely on the assumption of bounded data heterogeneity. When this assumption is violated, their theoretical analyses become invalid. While some algorithms, such as {\sc LED} \citep{alghunaim2023local} and {\sc VRL-SGD} \citep{liang2019variance}, can handle unbounded data heterogeneity, their convergence rates are not state-of-the-art, as stated in Table \ref{tab:full_results}. These limitations motivate us to develop novel strategies that are easy to implement, robust to data heterogeneity, and exhibit superior convergence.

\vspace{-2mm}
\subsection{Main results and contributions}
\vspace{-1mm}
This paper examines the utilization of {\em momentum} to enhance the performance of \fedavg\ and \scaffold.
To ensure simplicity and practicality in implementations, we only introduce momentum to the local SGD steps, avoiding any inclusion of impractical elements, such as gradient computation of multiple minibatches or solving local problems to high precision. Remarkably, this straightforward approach effectively alleviates the necessity for stringent assumptions regarding bounded data heterogeneity, leading to noteworthy improvements in convergence rates. The main findings and contributions of this paper are summarized below.

First, when {\em all} clients participate in the training process: 
\begin{itemize}[leftmargin=0.3in]
\vspace{-2mm}
\item We show that incorporating momentum allows \fedavg\ and its variance-reduced extension to {\em converge under unbounded data heterogeneity}, even using constant local learning rates. 
This is rather surprising as, to our knowledge, all existing analyses for \fedavg, \eg, \cite{karimireddy2020scaffold,yang2021achieving,wang2020slowmo} require bounded data heterogeneity even with diminishing local learning rates.

\item We further establish that, by effectively removing the influence of data heterogeneity on convergence,  momentum empowers \fedavg and its variance-reduced extension with state-of-the-art convergence rates in the context of full client participation. 
\vspace{-2mm}
\end{itemize}

Second, when {\em partial} clients  participate in the training process per iteration: 

\begin{itemize}[leftmargin=0.3in]
\vspace{-2mm}
\item The proposed \scaffoldm that incorporates momentum into \scaffold achieves provably faster convergence. To our knowledge, this is the \emph{first} result that improves \scaffold without imposing additional assumptions beyond those in \cite{karimireddy2020scaffold}. 

\item We further introduce momentum to \scaffold with variance reduction, obtaining the \emph{first} variance-reduced FL algorithm that converges without bounded data heterogeneity. This method attains the state-of-the-art convergence rate in the context of partial client participation and unbounded data heterogeneity. 
\vspace{-2mm}
\end{itemize}

\begin{table}[t]
    \vspace{-15mm}
    \caption{\small 
The comparison of convergence rates of FL algorithms when {\bf all clients} participate in training. Notation $L$ is the smoothness constant of objective functions, $\Delta=f(x^0)-\min_x f(x)$ is the initialization gap, $\sigma^2$ is the variance of gradient noises, $N$ is the number of clients, $K$ is the number of  local steps per round, and $R$ is the number of communication rounds,  $\zeta^2$ and $G$ are uniform bounds of data heterogeneity $(1/N)\sum_{1\leq i\leq N}\|\nabla f_i(x)-\nabla f(x)\|^2$ and gradient norm $G:=\sup_x\max_{1\leq i\leq N}\|\nabla f_i(x)\|$ with $G^2 \gg \zeta^2$ typically. The ``Assumptions'' column lists all assumptions beyond  Assumption \ref{asp:smooth} and \ref{asp:sgd_var}.
}
    \label{tab:full_results}
    \centering{
    \scalebox{0.95}{
    \begin{threeparttable}
    \begin{tabular}{lll}
    \toprule
    {\bf Algorithm}  & {\bf Convergence Rate  $\expect[ \|\nabla f(\hat{x})\|^2]\lesssim$} & {\bf Assumptions} \vspace{1mm} \\ 
    \midrule \vspace{-3mm}\\
    {\sc FedAvg}\\ 
    \quad \citep{yu2019parallel}  & $\left(\frac{L\Delta\sigma^2}{NKR}\right)^{1/2} +\left(\frac{L\Delta G}{R}\right)^{2/3}+ \frac{L\Delta}{R}$ & Bounded grad.\\
        \quad \citep{koloskova2020unified} & $\left(\frac{L\Delta\sigma^2}{NR}\right)^{1/2} + \left(\frac{L\Delta K \zeta}{R}\right)^{2/3}+ \frac{L\Delta K}{R}$ & Bounded hetero. \\
      \quad \citep{karimireddy2020scaffold} & $\left(\frac{L\Delta\sigma^2}{NKR}\right)^{1/2} + \left(\frac{L\Delta\zeta}{R}\right)^{2/3}+ \frac{L\Delta}{R}$ & Bounded hetero.\\
    \quad \citep{yang2021achieving} & $\left(\frac{L\Delta\sigma^2}{NKR}\right)^{1/2} + \frac{L\Delta}{R}$ & Bounded hetero.\tnote{1} \vspace{1mm} \\


        {\makecell[l]{{\sc FedCM\tnote{2}} \\ \citep{xu2021fedcm}}} & ${\left(\frac{L\Delta ({\sigma^2}+NK{G}^2)}{NKR}\right)^{1/2}+\left(\frac{L\Delta ({\sigma}/{\sqrt{K}}+G)}{R}\right)^{2/3}}$ & {\makecell[l]{Bounded grad. \\ Bounded hetero.}} \\

    \makecell[l]{{\sc LED} \\ \citep{alghunaim2023local}} & $\left(\frac{L\Delta \sigma^2}{NKR}\right)^{1/2}+\left(\frac{L\Delta \sigma}{\sqrt{K}R}\right)^{2/3}+\frac{L\Delta}{R}$ & $-$ \vspace{1mm} \\
    
    {\makecell[l]{{\sc VRL-SGD\tnote{2}} \\ \citep{liang2019variance}}} & ${\left(\frac{L\Delta \sigma^2}{NKR}\right)^{1/2}+\left(\frac{L\Delta \sigma}{\sqrt{K}R}\right)^{2/3}+\frac{L\Delta}{R}}$ & ${-}$ \vspace{1mm} \\

    \rowcolor{Ocean}
    {\sc FedAvg-M} (Thm.~\ref{thm:Fedavg-m}) 
    & $\left(\frac{L\Delta \sigma^2}{NKR}\right)^{1/2}+\frac{L\Delta}{R}$ &  $-$ \vspace{0mm}  \\
    \midrule \vspace{-3mm}\\
    
    {\sc Variance-Reduction}  \vspace{1mm}\\ 

    \quad \makecell[l]{{\sc BVR-L-SGD} \\ \citep{murata2021bias}} & $\left(\frac{L\Delta \sigma}{NKR}\right)^{2/3}+\frac{\sigma^2}{NKR}+\frac{L\Delta}{R}$ & \makecell[l]{Sample smooth\\ $\mathcal{O}(K)$ minibatches\tnote{3}} \vspace{1mm} \\
    
    \quad \makecell[l]{{\sc CE-LSGD} \\ \citep{patel2022towards}} & $\left(\frac{L\Delta \sigma}{NKR}\right)^{2/3}+\frac{\sigma^2}{NKR}+\frac{L\Delta}{R}$ & \makecell[l]{Sample smooth\\ $\mathcal{O}(K)$ minibatches\tnote{3}} \vspace{1mm} \\
    
    \quad \makecell[l]{{\sc STEM} \\ \citep{khanduri2021stem}}  & $\frac{L\Delta+\sigma^2+\zeta^2}{(NKR)^{2/3}}+\frac{L\Delta}{R}$ & \makecell[l]{Sample smooth \\ Bounded hetero.} \vspace{1mm} \\

    \rowcolor{BPink}
    {\sc FedAvg-M-VR} (Thm.~\ref{thm:Fedavg-m-vr}) & $\left(\frac{L\Delta \sigma}{NKR}\right)^{2/3}+\frac{L\Delta}{R}$& Sample smooth\tnote{4} \vspace{0.25mm} \\

    \rowcolor{BPink}
    {{\sc FedAvg-M-VR} (Thm.~\ref{app:thm:fedavg_mvr})} & ${\left(\frac{L\Delta \sigma}{NKR}\right)^{2/3}+ \frac{\sigma^2}{NKR}+\frac{L\Delta}{R}}$& {Sample smooth\tnote{4}} \vspace{-0mm} \\
    
    \bottomrule
    \end{tabular}
    \begin{tablenotes}
        \footnotesize
        \item[1] The local learning rate vanishes to zero when data heterogeneity is unbounded, \ie, $\zeta\to \infty$.
        {\item[2] The works have not been published in peer-reviewed venues.}
        \item[3] A large number of minibatches are utilized on each client per communication round.
        {\item[4] The difference roots in the amount of batches used for initialization. See the values of $B$ in Thm.~\ref{app:thm:fedavg_mvr}.}
    \end{tablenotes}
    \end{threeparttable}
    }
    \vspace{-2mm}
    }
\end{table}

Tables \ref{tab:full_results} and \ref{tab:partial_results} present a comprehensive comparison of the convergence rates and associated assumptions of prior algorithms and our proposed methods. 
It is observed that by simply adding momentum to local steps, \fedavg, \scaffold, and their variance-reduced variants all attain state-of-the-art convergence rates without resorting to further assumptions such as bounded data heterogeneity. We support our theoretical findings with extensive numerical experiments. 

\vspace{-1mm}
\subsection{Related work} 
\vspace{-2mm}
\paragraph{FL with homogeneous clients.} 
\fedavg\ is a well-known algorithm introduced by \cite{McMahan2017CommunicationEfficientLO} as a heuristic to enhance communication efficiency and data privacy in federated learning. Numerous subsequent studies have focused on analyzing its convergence under the assumption of homogeneous datasets, where clients are independent and identically distributed (iid) and all clients participate fully \citep{stich2019local,yu2019parallel,wang2021cooperative,lin2020don,zhou2017convergence}. However, when dealing with heterogeneous clients and partial client participation, \fedavg\ is found to be vulnerable to data heterogeneity because of the "client drift" effect \citep{karimireddy2020scaffold,yang2021achieving,wang2020slowmo,li2019convergence}. 

\vspace{-3mm}
\paragraph{FL with heterogeneous clients.} 
Numerous research efforts are devoted to mitigating the impact of data heterogeneity in FL. For example, \cite{li2020federated} propose \fedprox, which introduces a proximal term to the objective function. \cite{yang2021achieving} utilize a two-sided learning rate approach, while \cite{wang2020tackling} propose \fednova, a normalized averaging method. Additionally, \cite{zhang2021fedpd} presents \fedpd, which addresses data heterogeneity from a primal-dual optimization perspective. Notably, \cite{karimireddy2020scaffold} introduces \scaffold, an effective algorithm that employs control variables to mitigate the influence of data heterogeneity and partial client participation. {\sc FedGate} \citep{haddadpour2021federated} and {\sc LED} \citep{alghunaim2023local} are two recent effective algorithms that have alleviated the impact of data heterogeneity, utilizing gradient tracking \citep{xu2015augmented,di2016next,pu2020distributed,xin2020improved,alghunaim2021unified,huang2024stochastic} and exact-diffusion \citep{Yuan2019ExactDF,yuan2020influence,yuan2023removing} techniques, respectively.

\vspace{-3mm}
\paragraph{FL with momentum.} 
The momentum mechanism dates back to Nesterov's acceleration \citep{Nesterov2004Intro}  and  Polyak’s heavy-ball method \citep{polyak1964some}, which later flourishes in the stochastic optimization \citep{Yan2018AUA,yu2019linear,liu2020improved} and other areas \citep{Yuan2021DecentLaMDM,he2023unbiased,he2023lower,chen2023optimal,huang2024stochastic}. Extensive research has explored incorporating momentum into FL \citep{reddi2021adaptive,wang2020slowmo,karimireddy2020mime,khanduri2021stem,patel2022towards,das2022faster,yu2019linear, xu2021fedcm}, and have demonstrated its impact on enhancing the empirical performance of FL methods \citep{wang2020slowmo,xu2021fedcm,reddi2021adaptive,jin2022accelerated,kim2022communication}. However, whether momentum can offer {\em theoretical benefits} to FL, especially in mitigating the impact of data heterogeneity, remains unclear. This work demonstrates that momentum can benefit non-iid federated learning simply and provably. Notably, the utility of momentum is demonstrated in  domains other than FL. For instance, \cite{guo2021novel} proves that momentum can correct the bias experienced by the {\sc Adam} method, while recently \cite{fatkhullin2023momentum} shows that momentum can improve the error feedback technique in communication compression. The analysis presented in this work distinguishes from prior works including \cite{guo2021novel,fatkhullin2023momentum} due to the unique challenges encountered in FL including multiple local updates, data heterogeneity, and partial client participation. 

\begin{table}[t]
\vspace{-15mm}
    \caption{\small The comparison of convergence rates of FL algorithms when {\bf $\boldsymbol{S}$ out of $\boldsymbol{N}$ clients} participate in training per iteration. Notations are the same as those in Table \ref{tab:full_results}.}
    \label{tab:partial_results}
    \centering{\small
    \scalebox{0.95}{
    \begin{threeparttable}
    \begin{tabular}{lll}
    \toprule
    {\bf Algorithm}  & {\bf Convergence Rate  $\expect[ \|\nabla f(\hat{x})\|^2]\lesssim$} & {\bf Assumptions} \vspace{1mm} \\ 
    \midrule 
    \makecell[l]{\scaffold \\ \citep{karimireddy2020scaffold}} & $\left(\frac{L\Delta \sigma^2}{SKR}\right)^{1/2}+\frac{L\Delta}{R}\left(\frac{N}{S}\right)^{2/3}$ & $-$ \vspace{2mm}\\

    \rowcolor{Ocean}
    {\sc SCAFFOLD-M} (Thm. \ref{thm:scaffold-m}) & $\left(\frac{L\Delta \sigma^2}{SKR}\right)^{1/2}+\frac{L\Delta}{R}\left(1+\frac{N^{2/3}}{S}\right)$ & $-$ \vspace{0mm} \\

    \midrule \vspace{-3mm}\\
    
    {\sc Variance-Reduction} \vspace{1mm} \\ 

    \quad \makecell[l]{{\sc MimeLiteMVR}\tnote{1} \\ \citep{karimireddy2020mime}} & $\left(\frac{L\Delta (\sigma+\zeta)}{R}\right)^{2/3}+\frac{L\Delta+\sigma^2+\zeta^2}{R}$ & \makecell[l]{Sample smooth\\ Noiseless grad.} \vspace{0mm} \\
     
    \quad \makecell[l]{MB-STORM\\ \citep{patel2022towards}} & $\left(\frac{L\Delta \sigma}{S\sqrt{K}R}\right)^{2/3}+\left(\frac{L\Delta \zeta}{SR}\right)^{2/3}+\frac{\zeta^2}{SR}+\frac{L\Delta}{R}+\frac{\sigma^2}{SKR}$ & \makecell[l]{Sample smooth \\ Bounded hetero.\\ $\mathcal{O}(K)$ minibatches\tnote{2} } \vspace{0mm} \\

    \quad \makecell[l]{CE-LSGD \tnote{1} \\ \citep{patel2022towards}} & $\left(\frac{L\Delta \sigma}{S\sqrt{K}R}\right)^{2/3}+\left(\frac{L\Delta \zeta}{SR}\right)^{2/3}+\frac{\zeta^2}{SR}+\frac{L\Delta}{R}+\frac{\sigma^2}{SKR}$ & \makecell[l]{Sample smooth \\ Bounded hetero.\\ $\mathcal{O}(K)$ minibatches\tnote{2} } \vspace{0mm} \\
    
    \rowcolor{BPink}
    {\sc SCAFFOLD-M-VR} (Thm.~\ref{thm:scaffold-m-vr}) & $\left(\frac{L\Delta \sigma}{S\sqrt{K}R}\right)^{2/3}+\frac{L\Delta }{R}\left(1+\frac{N^{1/2}}{S}\right)$ & Sample smooth\tnote{3} \vspace{0.25mm} \\

    \rowcolor{BPink}
    {{\sc SCAFFOLD-M-VR} (Thm.~\ref{app:thm:scaffold_mvr})} & ${\left(\frac{L\Delta \sigma}{S\sqrt{K}R}\right)^{2/3}+\frac{L\Delta }{R}\left(1+\frac{N^{1/2}}{S}\right) + \frac{\sigma^2}{SKR}}$ &{Sample smooth\tnote{3}} \vspace{0mm} \\
    
    \bottomrule
    \end{tabular}
    \begin{tablenotes}
        \footnotesize
        \item[1] {\sc MimeLiteMVR} and {\sc CE-LSGD} consider the setting of streaming clients. 
        \item[2] A large number of minibatches are utilized on each client per communication round.
        {\item[3] The difference roots in the amount of batches used for initialization. See the values of $B$ in Thm.~\ref{app:thm:scaffold_mvr}.}
    \end{tablenotes}
    \end{threeparttable}
    }
    }
    \vspace{-1mm}
\end{table}

\vspace{-4mm}
\section{Problem setup}
\vspace{-3mm}
This section formulates the problem of non-iid federated learning. 
Formally, we consider minimizing the following objective with the fewest number of client-server communication rounds:
\setlength{\abovedisplayskip}{1pt}\setlength{\belowdisplayskip}{1pt}
\begin{equation*}
    \min_{x\in\RR^d}\quad f(x):= \frac{1}{N}\sum_{i=1}^N f_i(x)\quad \text{where}\quad  f_i(x):=\expect_{\xi_i\sim \mathcal{D}_i}[F(x;\xi_i)].
\end{equation*}
Here, the random variable $\xi_i$ represents a local data point available at client $i$, while the function $f_i(x)$ denotes the non-convex local loss function associated with client $i$. This function takes expectation concerning the local data distribution $\mathcal{D}_i$. In practice, the local data distributions $\mathcal{D}_i$ among different clients typically differ from each other, resulting in the inequality $f_i(x) \neq f_j(x)$ for any pair of nodes $i$ and $j$. This phenomenon is commonly referred to as {\em data heterogeneity}. If all local clients were homogeneous, meaning that all local data samples follow the same distribution $\mathcal{D}$, we would have $f_i(x) = f_j(x)$ for any $i$ and $j$. In addition, throughout the paper, we assume that the function $f$ is bounded from below and possesses a global minimum $f^*$. To facilitate convergence analysis, we also introduce the following standard assumptions.\setlength{\abovedisplayskip}{2pt}\setlength{\belowdisplayskip}{2pt}

\begin{asp}[\sc Standard smoothness]\label{asp:smooth}
Each local objective $f_{i}$ is $L$-smooth, \ie, $\left\|\nabla f_{i}(x)-\nabla f_{i}(y)\right\| \leq L\|x-y\|$, for any $x,y\in\RR^d$ and $1\leq i\leq N$.
\end{asp}
\begin{asp}[\sc Sample-wise smoothness]\label{asp:sample_smooth}
Each sample-wise objective $F(x;\xi)$ is $L$-smooth, \ie, $\|\nabla F(x;\xi_i)-\nabla F(y;\xi_i)\|\leq L\|x-y\|$ for any $x,y\in\RR^d$, $1\leq i\leq N$, and $\xi_i\overset{iid}{\sim} \mathcal{D}_i$.
\end{asp}

\vspace{-2mm}
It is worth noting that Assumption \ref{asp:sample_smooth} implies Assumption \ref{asp:smooth}, which is typically used in variance-reduced algorithms, \eg, \cite{karimireddy2020mime,khanduri2021stem,Fang2018SPIDERNN,cutkosky2019momentum}. We will utilize either Assumption \ref{asp:smooth} or \ref{asp:sample_smooth} in different algorithms. It is worth highlighting that, these are the {\em only} assumptions required for all our theoretical analyses. 

\begin{asp}[\sc Stochastic Gradient]\label{asp:sgd_var} 
	There exists $\sigma\geq 0$ such that for any $x\in\RR^d$ and $1\leq i\leq N$, $\EE_{\xi_i}[\nabla F(x;\xi_i)]=\nabla f_i(x)$ and $\EE_{\xi_i}[\|\nabla F(x;\xi_i)-\nabla f_i(x)\|^2]\leq \sigma^2$
 where $\xi_i\overset{iid}{\sim} \mathcal{D}_i$.
\end{asp}

\vspace{-4mm}
\section{Accelerating FedAvg with momentum}
\vspace{-2mm}
This section focuses on full client participation. We will introduce momentum to both \fedavg and its variance-reduced extension. Furthermore, we will justify that the incorporation of momentum effectively mitigates the impact of data heterogeneity, leading to improved convergence rates.

\vspace{-2mm}
\subsection{\fedavg with momentum} \label{sec:fedavgm}
\vspace{-1mm}
\paragraph{Algorithm.} We introduce momentum to enhance the estimation of the stochastic gradient, resulting in the algorithm \fedavgm, as presented in Algorithm \ref{alg:fedavg_mom}. In \fedavgm, the subscript $i$ represents the client index, while the superscripts $r$ and $k$ denote the outer loop index and inner local update index, respectively. The structure of \fedavgm remains identical to the vanilla \fedavg, except for the inclusion of momentum in  gradient computation (see highlight in Algorithm \ref{alg:fedavg_mom}):
\begin{align}\label{eq:fedavg_m}
    g_i^{r,k}=\beta \nabla F(x^{r,k}_i;\xi^{r,k}_i) + (1-\beta )g^r,
\end{align}
where $\beta \in [0, 1]$ is the momentum coefficient, and $g^r$ represents a global gradient estimate updated in the outer loop $r$. It is important to note that \fedavgm will reduce to the vanilla \fedavg when $\beta = 1$. Furthermore, \fedavgm is easy to implement, as it maintains the same algorithmic structure and incurs no additional uplink communication overhead compared to \fedavg. 
{Notably, no extra downlink commmunication cost is needed if clients store the last iterate model $x^r$ so that momentum $g^{r+1}$ can recovered through $(x^{r+1}-x^r)/\gamma$.}

\begin{algorithm}[!t]
    \caption{\fedavgm: \fedavg with momentum}
    \label{alg:fedavg_mom}
    \begin{algorithmic}
        \REQUIRE{initial model $x^0$ and gradient estimate $g^0$, local learning rate $\eta$, global learning rate $\gamma$, momentum $\beta$}
        \FOR{$r=0, \cdots, R-1$}
            \FOR{each client $i\in \{1,\dots,N\}$ in parallel}
                \STATE{Initialize local model $x^{r,0}_i=x^r$}
                \FOR{$k=0, \cdots, K-1$} 
                    \STATE{
                            \colorbox{Ocean}{Compute $g^{r,k}_i= \beta \nabla F(x^{r,k}_i;\xi^{r,k}_i) + (1-\beta )g^r$} \hfill $\triangleright\,\mbox{\footnotesize{ $\beta=1$ implies \fedavg}}$  \\
                            \;Update local model $x^{r,k+1}_i=x^{r,k}_i-\eta g^{r,k}_i$
                    }
                \ENDFOR
            \ENDFOR
            \STATE{
                 Aggregate local updates $g^{r+1}= \frac{1}{\eta N K}\sum_{i=1}^N\left(x^r - x^{r,K}_i\right)$\\
                Update global model $x^{r+1}= x^r-\gamma g^{r+1}$
            }
        \ENDFOR
    \end{algorithmic}
\end{algorithm}

\vspace{-2mm}
\paragraph{Convergence property.} 
The inclusion of momentum in \fedavg yields notable theoretical improvements. Firstly, it eliminates the need for the data heterogeneity assumption, also known as the gradient similarity assumption. The assumption can be expressed as
\setlength{\abovedisplayskip}{1pt}\setlength{\belowdisplayskip}{1pt}
\begin{equation}
    \frac{1}{N}\sum_{i=1}^N \|\nabla f_i(x) - \nabla f(x)\|^2 \le \zeta^2,\quad \forall\,x \in \RR^d\tag*{(Bounded data heterogeneity)}
\end{equation}
where $\zeta^2$ measures the magnitude of data heterogeneity. By incorporating momentum, the above assumption is {\em no longer required} for the convergence analysis of \fedavg. Secondly, momentum enables \fedavg to converge at a state-of-the-art rate. These improvements are justified as follows: 
\setlength{\abovedisplayskip}{2pt}\setlength{\belowdisplayskip}{2pt}

\vspace{-1mm}
\begin{thm}\label{thm:Fedavg-m}
    Under Assumption \ref{asp:smooth} and \ref{asp:sgd_var}, if we set $g^0=0$, $\beta$, $\gamma$, and $\eta$ as in \eqref{eqn:fedavg-m-para},
    \fedavgm  enjoys
    \setlength{\abovedisplayskip}{1pt}\setlength{\belowdisplayskip}{1pt}
    \begin{equation*}
        \frac{1}{R}\sum_{r=0}^{R-1}\expect[\|\nabla f(x^r)\|^2]
        \lesssim \sqrt{\frac{L\Delta  \sigma^2}{NKR}}+\frac{L\Delta }{R},
    \end{equation*}
    \setlength{\abovedisplayskip}{2pt}\setlength{\belowdisplayskip}{2pt}
    where  $\Delta \triangleq f(x^0)-\min_x f(x)$ and  $\lesssim$ absorbs numeric numbers. See proof in Appendix \ref{app:fedavgm}.
\end{thm}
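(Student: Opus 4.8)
The plan is to first recognize that \fedavgm, viewed at the level of the server iterate, is nothing but stochastic momentum (heavy-ball) SGD applied to $f$ with a carefully averaged gradient oracle. Writing out the aggregation step and substituting the local rule $x_i^{r,k+1}=x_i^{r,k}-\eta g_i^{r,k}$ gives $g^{r+1}=\frac{1}{NK}\sum_{i,k}g_i^{r,k}$; plugging in the momentum definition $g_i^{r,k}=\beta\nabla F(x_i^{r,k};\xi_i^{r,k})+(1-\beta)g^r$ then collapses this to the clean recursion $g^{r+1}=\beta\bar G^r+(1-\beta)g^r$, where $\bar G^r:=\frac{1}{NK}\sum_{i,k}\nabla F(x_i^{r,k};\xi_i^{r,k})$ averages all $NK$ stochastic gradients of round $r$. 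Hence $x^{r+1}=x^r-\gamma g^{r+1}$ is momentum SGD with oracle $\bar G^r$, and the whole proof reduces to controlling how well the momentum-smoothed estimate $g^{r+1}$ tracks $\nabla f(x^r)$.

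Next I would run the $L$-smoothness descent lemma along $x^{r+1}=x^r-\gamma g^{r+1}$ but \emph{retain} the curvature term: for $\gamma\le 1/(2L)$ one gets $f(x^{r+1})\le f(x^r)-\tfrac{\gamma}{2}\|\nabla f(x^r)\|^2-\tfrac{\gamma}{4}\|g^{r+1}\|^2+\tfrac{\gamma}{2}\|g^{r+1}-\nabla f(x^r)\|^2$, where $-\tfrac{\gamma}{4}\|g^{r+1}\|^2$ is kept to absorb spurious $\|g^r\|^2$ terms later. Everything then hinges on the estimation error, for which I set up a recursion on $\phi^r:=g^r-\nabla f(x^{r-1})$. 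Using the momentum recursion, decompose $g^{r+1}-\nabla f(x^r)=(1-\beta)(g^r-\nabla f(x^r))+\beta(\bar G^r-\nabla f(x^r))$, split $\bar G^r-\nabla f(x^r)$ into the conditionally mean-zero noise $n^r:=\bar G^r-\frac1{NK}\sum_{i,k}\nabla f_i(x_i^{r,k})$ and the client-drift bias $b^r:=\frac1{NK}\sum_{i,k}(\nabla f_i(x_i^{r,k})-\nabla f_i(x^r))$, and rewrite $g^r-\nabla f(x^r)=\phi^r+(\nabla f(x^{r-1})-\nabla f(x^r))$ with $\|\nabla f(x^{r-1})-\nabla f(x^r)\|\le L\gamma\|g^r\|$. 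Conditioning on the start of round $r$ cancels the cross terms with $n^r$, and a martingale argument over the $NK$ samples (independent across clients, martingale differences within a client) yields the crucial variance reduction $\expect\|n^r\|^2\le\sigma^2/(NK)$. Young's inequality then gives a contraction $\expect\|\phi^{r+1}\|^2\le(1-\tfrac{\beta}{2})\expect\|\phi^r\|^2+O(L^2\gamma^2/\beta)\expect\|g^r\|^2+O(\beta)\expect\|b^r\|^2+\beta^2\sigma^2/(NK)$.

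The step I expect to be the main obstacle is bounding the drift bias $b^r$ \emph{without} invoking bounded heterogeneity. By smoothness $\|b^r\|^2\le\frac{L^2}{NK}\sum_{i,k}\|x_i^{r,k}-x^r\|^2$, and unrolling the inner loop gives $x_i^{r,k}-x^r=-\eta(1-\beta)k\,g^r-\eta\beta\sum_{j<k}\nabla F(x_i^{r,j};\xi_i^{r,j})$. The first, \emph{shared}, piece contributes only a global term $\propto\eta^2K^2\|g^r\|^2$ carrying no heterogeneity; the second, \emph{per-client}, piece carries the local full gradient $\nabla f_i(x^r)$ — precisely the quantity whose average $\frac1N\sum_i\|\nabla f_i(x^r)\|^2$ is the usual source of the $\zeta^2$ assumption — but it enters only at order $\eta^2\beta^2$. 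The key point is that momentum forces the local updates to be dominated by the common direction $g^r$, so the heterogeneous component of the drift is suppressed by the extra factor $\beta^2$ together with the local rate $\eta$; since $\eta$ cancels out of the server update $g^{r+1}$ and never appears in the final rate, it can be taken small (yet constant across rounds) so that this term is dominated, with no uniform bound on $\frac1N\sum_i\|\nabla f_i(x)-\nabla f(x)\|^2$ ever required. I would formalize this with a short self-bounding step that also absorbs the recursive $L^2\|x_i^{r,j}-x^r\|^2$ terms hidden inside $\nabla F$, valid once $\eta KL\lesssim 1$, leaving $\expect\|b^r\|^2\lesssim L^2\eta^2K^2\expect\|g^r\|^2+(\text{negligible drift})$.

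Finally I would combine the two estimates through the Lyapunov function $\Phi^r=\expect f(x^r)+\tfrac{\gamma}{\beta}\expect\|\phi^r\|^2$. The weight $\tfrac{\gamma}{\beta}$ makes the $(1-\tfrac\beta2)$ contraction cancel the $+\tfrac{\gamma}{2}\|\phi^{r+1}\|^2$ from the descent lemma; the constraint $\gamma\lesssim\beta/L$ lets the residual $O(L^2\gamma^3/\beta^2)\|g^r\|^2$ be swallowed by $-\tfrac{\gamma}{4}\|g^{r+1}\|^2$ (using $g^0=0$ to close the telescoping boundary), and the drift terms are controlled as above. Telescoping and dividing by $\gamma R/2$ gives $\frac1R\sum_r\expect\|\nabla f(x^r)\|^2\lesssim\frac{L\Delta}{\beta R}+\frac{\beta\sigma^2}{NK}$, where the first term also collects the round-$0$ contribution $\tfrac{\gamma}{\beta}\|\nabla f(x^0)\|^2\le\tfrac{\gamma}{\beta}\cdot 2L\Delta$. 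Choosing $\beta=\min\{1,\sqrt{L\Delta NK/(R\sigma^2)}\}$ and $\gamma\asymp\beta/L$ balances the two terms and produces $\sqrt{L\Delta\sigma^2/(NKR)}+L\Delta/R$, which is exactly the claimed rate.
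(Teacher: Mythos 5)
Your overall architecture matches the paper's: a descent lemma driven by the gradient-estimation error $\expect[\|g^{r+1}-\nabla f(x^r)\|^2]$, a contraction-type recursion for that error obtained by splitting $\bar G^r-\nabla f(x^r)$ into martingale noise (giving the $\beta^2\sigma^2/(NK)$ term) and a client-drift bias, a drift bound that separates the shared direction $g^r$ from the per-client component, and a final balancing of $\beta\asymp\sqrt{NKL\Delta/(R\sigma^2)}$, $\gamma\asymp\beta/L$. All of that is sound and is essentially the route taken in Lemmas \ref{lem:onestep}, \ref{lem:Fedavg_grad_err} and \ref{lem:Fedavg_client_drift}.

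The genuine gap is in your third paragraph, where the entire point of the theorem is hidden inside ``(negligible drift).'' You correctly isolate the heterogeneous part of the drift as a term of order $\eta^2\beta^2\cdot\frac{1}{N}\sum_i\|\nabla f_i(x^r)\|^2$, but you then argue it can be ``dominated'' by taking $\eta$ small and constant. Dominated by what? The quantity $\frac{1}{N}\sum_i\expect[\|\nabla f_i(x^r)\|^2]$ is evaluated at the random iterate $x^r$ and is \emph{not} a priori bounded over the trajectory --- a uniform bound on it is exactly the bounded-gradient/bounded-heterogeneity assumption the theorem is designed to avoid, so no fixed choice of $\eta$ can be certified to suppress it without further argument. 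The paper's resolution is Lemma \ref{lem:Fedavg_grad_norm}: using $\expect[\|\nabla f_i(x^r)\|^2]\le(1+q)\expect[\|\nabla f_i(x^{r-1})\|^2]+(1+q^{-1})L^2\expect[\|x^r-x^{r-1}\|^2]$ with $q=1/r$, one unrolls back to the initialization, paying a factor $e$ on $G_0=\frac{1}{N}\sum_i\|\nabla f_i(x^0)\|^2$ plus an accumulated-movement sum $\sum_{j<r}(\mathcal{E}_j+\expect[\|\nabla f(x^j)\|^2])$ that must be reabsorbed into the master recursion. This is precisely what forces the conditions $\eta KL\lesssim 1/(\beta\gamma LR)$ and $\eta KL\lesssim(L\Delta/(G_0\beta^3R))^{1/2}$ in \eqref{eqn:fedavg-m-para} --- note that the admissible $\eta$ depends on $G_0$ and on $R$, not merely on ``$\eta KL\lesssim 1$'' as you state --- and it is why a residual $(\beta\eta KL)^2G_0R$ appears in the pre-final bound before being killed by the choice of $\eta$. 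Without this step (or an equivalent mechanism for controlling the per-client gradient norms along the trajectory), your argument does not close, and it is exactly this step that constitutes the removal of the bounded-heterogeneity assumption.
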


\vspace{-4mm}
\paragraph{Comparison with \fedavg.} Table \ref{tab:full_results} compares \fedavgm with prior algorithms when all clients participate in the training process. The results demonstrate that \fedavgm attains the most favorable convergence rate without relying on any assumption of data heterogeneity. Moreover, this rate matches the lower bound provided by \cite{Arjevani2019LowerBF}. Notably, a recent work \citep{huang2023distributed} establishes the convergence of \fedavg by relaxing the bounded data heterogeneity to a bound on $f^\star - \frac{1}{N}\sum_{i=1}^Nf_i^\star$ where $f^\star \triangleq\min_x f(x)$ and $f_i^\star\triangleq f_i(x)$. However, their convergence does not benefit from local updates. Moreover, it still suffers from data heterogeneity $f^\star - \frac{1}{N}\sum_{i=1}^Nf_i^\star$ and gets slow as the number of clients $N$ increases, resulting in a suboptimal rate.

\vspace{-3mm}
\paragraph{{Comparison with \fedcm.}} {\fedavgm  coincides with the \fedcm algorithm proposed by \cite{xu2021fedcm}. However,  our result outperforms that of \cite{xu2021fedcm} in several aspects. First, our convergence only utilizes the standard smoothness of objectives and gradient stochasticity while \cite{xu2021fedcm} additionally require bounded data heterogeneity and bounded gradients which are rarely valid in practice, suggesting the limitation of their result. Second, the convergence established by \cite{xu2021fedcm} is significantly weaker than ours and cannot even asymptotically approach the customary rate $O(1/\sqrt{NKR})$ in non-convex FL, as demonstrated by the results stated in Table~\ref{tab:full_results}.}

\vspace{-3mm}
\paragraph{Constant local learning rate.} 
Based on Theorem \ref{thm:Fedavg-m}, it can be inferred that when $R\gtrsim {NKL\Delta}/{\sigma^2}$, \fedavgm allows the utilization of {\em constant} local learning rate $\eta$ which does not necessarily decay as the number of communication rounds $R$ increases. This characteristic eases the tuning of the local learning rate and improves empirical performance. In contrast, many existing convergence results of \fedavg necessitate the adoption of local learning rates that diminish as $R$ increases, as exemplified by \eg, \cite{yang2021achieving,li2019convergence,karimireddy2020scaffold,koloskova2020unified}. 

\vspace{-3mm}
\paragraph{Intuition on the effectiveness of momentum.} The momentum mechanism relies on an accumulated gradient estimate $g^r$, which is updated through $g^{r+1}=\frac{\beta}{NK}\sum_{i=1}^N\sum_{k=0}^{K-1}\nabla F(x_i^{r,k};\xi_i^{r,k})+(1-\beta)g^r$. While $g^{r}$ is a biased gradient estimate, it exhibits reduced variance due to its accumulation nature compared to a stochastic gradient $\nabla F(x_i^{r,k};\xi_i^{r,k})$ computed with a single data minibatch. Importantly, by utilizing directions $\beta \nabla F(x_i^{r,k};\xi_i^{r,k})+(1-\beta) g^r$ for local updates, an ``anchoring'' effect is achieved, effectively mitigating the ``client-drift'' phenomenon. In the extreme case where $\beta=0$, all clients remain synchronized in their local updates, eliminating the drift incurred by data heterogeneity in the vanilla \fedavg. By appropriately tuning the coefficient $\beta$, \fedavgm maintains the same convergence rate as \citep{yang2021achieving} while removing the requirement of data heterogeneity assumption utilized in their analysis.

\vspace{-2mm}
\subsection{Variance-reduced \fedavg with momentum} 
\label{sec:fedavg-vrm}
\vspace{-1mm}
When each local loss function is further assumed to be sample-wise smooth (\ie, Assumption \ref{asp:sample_smooth}), we can replace the local descent direction in Algorithm \ref{alg:fedavg_mom} with a variance-reduced momentum direction
\begin{align}\label{eqn:vrm-d}
    g_i^{r,k}=  \nabla F(x^{r,k}_i;\xi^{r,k}_i) + (1-\beta )(g^r-\nabla F(x^{r-1};\xi^{r,k}_i))
\end{align}
to further enhance convergence, leading to variance-reduced \fedavg with momentum, or  \fedavgmvr for short, see the detailed algorithm in Appendix \ref{app:fedavgmvr}.
The variable $x^{r-1}$ is the last-iterate global model maintained in the server. The construction of the variance-reduced direction \eqref{eqn:vrm-d} effectively mitigates the influence of within-client gradient noise and can be traced back to SARAH \citep{nguyen2017sarah} and STORM \citep{cutkosky2019momentum} in stochastic optimization; more discussion can be found in \cite{tan2022towards}.
Same as \fedavgm, turning off the variance-reduced momentum of \fedavgmvr, \ie, setting $\beta=1$, recovers \fedavg. \fedavgmvr shares the same algorithmic structure and uplink communication workload as \fedavg.  

\begin{thm}\label{thm:Fedavg-m-vr}\setlength{\abovedisplayskip}{1pt}\setlength{\belowdisplayskip}{1pt}
    Under Assumption \ref{asp:sample_smooth} and \ref{asp:sgd_var}, if we take $g^0=\frac{1}{NB}\sum_{i=1}^N\sum_{b=1}^{B}\nabla F(x^0;\xi^{b}_i)$\footnote{We use $B$ data minibatches per client to initialize the gradient estimate $g^0$ with small variance $\EE[\|g^0-\nabla f(x^0)\|^2]$, after which only one minibatch is utilized per local gradient computation. The same applies below.} with $\{\xi^{b}_i\}_{b=1}^B\overset{iid}{\sim}\mathcal{D}_i$ and set  $\beta$, $\gamma$, $\eta$, and $B$ as in \eqref{eqn:fedavg-mvr-para}, 
    \fedavgmvr enjoys
    \begin{equation*}
        \frac{1}{R}\sum_{r=0}^{R-1}\expect[\|\nabla f(x^r)\|^2]\lesssim \left(\frac{L\Delta  \sigma}{NKR}\right)^{2/3}+\frac{L\Delta }{R}.
        \vspace{-3mm}
    \end{equation*}
    \vspace{-2mm}
\end{thm}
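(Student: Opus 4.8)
The plan is to run the variance-reduced-momentum (STORM-type) argument adapted to local updates. First I would unroll the server aggregation into $g^{r+1}=\frac{1}{NK}\sum_{i=1}^N\sum_{k=0}^{K-1}g_i^{r,k}$, so that \eqref{eqn:vrm-d} gives the recursion
\begin{equation*}
g^{r+1}=(1-\beta)g^r+\frac{1}{NK}\sum_{i,k}\big[\nabla F(x_i^{r,k};\xi_i^{r,k})-(1-\beta)\nabla F(x^{r-1};\xi_i^{r,k})\big].
\end{equation*}
The central object is the tracking error $\mathcal{E}^r:=g^r-\nabla f(x^{r-1})$. Starting from the descent lemma for the $L$-smooth $f$ (implied by Assumption~\ref{asp:sample_smooth}) applied to $x^{r+1}=x^r-\gamma g^{r+1}$, and splitting the inner product via $-\langle a,b\rangle=\tfrac12(\|a-b\|^2-\|a\|^2-\|b\|^2)$, I get
\begin{equation*}
\expect[f(x^{r+1})]\le \expect[f(x^r)]-\tfrac{\gamma}{2}\expect\|\nabla f(x^r)\|^2+\tfrac{\gamma}{2}\expect\|\mathcal{E}^{r+1}\|^2-\tfrac{\gamma}{2}(1-L\gamma)\expect\|g^{r+1}\|^2,
\end{equation*}
so the task reduces to controlling and summing $\expect\|\mathcal{E}^{r+1}\|^2$ while using the negative $\|g^{r+1}\|^2$ budget to absorb drift.

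Second, I would derive a contractive recursion for $\mathcal{E}^r$. Using that the fresh samples $\xi_i^{r,k}$ are unbiased (Assumption~\ref{asp:sgd_var}) and that the $(1-\beta)\nabla f_i(x^{r-1})$ pieces telescope against $(1-\beta)g^r$, one obtains the clean decomposition $\mathcal{E}^{r+1}=(1-\beta)\mathcal{E}^r+D^r+M^r$, where $D^r=\frac{1}{NK}\sum_{i,k}[\nabla f_i(x_i^{r,k})-\nabla f_i(x^r)]$ is a deterministic client-drift term and $M^r$ is conditionally mean-zero. The STORM gain is that $M^r$ decomposes into a raw-noise part of variance $\beta^2\sigma^2$ and a gradient-\emph{difference} part whose variance is at most $(1-\beta)^2L^2\|x_i^{r,k}-x^{r-1}\|^2$ by sample-wise smoothness; independence across clients and the martingale structure across local steps then yield
\begin{equation*}
\expect\|M^r\|^2\lesssim \frac{1}{NK}\Big(\beta^2\sigma^2+(1-\beta)^2L^2\max_{i,k}\expect\|x_i^{r,k}-x^{r-1}\|^2\Big).
\end{equation*}
Combining with $\|(1-\beta)\mathcal{E}^r+D^r\|^2\le(1-\beta)\|\mathcal{E}^r\|^2+\tfrac{2}{\beta}\|D^r\|^2$ gives a recursion that contracts at rate $(1-\beta)$. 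I would separately bound the drift $\expect\|x_i^{r,k}-x^r\|^2\lesssim \eta^2K^2(\cdots)$ by unrolling the local iteration and exploiting the anchoring of the $(1-\beta)g^r$ component, and split $\|x_i^{r,k}-x^{r-1}\|^2\le 2\|x_i^{r,k}-x^r\|^2+2\gamma^2\|g^r\|^2$ so that the global-step piece can be charged to the descent-lemma budget.

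Third, I would assemble the Lyapunov function $\Phi^r=\expect[f(x^r)-f^\star]+\frac{c\gamma}{\beta}\expect\|\mathcal{E}^r\|^2$, choosing $c$ so the $(1-\beta)$-contraction exactly pays for the $+\tfrac{\gamma}{2}\|\mathcal{E}^{r+1}\|^2$ injected by the descent step, while $-\tfrac{\gamma}{2}(1-L\gamma)\|g^{r+1}\|^2$ soaks up the $\gamma^2\|g^r\|^2$ contributions from $D^r$ and $M^r$. This absorption is what forces the coupling $\beta\sim L^2\gamma^2/(NK)$ (with $\eta K$ taken small). Telescoping over $r=0,\dots,R-1$ leaves
\begin{equation*}
\frac{1}{R}\sum_{r=0}^{R-1}\expect\|\nabla f(x^r)\|^2\lesssim \frac{\Delta}{\gamma R}+\frac{\beta\sigma^2}{NK}+\frac{\expect\|\mathcal{E}^0\|^2}{\beta R}.
\end{equation*}
The $B$-minibatch initialization makes $\expect\|\mathcal{E}^0\|^2\le \sigma^2/(NB)$ negligible, and substituting $\beta\sim L^2\gamma^2/(NK)$ turns the noise term into $L^2\gamma^2\sigma^2/(NK)^2$; optimizing $\frac{\Delta}{\gamma R}+\frac{L^2\gamma^2\sigma^2}{(NK)^2}$ over $\gamma\lesssim 1/L$ gives $\gamma\sim((NK)^2\Delta/(L^2R\sigma^2))^{1/3}$ and the stated rate $(L\Delta\sigma/(NKR))^{2/3}+L\Delta/R$.

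The hard part will be controlling the coupling between local client drift and the variance-reduction error. Unlike vanilla STORM, the correction $\nabla F(x_i^{r,k};\xi)-\nabla F(x^{r-1};\xi)$ is evaluated at the \emph{drifting} local iterate $x_i^{r,k}$, so the distance entering $\expect\|M^r\|^2$ mixes the global step $\gamma g^r$ with the accumulated local steps $x_i^{r,k}-x^r$. Showing that the local-drift contribution is genuinely higher order---so it neither breaks the $(1-\beta)$ contraction nor degrades the $(NKR)^{2/3}$ rate---requires a three-way balance of the local rate $\eta$, the global rate $\gamma$, and the momentum coefficient $\beta$, and verifying that this balance is simultaneously compatible with the Lyapunov absorption and with $\gamma\lesssim 1/L$ is the main technical obstacle.
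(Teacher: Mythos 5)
Your overall architecture matches the paper's proof almost exactly: a descent lemma charging the error $\expect\|g^{r+1}-\nabla f(x^r)\|^2$, a $(1-\beta)$-contractive recursion for that error obtained by splitting off the conditionally mean-zero STORM correction (whose variance picks up the $1/(NK)$ factor and the $L^2\|x_i^{r,k}-x^{r-1}\|^2$ bound from sample-wise smoothness), a local-drift bound of order $\eta^2K^2$, a Lyapunov summation, and the coupling $\gamma L\asymp\sqrt{\beta NK}$ followed by optimization over $\beta$ (equivalently $\gamma$), which correctly reproduces the $(L\Delta\sigma/(NKR))^{2/3}+L\Delta/R$ rate. The initialization with $B\approx K/(R\beta^2)$ minibatches to make $\expect\|\mathcal{E}^0\|^2/(\beta R)$ negligible is also exactly what the paper does.

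The one genuine gap is hidden inside the ``$(\cdots)$'' of your drift bound. The expected local update direction at $k=0$ is $-\eta\bigl(\nabla f_i(x^r)+(1-\beta)(g^r-\nabla f_i(x^{r-1}))\bigr)$, and after extracting the common anchor $(1-\beta)g^r$ you are left with a term of order $\eta^2\beta^2\|\nabla f_i(x^r)\|^2$ per client. Under unbounded heterogeneity, $\frac{1}{N}\sum_i\|\nabla f_i(x^r)\|^2$ cannot be related to $\|\nabla f(x^r)\|^2$, so your drift bound cannot close in terms of the quantities appearing in your Lyapunov function. The paper's Lemma \ref{lem:Fedavg_VR_grad_norm} resolves this by propagating $\expect\|\nabla f_i(x^r)\|^2\le e\,\expect\|\nabla f_i(x^0)\|^2+2e(r+1)(\gamma L)^2\sum_{j<r}(\mathcal{E}_j+\expect\|\nabla f(x^j)\|^2)$, so that the heterogeneous part is charged to the initial quantity $G_0=\frac{1}{N}\sum_i\|\nabla f_i(x^0)\|^2$ plus accumulated global progress; this is precisely why the admissible local learning rate in \eqref{eqn:fedavg-mvr-para} carries the condition $\eta KL\lesssim(L\Delta/(G_0\gamma LR))^{1/2}$, which makes the resulting $(\eta KL)^2G_0R$ contribution absorbable into $\Delta/(\gamma R)$. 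Your telescoped bound omits this term and your parameter choices never reference $G_0$, so as written the argument only goes through under a bounded-heterogeneity (or bounded-gradient) assumption --- which is exactly the assumption the theorem is designed to avoid. Separately, the difficulty you flag as ``the hard part'' (the VR correction being evaluated at drifting local iterates) is real but routine: $\|x_i^{r,k}-x^{r-1}\|^2\le 2\|x_i^{r,k}-x^r\|^2+2\gamma^2\|g^r\|^2$ handles it, as you anticipate; the per-client gradient-norm issue is the step that actually requires a new idea.
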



\vspace{-3mm}
\paragraph{Comparison with prior works.} 
\fedavgmvr outperforms existing variance-reduced FL methods in convergence rate, as justified by the results listed in Table \ref{tab:full_results}. Additionally, compared to {\sc BVR-L-SGD} \citep{murata2021bias} and {\sc CE-LSGD} \citep{patel2022towards}, \fedavgmvr conducts each local update using $1+1/K=O(1)$ minibatches on average, contrasting with the $O(K)$ minibatches in {\sc BVR-L-SGD} and {\sc CE-LSGD}. Furthermore, in comparison to {\sc STEM} \citep{khanduri2021stem}, \fedavgmvr does not rely on the assumption of bounded data heterogeneity.

Based on discussions in Sections \ref{sec:fedavgm} and \ref{sec:fedavg-vrm}, we demonstrate that \fedavgm and \fedavgmvr, in the context of full client participation,  can achieve the state-of-the-art convergence rate without resorting to any stronger assumption, \eg, bounded data heterogeneity or impractical algorithmic structures such as a large number of minibatches in local gradient computation.

\vspace{-3mm}
\section{Accelerating SCAFFOLD with momentum}
\vspace{-2mm}
This section addresses the scenario where a random subset of clients participates in each training round. To tackle the challenges of partial participation, \scaffold employs a control variable in each client to counteract the  ``client drift'' effect during local updates. We will introduce momentum to both \scaffold and its variance-reduced extension to gain better convergence results.

\vspace{-2mm}
\subsection{\scaffold with momentum}\label{sec:scaffoldm}
\vspace{-1mm}
\begin{algorithm}[!t]
    \caption{\scaffoldm: {\sc SCAFFOLD} with momentum}
    \label{alg:scaffold_mom}
    \begin{algorithmic}
        \REQUIRE{initial model $x_0$, gradient estimator $g^0$, control variables $\{c_i^0\}_{i=1}^N$ and $c^0$, local learning rate $\eta$, global learning rate $\gamma $, momentum $\beta$}
        \FOR{$r=0, \cdots, R-1$}
            \STATE{Uniformly sample clients $\mathcal{S}_r\subseteq \{1, \cdots, N\}$ with $|\cS_r|=S$}
            \FOR{each client $i\in \cS_r$ in parallel}
                \STATE{Initialize local model $x^{r,0}_i=x^r$}
                \FOR{$k=0, \cdots, K-1$}
                    \STATE{
                            \colorbox{Ocean}{Compute $g^{r,k}_i=  \beta (\nabla F(x^{r,k}_i;\xi^{r,k}_i)-c_i^r+c^r) + (1-\beta )g^r$}  $\triangleright\mbox{\footnotesize{$\beta=1$ implies  \scaffold}}$  \\
                            \;Update local model $x^{r,k+1}_i=x^{r,k}_i-\eta g^{r,k}_i$ 
                    }
                \ENDFOR 
                \STATE{Update control variable $c^{r+1}_i:= \frac{1}{K}\sum_{k=0}^{K-1} \nabla F(x^{r,k}_i;\xi^{r,k}_i)$ (for $i\notin \cS_r$, $c_i^{r+1}=c_i^r$)}
            \ENDFOR
            \STATE Aggregate local updates $g^{r+1}= \frac{1}{\eta S K}\sum_{i\in\cS_r}\left(x^r - x^{r,K}_i\right)$\\
            \STATE Update global model $x^{r+1}= x^r-\gamma g^{r+1}$\\
            \STATE Update control variable $c^{r+1}= c^{r}+ \frac{1}{N}\sum_{i\in\mathcal{S}_r} (c^{r+1}_i-c^{r}_i)$
        \ENDFOR
    \end{algorithmic}
\end{algorithm}

\vspace{-3mm}
\paragraph{Algorithm.} We introduce momentum to enhance the estimation of the stochastic gradient, resulting in the newly proposed algorithm \scaffoldm, outlined in Algorithm \ref{alg:scaffold_mom}. In \scaffoldm, $S$ clients are randomly selected from a pool of $N$ clients for each training iteration. The control variables $c_i$ and $c$ are maintained by the client and server, respectively. In \scaffold, the local descent direction is given by $\nabla F(x_i^{r,k};\xi_i^{r.k})-c_i^r+c^r$. In contrast, \scaffoldm incorporates momentum directions for local updates:
\begin{align}\label{eq:scaffold_m}
g^{r,k}_i= \beta (\nabla F(x^{r,k}_i;\xi^{r,k}_i)-c_i^r+c^r) + (1-\beta )g^r,
\end{align}
where $g^r$ represents the global stochastic gradient vector maintained by the server. It is worth noting that \scaffoldm can reduce to \scaffold by setting $\beta=1$.

\vspace{-4mm}
\paragraph{Convergence property.} \hspace{-3mm} Our momentum yields notable theoretical improvements to \scaffold: 
\begin{thm}\label{thm:scaffold-m}
    Under Assumption \ref{asp:smooth} and \ref{asp:sgd_var}, 
    if we take $g^0=0$, $c_i^0=\frac{1}{B}\sum_{b=1}^{B}\nabla F(x^0;\xi^{b}_i)$ with $\{\xi^{b}_i\}_{b=1}^B\overset{iid}{\sim}\mathcal{D}_i$, $c^0=\frac{1}{N}\sum_{i=1}^Nc_i^0$ and set  $\beta$,  $\gamma$, $\eta$, and $B$  as in \eqref{eqn:scaffold-m-para},  
    \scaffoldm enjoys
    \begin{equation*}
        \frac{1}{R}\sum_{r=0}^{R-1}\expect[\|\nabla f(x^r)\|^2]\lesssim \sqrt{\frac{L\Delta  \sigma^2}{SKR}}+\frac{L\Delta  }{R}\left(1+\frac{N^{2/3}}{S}\right).
        \vspace{-2mm}
    \end{equation*}
\end{thm}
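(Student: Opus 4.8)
The plan is to combine a one-step descent inequality with two coupled error recursions—one for the momentum estimate and one for the control variables—inside a single Lyapunov potential, and then telescope. I would start from the $L$-smoothness of $f$ (Assumption~\ref{asp:smooth}) applied to the global update $x^{r+1}=x^r-\gamma g^{r+1}$. The aggregated direction unrolls as $g^{r+1}=\frac{1}{SK}\sum_{i\in\cS_r}\sum_{k=0}^{K-1}g^{r,k}_i=\beta\,\frac{1}{SK}\sum_{i\in\cS_r}\sum_{k=0}^{K-1}\big(\nabla F(x^{r,k}_i;\xi^{r,k}_i)-c_i^r+c^r\big)+(1-\beta)g^r$. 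The first key observation is that, because the server maintains the invariant $c^r=\frac{1}{N}\sum_{i=1}^N c_i^r$, taking expectation over the uniform sampling of $\cS_r$ eliminates the control-variable terms, i.e. $\E_{\cS_r}\big[\frac{1}{S}\sum_{i\in\cS_r}(c^r-c_i^r)\big]=0$. Hence $\E_{\cS_r}[g^{r+1}]$ is a pure momentum average of true local gradients along the local trajectories plus $(1-\beta)g^r$, carrying \emph{no} heterogeneity bias—this is precisely why no bounded-heterogeneity assumption is needed.

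The core analytic object is the momentum error $e^r:=\E[\|g^r-\nabla f(x^r)\|^2]$. Decomposing $g^{r+1}-\nabla f(x^{r+1})$ into $(1-\beta)(g^r-\nabla f(x^r))$, a smoothness term $(1-\beta)(\nabla f(x^r)-\nabla f(x^{r+1}))$, and a mean-zero stochastic part scaled by $\beta$, and expanding, I would establish a contraction $e^{r+1}\le(1-\beta)e^r+O(\beta^2\sigma^2/(SK))+O(L^2)\,\|x^{r+1}-x^r\|^2+O(L^2)\,\overline{D}^r$, where $\overline{D}^r:=\frac{1}{SK}\sum_{i\in\cS_r}\sum_k\E\|x^{r,k}_i-x^r\|^2$ is the within-round client drift. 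The $\beta^2\sigma^2/(SK)$ term reflects the variance reduction from averaging $SK$ stochastic gradients (Assumption~\ref{asp:sgd_var}) and is the eventual source of the $\sqrt{L\Delta\sigma^2/(SKR)}$ rate. Bounding $\overline{D}^r$ is routine: unrolling the local recursion $x^{r,k+1}_i=x^{r,k}_i-\eta g^{r,k}_i$ expresses the drift in terms of $\eta^2$, $\|g^r\|^2$, the noise, and the control-variable error introduced next.

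The genuinely \scaffold-specific, and hardest, ingredient is the control-variable error $h^r:=\frac{1}{N}\sum_{i=1}^N\E[\|c_i^r-\nabla f_i(x^r)\|^2]$ under \emph{partial} participation. Only the $S$ sampled clients refresh $c_i^{r+1}=\frac{1}{K}\sum_k\nabla F(x^{r,k}_i;\xi^{r,k}_i)$—whose conditional mean $\frac{1}{K}\sum_k\nabla f_i(x^{r,k}_i)$ approximates $\nabla f_i(x^r)$ up to drift and incurs variance $O(\sigma^2/K)$—while the remaining $N-S$ clients keep stale values that are further aged by the global shift $\|x^{r+1}-x^r\|^2$. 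I expect a recursion $h^{r+1}\le(1-\tfrac{S}{N})h^r+O(L^2)\,\|x^{r+1}-x^r\|^2+\tfrac{S}{N}\,O(\sigma^2/K+\overline{D}^r)$, whose slow contraction factor $1-S/N$ encodes that only an $S/N$ fraction of control variables is corrected per round. The batch initialization $c_i^0=\frac{1}{B}\sum_{b=1}^B\nabla F(x^0;\xi^b_i)$ makes $h^0$ small and controls the transient.

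Finally I would assemble a potential $\Phi^r=f(x^r)-\min_x f(x)+a\,e^r+b\,h^r$, choosing the weights $a,b$ so that the $\overline{D}^r$ and $\|x^{r+1}-x^r\|^2$ cross terms in all three inequalities cancel, telescope over $r=0,\dots,R-1$ using $g^0=0$, and tune $\beta,\gamma,\eta,B$ to balance the stochastic term against the initialization and deterministic terms, yielding $\sqrt{L\Delta\sigma^2/(SKR)}+\frac{L\Delta}{R}(1+N^{2/3}/S)$. I expect the main obstacle to be exactly this coefficient-matching across the coupled $(1-\beta)$ and $(1-S/N)$ contractions: the $N^{2/3}/S$ factor should emerge only when the control-variable lag, the client drift, and the momentum variance are balanced at the correct powers of the learning rates, and verifying that the induced step-size constraints are mutually consistent is the delicate part of the argument.
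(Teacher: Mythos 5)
Your overall architecture --- a one-step descent inequality driven by the momentum error, a $(1-\beta)$-contracting recursion for that error, a $(1-S/N)$-contracting recursion for the control-variable error, a drift bound, and a weighted telescoping with parameter tuning at the end --- is exactly the paper's proof (Lemmas~\ref{lem:onestep}, \ref{lem:scaffold_grad_err}, \ref{lem:scaffold_client_drift}, \ref{lem:scaffold_control}). There is, however, one concrete imprecision that, as written, would prevent you from recovering the $N^{2/3}/S$ factor. You justify the absence of a heterogeneity assumption by the mean cancellation $\E_{\cS_r}[\frac{1}{S}\sum_{i\in\cS_r}(c^r-c_i^r)]=0$, and accordingly your momentum-error recursion $e^{r+1}\le(1-\beta)e^r+O(\beta^2\sigma^2/(SK))+O(L^2)\|x^{r+1}-x^r\|^2+O(L^2)\overline D^r$ carries no explicit control-variable term. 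But $e^{r+1}$ is a second moment, so client sampling contributes its \emph{variance}, not just its bias: by the sampling identity (Lemma~\ref{lem:par_sample}) one picks up $\frac{N-S}{S(N-1)}$ times the spread of the per-client, control-corrected contributions, and it is the control variates' tracking of $\nabla f_i$ (i.e.\ the quantity $V_r$), not the mean cancellation, that bounds this spread without a heterogeneity constant. This puts a term of order $\beta^2 V_r/S$ \emph{directly} into the momentum-error recursion (cf.\ the $6\beta^2\q V_r$ term in Lemma~\ref{lem:scaffold_grad_err}), in addition to the route through the drift that you describe.

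That term is not a technicality. Combined with the $\frac{N}{S}L^2\|x^r-x^{r-1}\|^2$ amplification that Young's inequality forces into the control-variable recursion to preserve its $1-S/(2N)$ contraction (your $O(L^2)$ hides this $N/S$ weight), and with the further $\frac{3N}{S}$ blow-up from summing that slowly contracting recursion, it creates a feedback of order $\frac{\beta^2}{S}\cdot\frac{N^2}{S^2}(\gamma L)^2\sum_r(\cE_r+\expect[\|\nabla f(x^r)\|^2])$ whose stabilization forces $\gamma L\lesssim S^{3/2}/(\beta^{1/2}N)$, hence $\beta\lesssim S/N^{2/3}$ once $\gamma=\beta/L$, hence the $\frac{L\Delta}{R}\cdot\frac{N^{2/3}}{S}$ term in the rate. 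Routing $V_r$ only through $\overline D^r$, where it enters with the much smaller weight $\beta L^2(\eta K)^2\beta^2$, would not generate this constraint, and your coefficient matching would appear to succeed with a spuriously better dependence on $N/S$. Once you add the $\beta^2V_r/S$ term to your $e$-recursion and the $N/S$ weights to your $h$-recursion, your plan goes through and coincides with the paper's argument.
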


\vspace{-3mm}
\paragraph{Comparison with \scaffold.}
Compared to \scaffold, \scaffoldm exhibits provably faster convergence under partial participation, as justified in the comparison in Table \ref{tab:partial_results}. Specifically, when the gradients are noiseless (\ie, $\sigma^2=0$), achieving the same level of stationarity $\EE[\|\nabla f(\hat x)\|^2]$ requires a  ratio, between \scaffoldm and \scaffold, of communication rounds:
\setlength{\abovedisplayskip}{1pt}\setlength{\belowdisplayskip}{1pt}
\begin{equation}
\frac{1+N^{2/3}/S}{(N/S)^{2/3}}=\left(\frac{S}{N}\right)^{2/3}+\frac{1}{S^{1/3}}.
\end{equation}
Thus, if $S\asymp N^{2/3}$, \scaffoldm achieves up to  $N^{2/9}$ times improvement in comparison to the vanilla \scaffold, when aiming for the same level of stationarity. This improvement is significant as $N$, the number of clients in FL, is typically large. It is also worth highlighting that prior to our \scaffoldm, \scaffold was the only known non-iid FL method, to the best of our knowledge, that is robust to both unbounded data heterogeneity and partial client sampling, and capable of attaining linear speedup without relying on impractical algorithmic structures. The development of \scaffoldm provides an alternative and superior choice.   \setlength{\abovedisplayskip}{2pt}\setlength{\belowdisplayskip}{2pt}

\vspace{-4mm}
\subsection{Variance-reduced \scaffold with momentum}\label{sec:vr-scaffold-m}
\vspace{-2mm}
Similar to \fedavgmvr, when the loss functions further enjoy the sample-wise smoothness property, we can obtain \scaffoldmvr by 
replacing momentum directions in Algorithm \ref{alg:scaffold_mom} with  variance-reduced momentum directions
\begin{equation}\label{eqn:sca-vrm-d}
g_i^{r,k}=\nabla F(x^{r,k}_i;\xi^{r,k}_i) -\beta (c_i^r-c^r)+ (1-\beta )(g^r-\nabla F(x^{r-1};\xi^{r,k}_i)).
\end{equation}
The detailed algorithm is in Appendix \ref{app:scaffoldmvr}, and the convergence is shown below.
\vspace{-1mm}
\begin{thm}\label{thm:scaffold-m-vr}
    Under Assumption \ref{asp:sample_smooth} and \ref{asp:sgd_var}, if we take $c_i^0=\frac{1}{B}\sum_{b=1}^{B}\nabla F(x^0;\xi^{b}_i)$ with $\{\xi_i^b\}_{b=1}^B\overset{iid}{\sim}\mathcal{D}_i$, $g^0=c^0=\frac{1}{N}\sum_{i=1}^Nc_i^0$  and set  $\beta$,  $\gamma $, $\eta$, and   $B$ as in \eqref{eqn:scaffold-mvr-para},
   \scaffoldmvr enjoys
    \begin{equation*}
        \frac{1}{R}\sum_{r=0}^{R-1}\expect[\|\nabla f(x^r)\|^2]\lesssim \left(\frac{L\Delta  \sigma}{S\sqrt{K}R}\right)^{2/3}+\frac{L\Delta  }{R}\left(1+\frac{N^{1/2}}{S}\right).
        \vspace{-3mm}
    \end{equation*}
\end{thm}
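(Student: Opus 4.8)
The plan is to prove Theorem~\ref{thm:scaffold-m-vr} by a Lyapunov argument that simultaneously controls four quantities across rounds: the objective gap $f(x^r)-f^*$, the variance-reduced momentum error $\EE\|g^r-\nabla f(x^{r-1})\|^2$, the aggregate client drift $\frac{1}{SK}\sum_{i\in\cS_r}\sum_{k}\EE\|x_i^{r,k}-x^r\|^2$, and the control-variable tracking error $\frac{1}{N}\sum_{i}\EE\|c_i^r-\nabla f_i(x^r)\|^2$. First I would write the descent inequality: since $x^{r+1}=x^r-\gamma g^{r+1}$ and $f$ is $L$-smooth, expanding yields a $-\frac{\gamma}{2}\|\nabla f(x^r)\|^2$ term together with $\frac{L\gamma^2}{2}\|g^{r+1}\|^2$ and a cross term measuring $\|g^{r+1}-\nabla f(x^r)\|^2$. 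Because $x^r-x_i^{r,K}=\eta\sum_k g_i^{r,k}$, the aggregate satisfies $g^{r+1}=\frac{1}{SK}\sum_{i\in\cS_r}\sum_k g_i^{r,k}$; substituting \eqref{eqn:sca-vrm-d} recasts this as the STORM-type recursion $g^{r+1}=(1-\beta)g^r+\frac{1}{SK}\sum_{i\in\cS_r}\sum_k[\nabla F(x_i^{r,k};\xi_i^{r,k})-(1-\beta)\nabla F(x^{r-1};\xi_i^{r,k})]-\frac{\beta}{S}\sum_{i\in\cS_r}(c_i^r-c^r)$. Since $c^r=\frac{1}{N}\sum_i c_i^r$ and the $S$ clients are sampled uniformly, the control correction is conditionally unbiased, so $g^{r+1}$ is, up to drift bias, an unbiased momentum estimate of $\nabla f(x^r)$.

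The heart of the analysis is the momentum error recursion. Subtracting $\nabla f(x^r)$ and conditioning on the past, I would establish a bound of the form $\EE\|g^{r+1}-\nabla f(x^r)\|^2 \le (1-\beta)^2\,\EE\|g^r-\nabla f(x^{r-1})\|^2 + (\text{variance}) + (\text{drift bias})$. The variance-reduction structure is what keeps the variance term small: the increment $\nabla F(x_i^{r,k};\xi)-(1-\beta)\nabla F(x^{r-1};\xi)$ splits into a $\beta$-scaled single-sample noise contributing $O(\beta^2\sigma^2/(SK))$ and a gradient-difference term whose conditional variance is controlled by sample-wise smoothness (Assumption~\ref{asp:sample_smooth}) as $O(L^2(1-\beta)^2\EE\|x_i^{r,k}-x^{r-1}\|^2)$. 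Uniform sampling of $S$ clients supplies the $1/S$ factor and averaging over $K$ local steps supplies the $1/K$ factor; telescoping this geometric recursion and optimizing $\beta$ is what produces the $(L\Delta\sigma/(S\sqrt{K}R))^{2/3}$ rate.

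Next I would bound the two auxiliary quantities. Unrolling $x_i^{r,k+1}=x_i^{r,k}-\eta g_i^{r,k}$ gives a client-drift bound of order $\eta^2K^2$ times gradient-estimate magnitudes; crucially, because every local direction is anchored by the common $g^r$ through the $(1-\beta)g^r$ term, the drift stays small without any bounded-heterogeneity assumption, mirroring the mechanism already exploited for \fedavgmvr and \scaffoldm. For the control variables I would track the round-to-round change of $\frac{1}{N}\sum_i\EE\|c_i^r-\nabla f_i(x^r)\|^2$ under the lazy update $c_i^{r+1}=\frac{1}{K}\sum_k\nabla F(x_i^{r,k};\xi_i^{r,k})$ applied only on $i\in\cS_r$; the staleness of the $N-S$ unsampled clients contributes the $N/S$-type inflation, but because the momentum estimate already suppresses the gradient noise, the resulting penalty sharpens to the $N^{1/2}/S$ term rather than the $N^{2/3}/S$ term of \scaffoldm.

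Finally I would assemble a Lyapunov function $\Phi^r=f(x^r)-f^*+a\,\EE\|g^r-\nabla f(x^{r-1})\|^2+b\,(\text{control error})$ with weights $a,b$ chosen so that the positive variance and drift contributions are dominated by the negative terms, telescope over $r=0,\dots,R-1$, and plug in the prescribed $\beta,\gamma,\eta,B$ from \eqref{eqn:scaffold-mvr-para}; the initialization $g^0=c^0=\frac{1}{N}\sum_i c_i^0$ formed from $B$ minibatches absorbs the starting error into the $L\Delta/R$ term. The main obstacle will be the coupling between partial participation and variance-reduced momentum: the control variables refresh only on sampled clients, so their staleness feeds back into the STORM recursion and does not decouple cleanly from the momentum error. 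Choosing the Lyapunov weights so that the control-variable error, momentum error, and drift all contract together---while extracting the precise $N^{1/2}/S$ constant instead of a looser bound---is the delicate part of the argument.
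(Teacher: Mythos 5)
Your proposal follows essentially the same route as the paper's proof: a smoothness descent step driven by the gradient-estimate error $\EE\|g^{r+1}-\nabla f(x^r)\|^2$, a contraction recursion for that error with variance $O(\beta^2\sigma^2/(SK))$ and drift/control-variable bias terms (the paper's Lemma~\ref{lem:scaffold_VR_grad_err}), an $\eta^2K^2$ client-drift bound, a staleness recursion for $\frac{1}{N}\sum_i\EE\|c_i^r-\nabla f_i(x^{r-1})\|^2$ contracting at rate $S/N$ (Lemma~\ref{lem:scaffold_VR_control}), and a weighted telescoping of all four quantities, which is exactly your Lyapunov assembly. The only cosmetic differences are that the paper anchors the control error at $x^{r-1}$ rather than $x^r$ and obtains the $1/K$ variance factor via a martingale version of the bias--variance lemma, neither of which changes the structure of the argument.
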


\vspace{-3mm}
\paragraph{Comparison with variance-reduced methods.} 
    \scaffoldmvr outperforms all existing variance-reduced federated learning methods under partial participation in terms of convergence rate when data heterogeneity is severe (\ie, $\zeta^2$ is large), see results listed in Table \ref{tab:partial_results}. Moreover, \scaffoldmvr has the following additional advantages.  Compared to {\sc MimeLiteMVR} \citep{karimireddy2020mime}, \scaffoldmvr does not need access to noiseless (full-batch) local gradients per iteration.
    Compared to {\sc MB-STORM} \citep{patel2022towards} and {\sc CE-LSGD} \citep{patel2022towards}, \scaffoldmvr does not require bounded data heterogeneity and conducts each local update using $1+1/K=O(1)$ minibatches on average, instead of $\mathcal{O}(K)$.

\vspace{-1mm}
Based on  Sections \ref{sec:scaffoldm} and \ref{sec:vr-scaffold-m}, we demonstrate that \scaffoldm and \scaffoldmvr, in the context of partial client participation, can achieve state-of-the-art convergence rates without resorting to any stronger assumption, \eg, bounded data heterogeneity or impractical algorithmic structures such as a large number of minibatches in local gradient computation.

\begin{figure}[t]
\vspace{-12mm}
    \centering
    \hspace{-3mm}
    \subfigure[Acceleration by momentum]{\includegraphics[clip,width=0.34\textwidth]{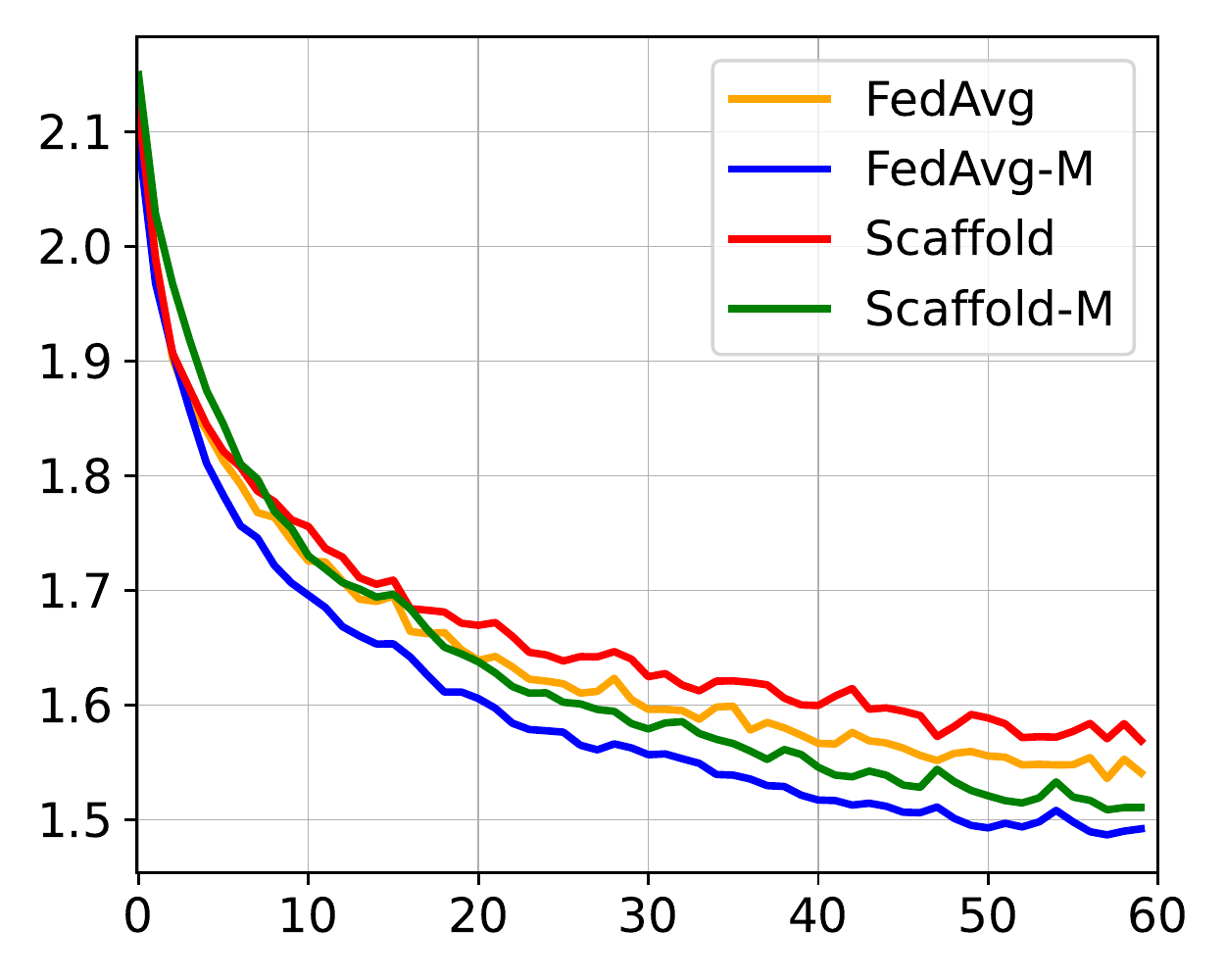} \label{fig:full_m}} \hspace{-3mm}
    \subfigure[Comparison of VR methods]{\includegraphics[clip,width=0.34\textwidth]{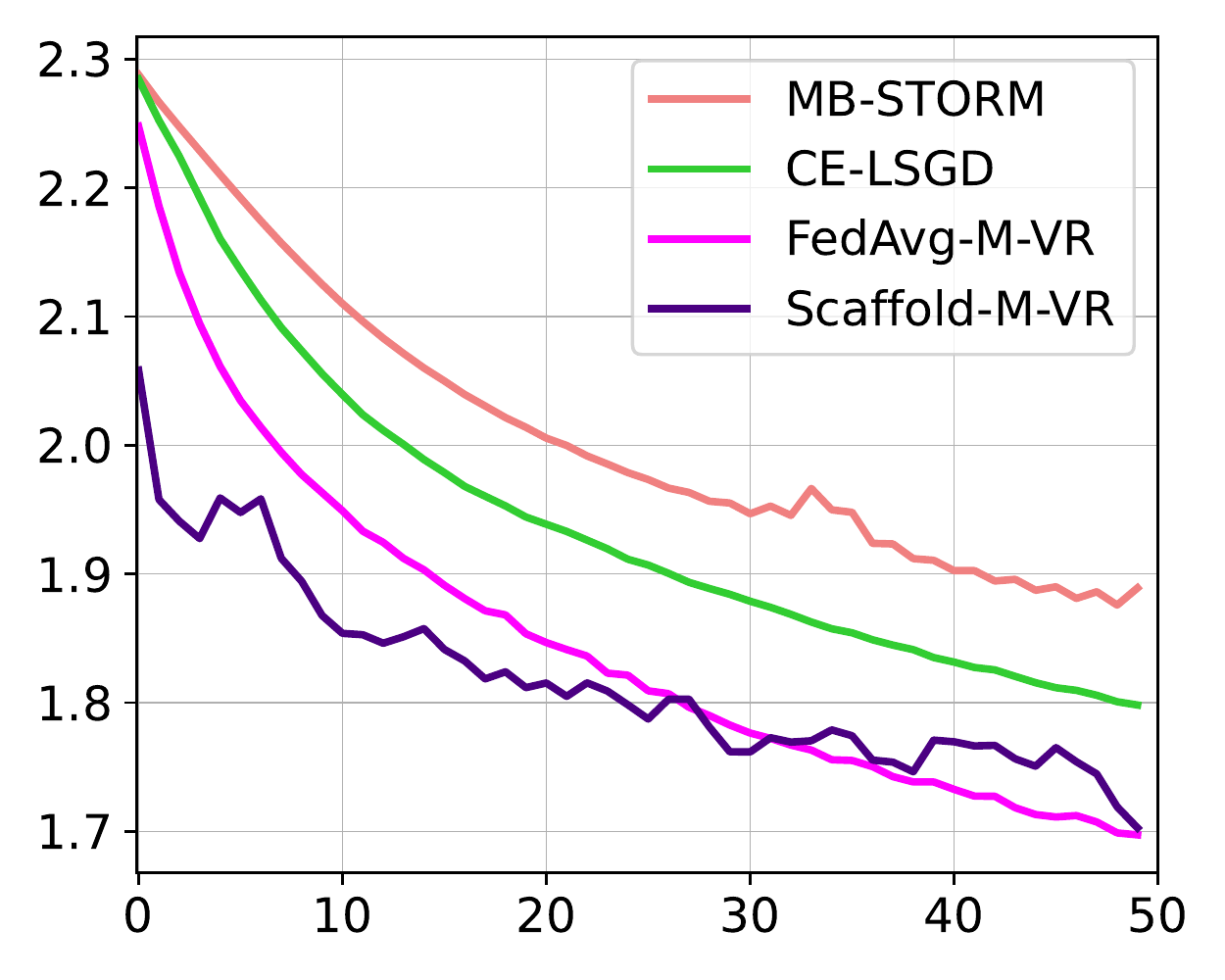} \label{fig:full_vr}} \hspace{-3mm}
    \subfigure[Partial participation]{\includegraphics[clip,width=0.34\textwidth]{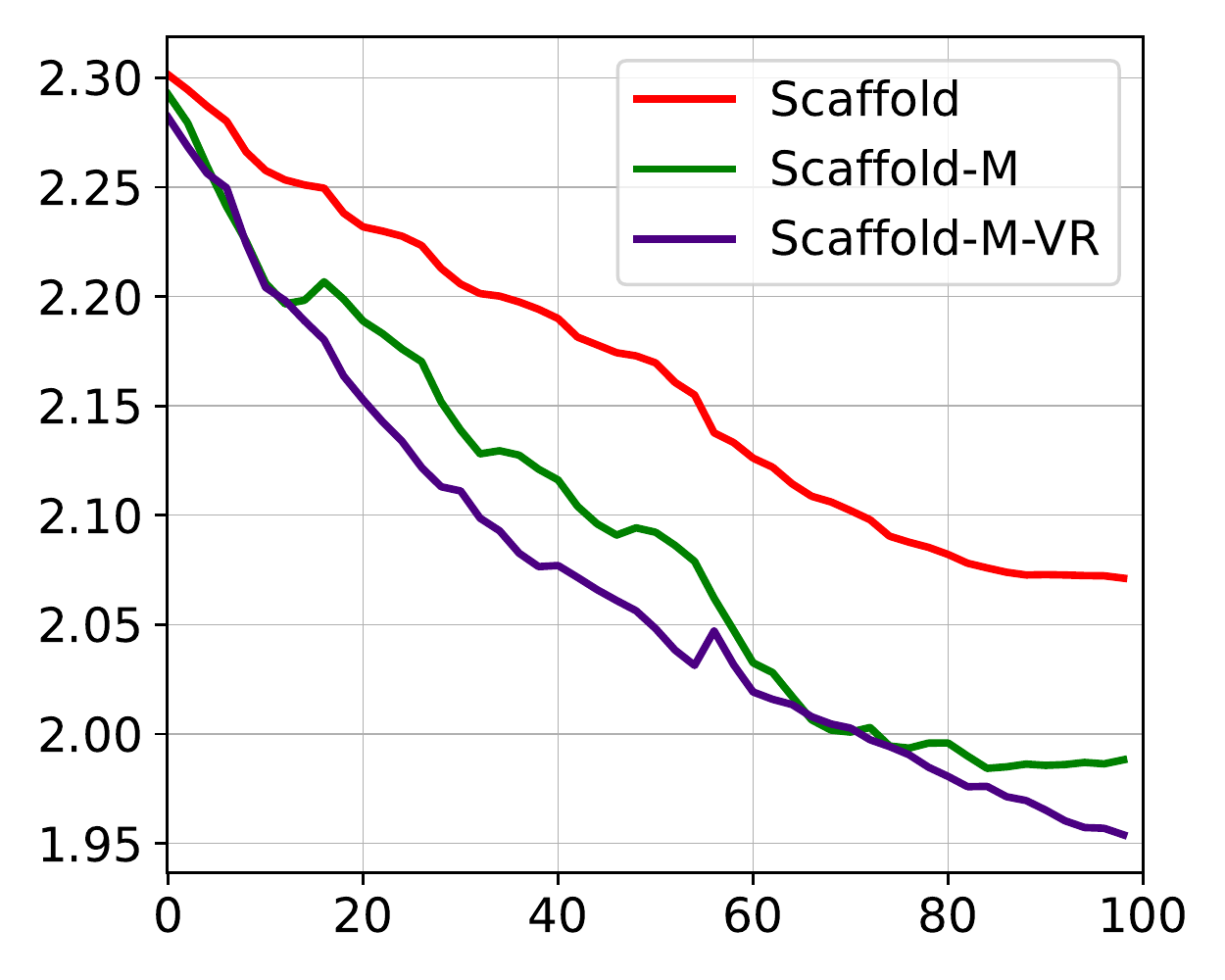} \label{fig:par}}
    \hspace{-3mm}
    \vspace{-3mm}
    \caption{Test loss of three-layer MLP versus the number of communication rounds}
    \vspace{-1mm}
\end{figure}
   
\vspace{-4mm}
\section{Experiments}
\vspace{-3mm}
We present experiments on the CIFAR-10 dataset \citep{krizhevsky2009learning} with two neural networks (three-layer MLP,  ResNet-18) to justify the efficacy of our proposed algorithms. We evaluate them along with baselines including \fedavg \citep{konevcny2016federated}, \scaffold \citep{karimireddy2020scaffold}, {\sc MB-STORM},  {\sc CE-LSGD} \citep{patel2022towards}.  Parameters (such as learning rates) in our implementation are set by grid search. {We defer more experimental details  and results (\eg, investigating the impact of momentum value $\beta$, setups with large $N$) to Appendix \ref{app_sec:exp}.}

\vspace{-2mm}
\subsection{MLP experiments}
\vspace{-2mm}
The MLP experiments involve $K=32$ local updates and  $N=10$ clients with data generated via the Dirichlet distribution \citep{hsu2019measuring} with a parameter of $0.5$ and $0.2$ for full and partial client participation, respectively (small parameter value implies severe heterogeneity). 

Firstly, we compare the performance of \fedavgm and \scaffoldm with their momentum-free counterparts, namely the vanilla \fedavg and \scaffold, under full client participation. The results are presented in Figure \ref{fig:full_m}, in which it can be observed that incorporating momentum significantly accelerates the convergence of both \fedavg and \scaffold. 

\vspace{-1mm}
Secondly, we compare four momentum-based variance-reduced methods: {\sc Minibatch-STORM}, {\sc CE-LSGD}, \fedavgmvr (our Algorithm \ref{alg:fedavg_mom_vr}), and \scaffoldmvr (our Algorithm \ref{alg:scaffold_mom_vr}), under full client participation. The comparison is illustrated in Figure \ref{fig:full_vr}. Our proposed methods outperform {\sc Minibatch-STORM} and {\sc CE-LSGD} with substantial margins.

\vspace{-1mm}
Lastly, we investigate the case of partial client participation with $S=1$ and compare the performance of \scaffoldm and \scaffoldmvr with vanilla \scaffold. The results are presented in Figure \ref{fig:par}. Once again, we observe that the introduction of momentum leads to significant improvements even when only a few clients participate in each round of training.


\vspace{-2mm}
\subsection{ResNet18 experiments}\label{subsec:resnet}
\vspace{-2mm}
We further compare the above algorithms with a larger model: ResNet18 \citep{he2016deep} under varying data heterogeneity by setting the parameter of Dirichlet distribution as $0.5$ and $0.1$, respectively, where a small parameter value suggests severe data heterogeneity.  The experiment involves $N=10$ clients and $K=16$ local updates. We set $S=2$ in partial client participation. 

Figure \ref{fig:resnet_low} reports the test accuracy of full and partial client participation under mild data heterogeneity while Figure \ref{fig:resnet_high} presents the counterparts under severe data heterogeneity, where the bottom right one is smoothed by plotting the best-so-far result. Again, we observe that \fedavgm and \scaffoldm significantly outperform the vanilla \fedavg and \scaffold. Moreover,
for ResNet18 and severe data heterogeneity, \fedavgm and \scaffoldm exhibit notably greater advantages over their momentumless counterparts than for MLP scenarios under milder data heterogeneity. The observation demonstrates amplified advantages of the introduced momentum in larger models and severely heterogeneous data, which is aligned with our theoretical predictions and suggests the promising utility of our proposed methods in real-world applications. 



\begin{figure}[t]
    \centering
    \vspace{-12mm}
    \hspace{-3mm}
    \subfigure[Full (left) \& partial (right) participation under {\em mild} data heterogeneity]{
    \includegraphics[clip,width=0.34\textwidth]{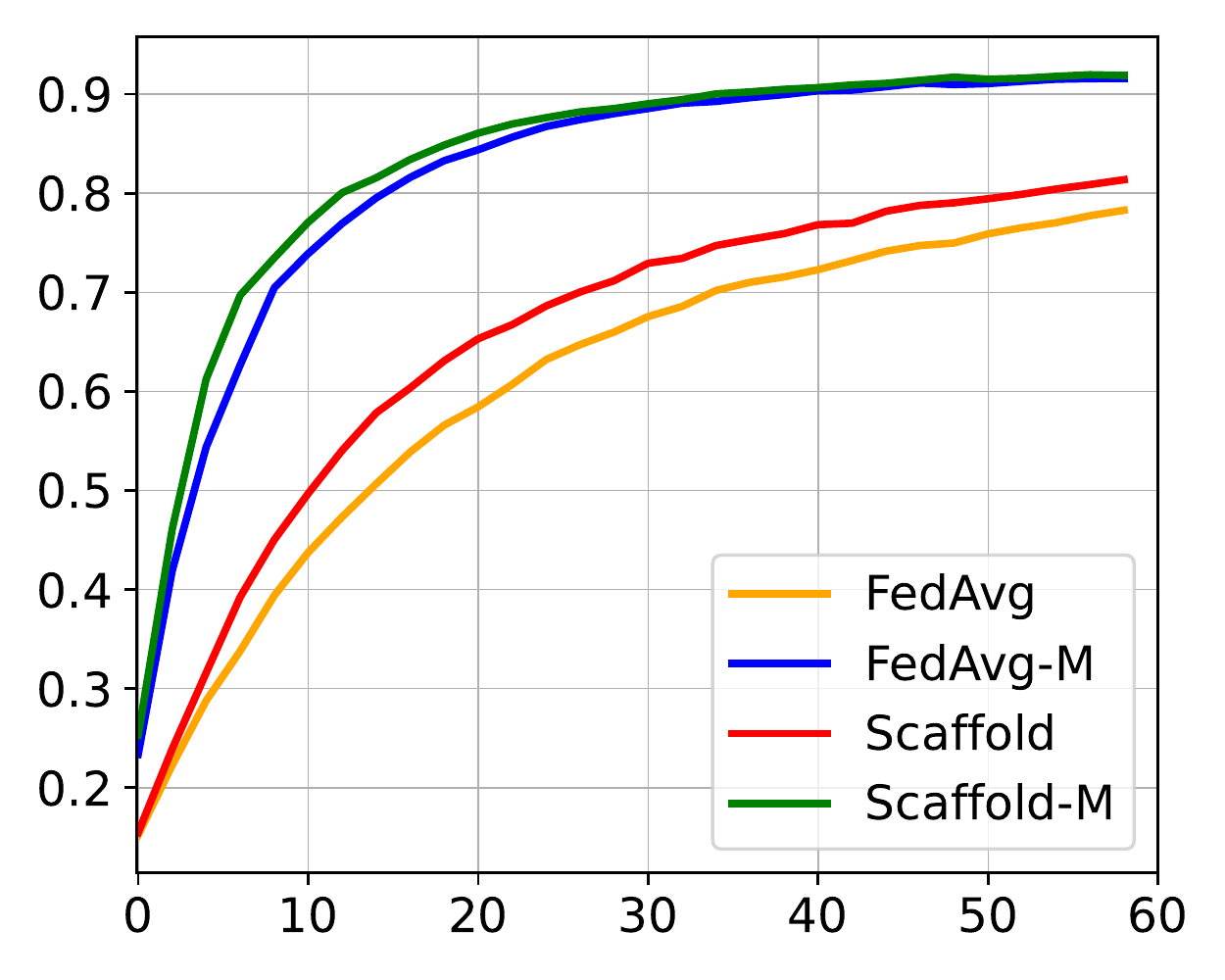} 
    \includegraphics[clip,width=0.34\textwidth]{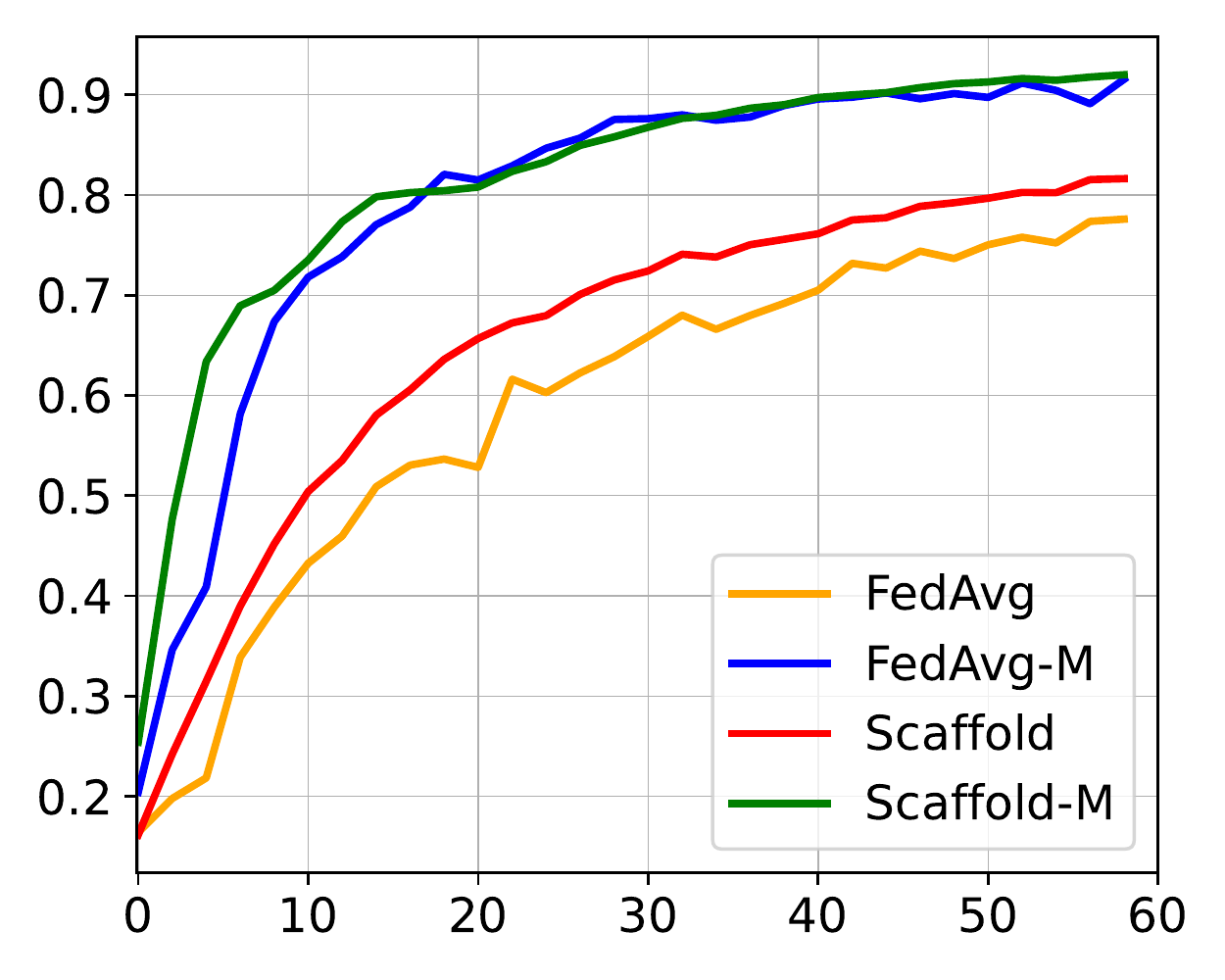}
    \label{fig:resnet_low}} 
    \hspace{-3mm}
    \vspace{-3mm}\\
    \hspace{-3mm}
     \subfigure[\mbox{Full (left) \& partial (right, best-so-far) participation under {\em severe}  heterogeneity}]{
     \includegraphics[clip,width=0.34\textwidth]{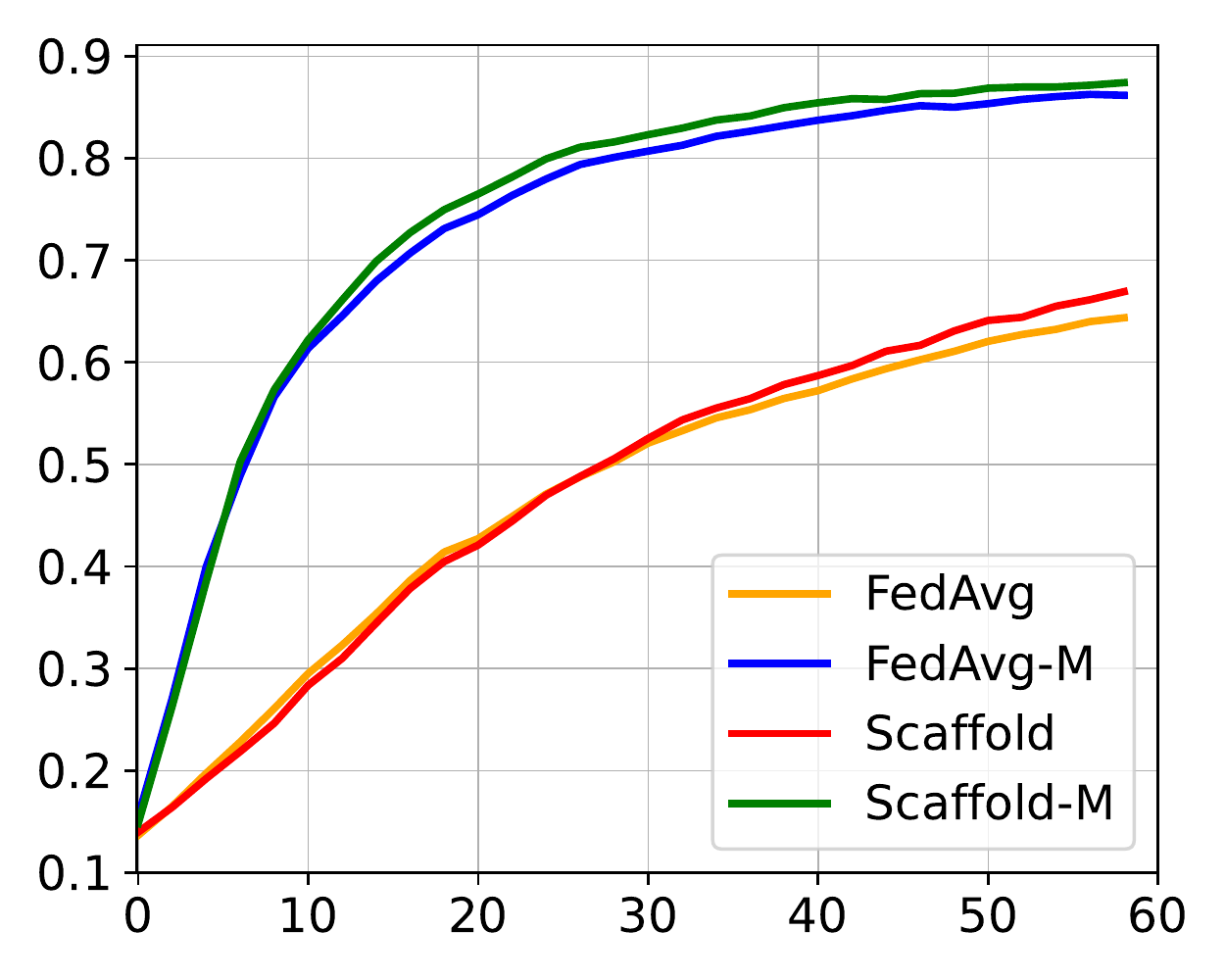} 
     \includegraphics[clip,width=0.34\textwidth]{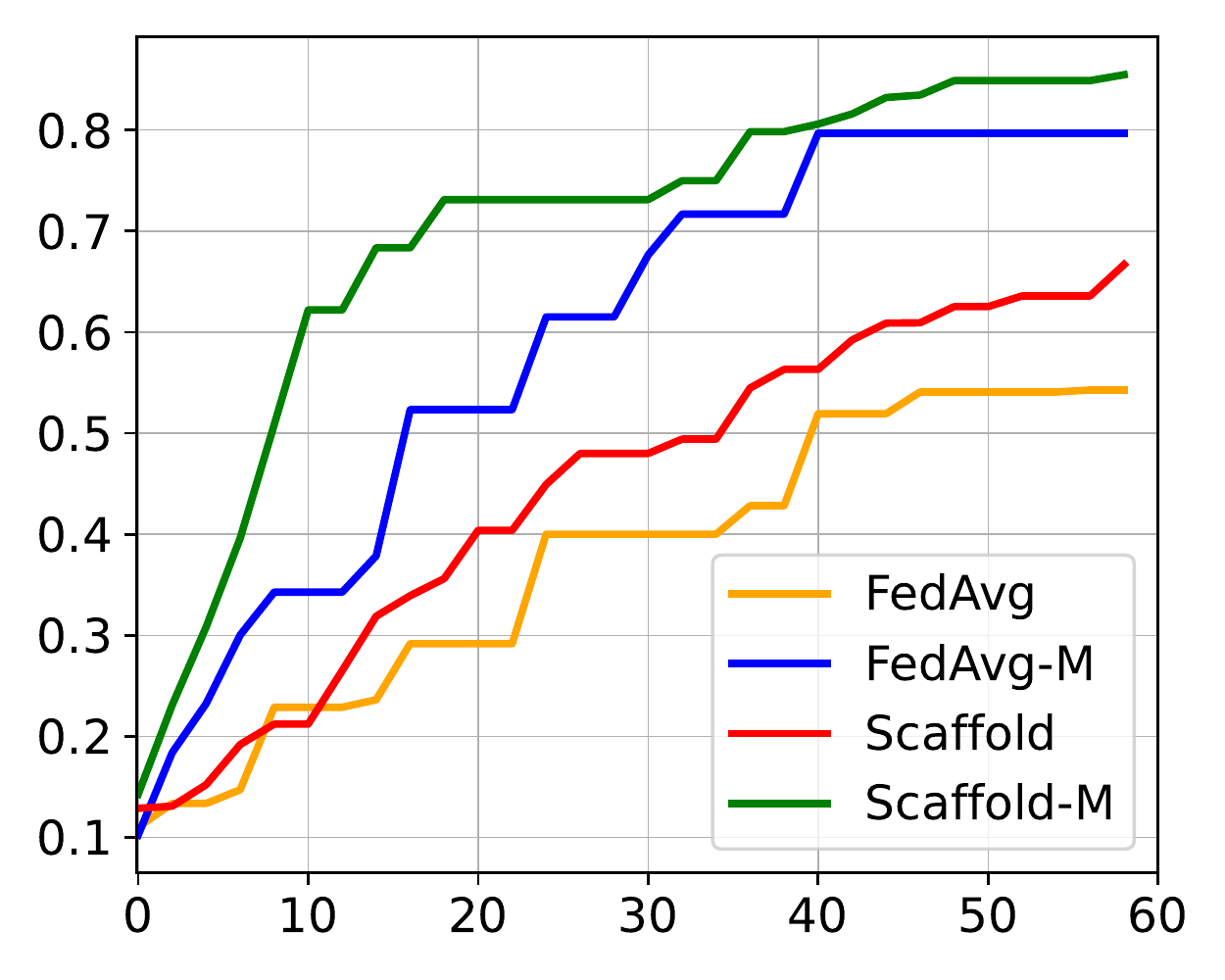} \label{fig:resnet_high}}
    \hspace{-3mm}
    \vspace{-3mm}
    \caption{Test accuracy of ResNet18 versus the number of communication rounds}
    \vspace{-1mm}
\end{figure}

\vspace{-3mm}
\section{Conclusion}
\vspace{-2mm}
We propose momentum variants of \fedavg and \scaffold under various client participation situations and objectives' smoothness. All the momentum variants make simple and practical adjustments to \fedavg and \scaffold yet obtain state-of-the-art performance among their peers, especially under severe data heterogeneity or small gradient variance. In particular, \fedavgm converges under unbounded data heterogeneity and admits constant local learning rates, giving the {\em first} neat convergence for \fedavg-type methods; \scaffoldm is the {\em first} FL method that outperforms \scaffold unconditionally. Experiments conducted support our theoretical findings.

\newpage

\bibliography{iclr2024_conference}

\begin{thebibliography}{69}
\providecommand{\natexlab}[1]{#1}
\providecommand{\url}[1]{\texttt{#1}}
\expandafter\ifx\csname urlstyle\endcsname\relax
  \providecommand{\doi}[1]{doi: #1}\else
  \providecommand{\doi}{doi: \begingroup \urlstyle{rm}\Url}\fi

\bibitem[Alghunaim(2023)]{alghunaim2023local}
Sulaiman~A Alghunaim.
\newblock Local exact-diffusion for decentralized optimization and learning.
\newblock \emph{arXiv:2302.00620}, 2023.

\bibitem[Alghunaim \& Yuan(2021)Alghunaim and Yuan]{alghunaim2021unified}
Sulaiman~A Alghunaim and Kun Yuan.
\newblock A unified and refined convergence analysis for non-convex
  decentralized learning.
\newblock \emph{arXiv preprint arXiv:2110.09993}, 2021.

\bibitem[Arjevani et~al.(2019)Arjevani, Carmon, Duchi, Foster, Srebro, and
  Woodworth]{Arjevani2019LowerBF}
Yossi Arjevani, Yair Carmon, John~C. Duchi, Dylan~J. Foster, Nathan Srebro, and
  Blake~E. Woodworth.
\newblock Lower bounds for non-convex stochastic optimization.
\newblock \emph{ArXiv}, abs/1912.02365, 2019.

\bibitem[Chen et~al.(2020{\natexlab{a}})Chen, Jin, Sun, and Yin]{chen2020vafl}
Tianyi Chen, Xiao Jin, Yuejiao Sun, and Wotao Yin.
\newblock Vafl: a method of vertical asynchronous federated learning.
\newblock \emph{arXiv preprint arXiv:2007.06081}, 2020{\natexlab{a}}.

\bibitem[Chen et~al.(2023)Chen, Xiao, and Balasubramanian]{chen2023optimal}
Xuxing Chen, Tesi Xiao, and Krishnakumar Balasubramanian.
\newblock Optimal algorithms for stochastic bilevel optimization under relaxed
  smoothness conditions.
\newblock \emph{arXiv preprint arXiv:2306.12067}, 2023.

\bibitem[Chen et~al.(2020{\natexlab{b}})Chen, Ning, Slawski, and
  Rangwala]{chen2020asynchronous}
Yujing Chen, Yue Ning, Martin Slawski, and Huzefa Rangwala.
\newblock Asynchronous online federated learning for edge devices with non-iid
  data.
\newblock In \emph{2020 IEEE International Conference on Big Data (Big Data)},
  pp.\  15--24. IEEE, 2020{\natexlab{b}}.

\bibitem[Cutkosky \& Orabona(2019)Cutkosky and Orabona]{cutkosky2019momentum}
Ashok Cutkosky and Francesco Orabona.
\newblock Momentum-based variance reduction in non-convex sgd.
\newblock \emph{Advances in neural information processing systems}, 32, 2019.

\bibitem[Das et~al.(2022)Das, Acharya, Hashemi, Sanghavi, Dhillon, and
  Topcu]{das2022faster}
Rudrajit Das, Anish Acharya, Abolfazl Hashemi, Sujay Sanghavi, Inderjit~S
  Dhillon, and Ufuk Topcu.
\newblock Faster non-convex federated learning via global and local momentum.
\newblock In \emph{Uncertainty in Artificial Intelligence}, pp.\  496--506.
  PMLR, 2022.

\bibitem[Di~Lorenzo \& Scutari(2016)Di~Lorenzo and Scutari]{di2016next}
P.~Di~Lorenzo and G.~Scutari.
\newblock Next: In-network nonconvex optimization.
\newblock \emph{IEEE Transactions on Signal and Information Processing over
  Networks}, 2\penalty0 (2):\penalty0 120--136, 2016.

\bibitem[Durmus et~al.(2021)Durmus, Yue, Ramon, Matthew, Paul, and
  Venkatesh]{durmus2021federated}
Alp~Emre Durmus, Zhao Yue, Matas Ramon, Mattina Matthew, Whatmough Paul, and
  Saligrama Venkatesh.
\newblock Federated learning based on dynamic regularization.
\newblock In \emph{International Conference on Learning Representations}, 2021.

\bibitem[Fallah et~al.(2020)Fallah, Mokhtari, and
  Ozdaglar]{fallah2020personalized}
Alireza Fallah, Aryan Mokhtari, and Asuman Ozdaglar.
\newblock Personalized federated learning with theoretical guarantees: A
  model-agnostic meta-learning approach.
\newblock \emph{Advances in Neural Information Processing Systems},
  33:\penalty0 3557--3568, 2020.

\bibitem[Fang et~al.(2018)Fang, Li, Lin, and Zhang]{Fang2018SPIDERNN}
Cong Fang, Chris~Junchi Li, Zhouchen Lin, and Tong Zhang.
\newblock Spider: Near-optimal non-convex optimization via stochastic path
  integrated differential estimator.
\newblock In \emph{Advances in Neural Information Processing Systems}, 2018.

\bibitem[Fatkhullin et~al.(2023)Fatkhullin, Tyurin, and
  Richt{\'a}rik]{fatkhullin2023momentum}
Ilyas Fatkhullin, Alexander Tyurin, and Peter Richt{\'a}rik.
\newblock Momentum provably improves error feedback!
\newblock \emph{arXiv preprint arXiv:2305.15155}, 2023.

\bibitem[Guo et~al.(2021)Guo, Xu, Yin, Jin, and Yang]{guo2021novel}
Zhishuai Guo, Yi~Xu, Wotao Yin, Rong Jin, and Tianbao Yang.
\newblock A novel convergence analysis for algorithms of the adam family.
\newblock \emph{arXiv preprint arXiv:2112.03459}, 2021.

\bibitem[Haddadpour et~al.(2021)Haddadpour, Kamani, Mokhtari, and
  Mahdavi]{haddadpour2021federated}
Farzin Haddadpour, Mohammad~Mahdi Kamani, Aryan Mokhtari, and Mehrdad Mahdavi.
\newblock Federated learning with compression: Unified analysis and sharp
  guarantees.
\newblock In \emph{International Conference on Artificial Intelligence and
  Statistics}, pp.\  2350--2358. PMLR, 2021.

\bibitem[He et~al.(2016)He, Zhang, Ren, and Sun]{he2016deep}
Kaiming He, Xiangyu Zhang, Shaoqing Ren, and Jian Sun.
\newblock Deep residual learning for image recognition.
\newblock In \emph{IEEE Conference on Computer Vision and Pattern Recognition
  (CVPR)}, pp.\  770--778, 2016.

\bibitem[He et~al.(2023{\natexlab{a}})He, Huang, Chen, Yin, and
  Yuan]{he2023lower}
Yutong He, Xinmeng Huang, Yiming Chen, Wotao Yin, and Kun Yuan.
\newblock Lower bounds and accelerated algorithms in distributed stochastic
  optimization with communication compression.
\newblock \emph{arXiv preprint arXiv:2305.07612}, 2023{\natexlab{a}}.

\bibitem[He et~al.(2023{\natexlab{b}})He, Huang, and Yuan]{he2023unbiased}
Yutong He, Xinmeng Huang, and Kun Yuan.
\newblock Unbiased compression saves communication in distributed optimization:
  When and how much?
\newblock In \emph{Thirty-seventh Conference on Neural Information Processing
  Systems}, 2023{\natexlab{b}}.

\bibitem[Hsu et~al.(2019)Hsu, Qi, and Brown]{hsu2019measuring}
Tzu-Ming~Harry Hsu, Hang Qi, and Matthew Brown.
\newblock Measuring the effects of non-identical data distribution for
  federated visual classification.
\newblock \emph{arXiv preprint arXiv:1909.06335}, 2019.

\bibitem[Huang et~al.(2023)Huang, Li, and Pu]{huang2023distributed}
Kun Huang, Xiao Li, and Shi Pu.
\newblock Distributed stochastic optimization under a general variance
  condition.
\newblock \emph{arXiv preprint arXiv:2301.12677}, 2023.

\bibitem[Huang et~al.(2024)Huang, Li, and Li]{huang2024stochastic}
Xinmeng Huang, Ping Li, and Xiaoyun Li.
\newblock Stochastic controlled averaging for federated learning with
  communication compression.
\newblock In \emph{The Twelfth International Conference on Learning
  Representations}, 2024.

\bibitem[Jin et~al.(2022)Jin, Ren, Zhou, Lyu, Liu, and Dou]{jin2022accelerated}
Jiayin Jin, Jiaxiang Ren, Yang Zhou, Lingjuan Lyu, Ji~Liu, and Dejing Dou.
\newblock Accelerated federated learning with decoupled adaptive optimization.
\newblock In \emph{International Conference on Machine Learning}, pp.\
  10298--10322. PMLR, 2022.

\bibitem[Kairouz et~al.(2021)Kairouz, McMahan, Avent, Bellet, Bennis, Bhagoji,
  Bonawitz, Charles, Cormode, Cummings, et~al.]{kairouz2021advances}
Peter Kairouz, H~Brendan McMahan, Brendan Avent, Aur{\'e}lien Bellet, Mehdi
  Bennis, Arjun~Nitin Bhagoji, Kallista Bonawitz, Zachary Charles, Graham
  Cormode, Rachel Cummings, et~al.
\newblock Advances and open problems in federated learning.
\newblock \emph{Foundations and Trends{\textregistered} in Machine Learning},
  14\penalty0 (1--2):\penalty0 1--210, 2021.

\bibitem[Karimireddy et~al.(2020{\natexlab{a}})Karimireddy, Jaggi, Kale, Mohri,
  Reddi, Stich, and Suresh]{karimireddy2020mime}
Sai~Praneeth Karimireddy, Martin Jaggi, Satyen Kale, Mehryar Mohri, Sashank~J
  Reddi, Sebastian~U Stich, and Ananda~Theertha Suresh.
\newblock Mime: Mimicking centralized stochastic algorithms in federated
  learning.
\newblock \emph{arXiv:2008.03606}, 2020{\natexlab{a}}.

\bibitem[Karimireddy et~al.(2020{\natexlab{b}})Karimireddy, Kale, Mohri, Reddi,
  Stich, and Suresh]{karimireddy2020scaffold}
Sai~Praneeth Karimireddy, Satyen Kale, Mehryar Mohri, Sashank Reddi, Sebastian
  Stich, and Ananda~Theertha Suresh.
\newblock Scaffold: Stochastic controlled averaging for federated learning.
\newblock In \emph{International Conference on Machine Learning}, pp.\
  5132--5143. PMLR, 2020{\natexlab{b}}.

\bibitem[Khanduri et~al.(2021)Khanduri, Sharma, Yang, Hong, Liu, Rajawat, and
  Varshney]{khanduri2021stem}
Prashant Khanduri, Pranay Sharma, Haibo Yang, Mingyi Hong, Jia Liu, Ketan
  Rajawat, and Pramod Varshney.
\newblock Stem: A stochastic two-sided momentum algorithm achieving
  near-optimal sample and communication complexities for federated learning.
\newblock \emph{Advances in Neural Information Processing Systems},
  34:\penalty0 6050--6061, 2021.

\bibitem[Kim et~al.(2022)Kim, Kim, and Han]{kim2022communication}
Geeho Kim, Jinkyu Kim, and Bohyung Han.
\newblock Communication-efficient federated learning with acceleration of
  global momentum.
\newblock \emph{arXiv:2201.03172}, 2022.

\bibitem[Koloskova et~al.(2020)Koloskova, Loizou, Boreiri, Jaggi, and
  Stich]{koloskova2020unified}
Anastasia Koloskova, Nicolas Loizou, Sadra Boreiri, Martin Jaggi, and Sebastian
  Stich.
\newblock A unified theory of decentralized sgd with changing topology and
  local updates.
\newblock In \emph{International Conference on Machine Learning}, pp.\
  5381--5393. PMLR, 2020.

\bibitem[Kone{\v{c}}n{\`y} et~al.(2016)Kone{\v{c}}n{\`y}, McMahan, Yu,
  Richt{\'a}rik, Suresh, and Bacon]{konevcny2016federated}
Jakub Kone{\v{c}}n{\`y}, H~Brendan McMahan, Felix~X Yu, Peter Richt{\'a}rik,
  Ananda~Theertha Suresh, and Dave Bacon.
\newblock Federated learning: Strategies for improving communication
  efficiency.
\newblock \emph{arXiv preprint arXiv:1610.05492}, 2016.

\bibitem[Krizhevsky \& Hinton(2009)Krizhevsky and
  Hinton]{krizhevsky2009learning}
A.~Krizhevsky and G.~Hinton.
\newblock Learning multiple layers of features from tiny images.
\newblock \emph{Master's thesis, Department of Computer Science, University of
  Toronto}, 2009.

\bibitem[Li et~al.(2020)Li, Sahu, Zaheer, Sanjabi, Talwalkar, and
  Smith]{li2020federated}
Tian Li, Anit~Kumar Sahu, Manzil Zaheer, Maziar Sanjabi, Ameet Talwalkar, and
  Virginia Smith.
\newblock Federated optimization in heterogeneous networks.
\newblock \emph{Proceedings of Machine learning and systems}, 2:\penalty0
  429--450, 2020.

\bibitem[Li et~al.(2019)Li, Huang, Yang, Wang, and Zhang]{li2019convergence}
Xiang Li, Kaixuan Huang, Wenhao Yang, Shusen Wang, and Zhihua Zhang.
\newblock On the convergence of fedavg on non-iid data.
\newblock In \emph{International Conference on Learning Representations}, 2019.

\bibitem[Liang et~al.(2019)Liang, Shen, Liu, Pan, Chen, and
  Cheng]{liang2019variance}
Xianfeng Liang, Shuheng Shen, Jingchang Liu, Zhen Pan, Enhong Chen, and Yifei
  Cheng.
\newblock Variance reduced local sgd with lower communication complexity.
\newblock \emph{arXiv preprint arXiv:1912.12844}, 2019.

\bibitem[Lin et~al.(2020)Lin, Stich, Patel, and Jaggi]{lin2020don}
Tao Lin, Sebastian~U Stich, Kumar~Kshitij Patel, and Martin Jaggi.
\newblock Don't use large mini-batches, use local sgd.
\newblock In \emph{International Conference on Learning Representations}, 2020.

\bibitem[Liu et~al.(2020)Liu, Gao, and Yin]{liu2020improved}
Yanli Liu, Yuan Gao, and Wotao Yin.
\newblock An improved analysis of stochastic gradient descent with momentum.
\newblock \emph{Advances in Neural Information Processing Systems},
  33:\penalty0 18261--18271, 2020.

\bibitem[McMahan et~al.(2017{\natexlab{a}})McMahan, Moore, Ramage, Hampson, and
  y~Arcas]{mcmahan2017communication}
Brendan McMahan, Eider Moore, Daniel Ramage, Seth Hampson, and Blaise~Aguera
  y~Arcas.
\newblock Communication-efficient learning of deep networks from decentralized
  data.
\newblock In \emph{Artificial Intelligence and Statistics}, 2017{\natexlab{a}}.

\bibitem[McMahan et~al.(2017{\natexlab{b}})McMahan, Moore, Ramage, Hampson, and
  y~Arcas]{McMahan2017CommunicationEfficientLO}
H.~B. McMahan, Eider Moore, Daniel Ramage, Seth Hampson, and Blaise~Ag{\"u}era
  y~Arcas.
\newblock Communication-efficient learning of deep networks from decentralized
  data.
\newblock In \emph{AISTATS}, 2017{\natexlab{b}}.

\bibitem[Mitra et~al.(2021)Mitra, Jaafar, Pappas, and Hassani]{mitra2021linear}
Aritra Mitra, Rayana Jaafar, George~J Pappas, and Hamed Hassani.
\newblock Linear convergence in federated learning: Tackling client
  heterogeneity and sparse gradients.
\newblock \emph{Advances in Neural Information Processing Systems},
  34:\penalty0 14606--14619, 2021.

\bibitem[Murata \& Suzuki(2021)Murata and Suzuki]{murata2021bias}
Tomoya Murata and Taiji Suzuki.
\newblock Bias-variance reduced local sgd for less heterogeneous federated
  learning.
\newblock In \emph{International Conference on Machine Learning}, pp.\
  7872--7881. PMLR, 2021.

\bibitem[Nguyen et~al.(2022)Nguyen, Alghunaim, Yuan, and
  Uribe]{nguyen2022performance}
Edward Duc~Hien Nguyen, Sulaiman~A Alghunaim, Kun Yuan, and C{\'e}sar~A Uribe.
\newblock On the performance of gradient tracking with local updates.
\newblock \emph{arXiv:2210.04757}, 2022.

\bibitem[Nguyen et~al.(2017)Nguyen, Liu, Scheinberg, and
  Tak{\'a}{\v{c}}]{nguyen2017sarah}
Lam~M Nguyen, Jie Liu, Katya Scheinberg, and Martin Tak{\'a}{\v{c}}.
\newblock Sarah: A novel method for machine learning problems using stochastic
  recursive gradient.
\newblock In \emph{International Conference on Machine Learning}, pp.\
  2613--2621. PMLR, 2017.

\bibitem[Patel et~al.(2022)Patel, Wang, Woodworth, Bullins, and
  Srebro]{patel2022towards}
Kumar~Kshitij Patel, Lingxiao Wang, Blake~E Woodworth, Brian Bullins, and Nati
  Srebro.
\newblock Towards optimal communication complexity in distributed non-convex
  optimization.
\newblock \emph{Advances in Neural Information Processing Systems},
  35:\penalty0 13316--13328, 2022.

\bibitem[Pillutla et~al.(2022)Pillutla, Malik, Mohamed, Rabbat, Sanjabi, and
  Xiao]{pillutla2022federated}
Krishna Pillutla, Kshitiz Malik, Abdel-Rahman Mohamed, Mike Rabbat, Maziar
  Sanjabi, and Lin Xiao.
\newblock Federated learning with partial model personalization.
\newblock In \emph{International Conference on Machine Learning}, pp.\
  17716--17758. PMLR, 2022.

\bibitem[Polyak(1964)]{polyak1964some}
Boris~T Polyak.
\newblock Some methods of speeding up the convergence of iteration methods.
\newblock \emph{Ussr computational mathematics and mathematical physics},
  4\penalty0 (5):\penalty0 1--17, 1964.

\bibitem[Pu \& Nedi{\'c}(2020)Pu and Nedi{\'c}]{pu2020distributed}
Shi Pu and Angelia Nedi{\'c}.
\newblock Distributed stochastic gradient tracking methods.
\newblock \emph{Mathematical Programming}, pp.\  1--49, 2020.

\bibitem[Reddi et~al.(2021)Reddi, Charles, Zaheer, Garrett, Rush,
  Kone{\v{c}}n{\`y}, Kumar, and McMahan]{reddi2021adaptive}
Sashank~J Reddi, Zachary Charles, Manzil Zaheer, Zachary Garrett, Keith Rush,
  Jakub Kone{\v{c}}n{\`y}, Sanjiv Kumar, and Hugh~Brendan McMahan.
\newblock Adaptive federated optimization.
\newblock In \emph{International Conference on Learning Representations}, 2021.

\bibitem[Reisizadeh et~al.(2020)Reisizadeh, Mokhtari, Hassani, Jadbabaie, and
  Pedarsani]{reisizadeh2020fedpaq}
Amirhossein Reisizadeh, Aryan Mokhtari, Hamed Hassani, Ali Jadbabaie, and
  Ramtin Pedarsani.
\newblock Fedpaq: A communication-efficient federated learning method with
  periodic averaging and quantization.
\newblock In \emph{International Conference on Artificial Intelligence and
  Statistics}, pp.\  2021--2031. PMLR, 2020.

\bibitem[Rizk et~al.(2022)Rizk, Vlaski, and Sayed]{rizk2022privatized}
Elsa Rizk, Stefan Vlaski, and Ali~H Sayed.
\newblock Privatized graph federated learning.
\newblock \emph{arXiv:2203.07105}, 2022.

\bibitem[Stich(2019)]{stich2019local}
Sebastian~Urban Stich.
\newblock Local sgd converges fast and communicates little.
\newblock In \emph{International Conference on Learning Representations}, 2019.

\bibitem[T~Dinh et~al.(2020)T~Dinh, Tran, and Nguyen]{t2020personalized}
Canh T~Dinh, Nguyen Tran, and Josh Nguyen.
\newblock Personalized federated learning with moreau envelopes.
\newblock \emph{Advances in Neural Information Processing Systems},
  33:\penalty0 21394--21405, 2020.

\bibitem[Tan et~al.(2022)Tan, Yu, Cui, and Yang]{tan2022towards}
Alysa~Ziying Tan, Han Yu, Lizhen Cui, and Qiang Yang.
\newblock Towards personalized federated learning.
\newblock \emph{IEEE Transactions on Neural Networks and Learning Systems},
  2022.

\bibitem[Wang \& Joshi(2021)Wang and Joshi]{wang2021cooperative}
Jianyu Wang and Gauri Joshi.
\newblock Cooperative sgd: A unified framework for the design and analysis of
  local-update sgd algorithms.
\newblock \emph{The Journal of Machine Learning Research}, 22\penalty0
  (1):\penalty0 9709--9758, 2021.

\bibitem[Wang et~al.(2020{\natexlab{a}})Wang, Liu, Liang, Joshi, and
  Poor]{wang2020tackling}
Jianyu Wang, Qinghua Liu, Hao Liang, Gauri Joshi, and H~Vincent Poor.
\newblock Tackling the objective inconsistency problem in heterogeneous
  federated optimization.
\newblock \emph{Advances in neural information processing systems},
  33:\penalty0 7611--7623, 2020{\natexlab{a}}.

\bibitem[Wang et~al.(2020{\natexlab{b}})Wang, Tantia, Ballas, and
  Rabbat]{wang2020slowmo}
Jianyu Wang, Vinayak Tantia, Nicolas Ballas, and Michael Rabbat.
\newblock {SlowMo}: Improving communication-efficient distributed sgd with slow
  momentum.
\newblock In \emph{International Conference on Learning Representations},
  2020{\natexlab{b}}.

\bibitem[Xin et~al.(2020)Xin, Khan, and Kar]{xin2020improved}
Ran Xin, Usman~A Khan, and Soummya Kar.
\newblock An improved convergence analysis for decentralized online stochastic
  non-convex optimization.
\newblock \emph{IEEE Transactions on Signal Processing}, 2020.

\bibitem[Xu et~al.(2021{\natexlab{a}})Xu, Qu, Xiang, and
  Gao]{xu2021asynchronous}
Chenhao Xu, Youyang Qu, Yong Xiang, and Longxiang Gao.
\newblock Asynchronous federated learning on heterogeneous devices: A survey.
\newblock \emph{arXiv preprint arXiv:2109.04269}, 2021{\natexlab{a}}.

\bibitem[Xu et~al.(2015)Xu, Zhu, Soh, and Xie]{xu2015augmented}
J.~Xu, S.~Zhu, Y.~C. Soh, and L.~Xie.
\newblock Augmented distributed gradient methods for multi-agent optimization
  under uncoordinated constant stepsizes.
\newblock In \emph{IEEE Conference on Decision and Control (CDC)}, pp.\
  2055--2060, Osaka, Japan, 2015.

\bibitem[Xu et~al.(2021{\natexlab{b}})Xu, Wang, Wang, and Yao]{xu2021fedcm}
Jing Xu, Sen Wang, Liwei Wang, and Andrew Chi-Chih Yao.
\newblock {FedCM}: Federated learning with client-level momentum.
\newblock \emph{arXiv:2106.10874}, 2021{\natexlab{b}}.

\bibitem[Yan et~al.(2018)Yan, Yang, Li, Lin, and Yang]{Yan2018AUA}
Yan Yan, Tianbao Yang, Zhe Li, Qihang Lin, and Yi~Yang.
\newblock A unified analysis of stochastic momentum methods for deep learning.
\newblock In \emph{International Joint Conference on Artificial Intelligence},
  2018.

\bibitem[Yang et~al.(2021)Yang, Fang, and Liu]{yang2021achieving}
Haibo Yang, Minghong Fang, and Jia Liu.
\newblock Achieving linear speedup with partial worker participation in non-iid
  federated learning.
\newblock In \emph{International Conference on Learning Representations}, 2021.

\bibitem[Yu et~al.(2019{\natexlab{a}})Yu, Jin, and Yang]{yu2019linear}
Hao Yu, Rong Jin, and Sen Yang.
\newblock On the linear speedup analysis of communication efficient momentum
  {SGD} for distributed non-convex optimization.
\newblock In \emph{International Conference on Machine Learning}, pp.\
  7184--7193. PMLR, 2019{\natexlab{a}}.

\bibitem[Yu et~al.(2019{\natexlab{b}})Yu, Yang, and Zhu]{yu2019parallel}
Hao Yu, Sen Yang, and Shenghuo Zhu.
\newblock Parallel restarted sgd with faster convergence and less
  communication: Demystifying why model averaging works for deep learning.
\newblock In \emph{Proceedings of the AAAI Conference on Artificial
  Intelligence}, volume~33, pp.\  5693--5700, 2019{\natexlab{b}}.

\bibitem[Yuan et~al.(2019)Yuan, Ying, Zhao, and Sayed]{Yuan2019ExactDF}
Kun Yuan, Bicheng Ying, Xiaochuan Zhao, and Ali~H. Sayed.
\newblock Exact diffusion for distributed optimization and learning—part i:
  Algorithm development.
\newblock \emph{IEEE Transactions on Signal Processing}, 67:\penalty0 708--723,
  2019.

\bibitem[Yuan et~al.(2020)Yuan, Alghunaim, Ying, and Sayed]{yuan2020influence}
Kun Yuan, Sulaiman~A Alghunaim, Bicheng Ying, and Ali~H Sayed.
\newblock On the influence of bias-correction on distributed stochastic
  optimization.
\newblock \emph{IEEE Transactions on Signal Processing}, 2020.

\bibitem[Yuan et~al.(2021)Yuan, Chen, Huang, Zhang, Pan, Xu, and
  Yin]{Yuan2021DecentLaMDM}
Kun Yuan, Yiming Chen, Xinmeng Huang, Yingya Zhang, Pan Pan, Yinghui Xu, and
  Wotao Yin.
\newblock {DecentLaM}: Decentralized momentum sgd for large-batch deep
  training.
\newblock \emph{International Conference on Computer Vision}, pp.\  3009--3019,
  2021.

\bibitem[Yuan et~al.(2023)Yuan, Alghunaim, and Huang]{yuan2023removing}
Kun Yuan, Sulaiman~A Alghunaim, and Xinmeng Huang.
\newblock Removing data heterogeneity influence enhances network topology
  dependence of decentralized sgd.
\newblock \emph{Journal of Machine Learning Research}, 24\penalty0
  (280):\penalty0 1--53, 2023.

\bibitem[Yurri(2004)]{Nesterov2004Intro}
Nesterov Yurri.
\newblock \emph{Introductory Lectures on Convex Optimization: A Basic Course}.
\newblock Kluwer Academic Publishers, Norwell, 2004.

\bibitem[Zhang et~al.(2021)Zhang, Hong, Dhople, Yin, and Liu]{zhang2021fedpd}
Xinwei Zhang, Mingyi Hong, Sairaj Dhople, Wotao Yin, and Yang Liu.
\newblock {FedPD}: A federated learning framework with adaptivity to non-iid
  data.
\newblock \emph{IEEE Transactions on Signal Processing}, 69:\penalty0
  6055--6070, 2021.

\bibitem[Zhou \& Cong(2017)Zhou and Cong]{zhou2017convergence}
Fan Zhou and Guojing Cong.
\newblock On the convergence properties of a $ k $-step averaging stochastic
  gradient descent algorithm for nonconvex optimization.
\newblock \emph{arXiv preprint arXiv:1708.01012}, 2017.

\end{thebibliography}
\bibliographystyle{iclr2024_conference}

\newpage
\appendix
\section{Preliminaries of proofs}
Let $\mathcal{F}^0=\emptyset$ and 
$\mathcal{F}^{r,k}_i:= \sigma(\{x^{r,j}_i\}_{0\leq j\leq k}\cup \mathcal{F}^r)$ 
and  $\mathcal{F}^{r+1}:=\sigma(\cup_i \mathcal{F}^{r,K}_i)$ for all $r\geq 0$ where $\sigma(\cdot)$ indicates the $\sigma$-algebra. Let $\expect_r [\cdot]:=\expect [\cdot|\mathcal{F}^r]$ be the expectation, conditioned on the filtration $\mathcal{F}^{r}$, with respect to the random variables $\{\cS^r,\{\xi_i^{r,k}\}_{1\leq i\leq N,0\leq k<K}\}$ in the $r$-th iteration. We also use $\EE[\cdot]$ to denote the global expectation over all randomness in algorithms. Through out the proofs, we use $\sum_i$ to represent the sum over $i\in \{1,\dots,N\}$, while $\sum_{i\in \cS^r}$ denotes the sum over $i\in \cS^r$. 
Similarly, we use $\sum_k$ to represent the sum of $k\in\{0,\dots,K-1\}$.
For all $r\geq 0$, we define the following auxiliary variables to facilitate proofs:
\begin{align*}
        \cE_r&:= \expect [\|\nabla f(x^r)-g^{r+1}\|^2], \\
        U_r &:= \frac{1}{NK}\sum_{i}\sum_{k}\expect [\|x_i^{r,k}-x^r\|]^2, \\
        \zeta_i^{r,k}&:= \expect [x_i^{r,k+1}-x_i^{r,k} | \mathcal{F}^{r,k}_i], \\
        \Xi_r&:= \frac{1}{N}\sum_{i=1}^N\expect [\|\zeta^{r,0}_i\|^2],\\
        V_r&:= \frac{1}{N}\sum_{i=1}^N\expect[\|c^r_i-\nabla f_i(x^{r-1})\|^2].
\end{align*}
We remark that quantity $V_r$ is only used in the analysis of \scaffold-based algorithms.
Throughout the appendix, we let $\Delta  :=f(x^0)-f^*, G_0 :=\frac{1}{N}\sum_{i}\|\nabla f_i(x^0)\|^2$, $x^{-1}:=x^0$ and $\mathcal{E}_{-1}:=\expect[\|\nabla f(x^0)-g^0\|^2]$. We will use the following foundational lemma for all our algorithms.

\begin{lemma}\label{lem:onestep}
    Under Assumption \ref{asp:smooth}, if $\gamma L\leq \frac{1}{24}$, the following holds all $r\geq 0$:
    \begin{equation*}
        \expect [f(x^{r+1})] \leq \expect [f(x^r)] - \frac{11\gamma }{24}\EE[\|\nabla f(x^r)\|^2] + \frac{13\gamma }{24} \mathcal{E}_{r}.
    \end{equation*}
\end{lemma}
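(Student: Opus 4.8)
The plan is to view $g^{r+1}$ as an inexact gradient and run the textbook smoothness descent argument, tracking constants so that the error term emerges exactly as $\mathcal{E}_r$. Since $f=\frac1N\sum_i f_i$ is an average of $L$-smooth functions it is itself $L$-smooth, so the quadratic upper bound applied to the global update $x^{r+1}=x^r-\gamma g^{r+1}$ gives
\begin{equation*}
f(x^{r+1}) \leq f(x^r) - \gamma\langle \nabla f(x^r), g^{r+1}\rangle + \frac{L\gamma^2}{2}\|g^{r+1}\|^2.
\end{equation*}
Everything after this is algebra on a single realization, and I would only take expectations at the very end, so no unbiasedness or independence of $g^{r+1}$ is used.

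Next I would introduce the error $e^{r+1} := \nabla f(x^r) - g^{r+1}$ and write $g^{r+1} = \nabla f(x^r) - e^{r+1}$. Then $-\gamma\langle\nabla f(x^r), g^{r+1}\rangle = -\gamma\|\nabla f(x^r)\|^2 + \gamma\langle \nabla f(x^r), e^{r+1}\rangle$, and I would control the cross term by Young's inequality with unit weight, $\gamma\langle \nabla f(x^r), e^{r+1}\rangle \leq \frac{\gamma}{2}\|\nabla f(x^r)\|^2 + \frac{\gamma}{2}\|e^{r+1}\|^2$, while bounding the quadratic term by $\|g^{r+1}\|^2 = \|\nabla f(x^r) - e^{r+1}\|^2 \leq 2\|\nabla f(x^r)\|^2 + 2\|e^{r+1}\|^2$. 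Collecting coefficients yields
\begin{equation*}
f(x^{r+1}) \leq f(x^r) - \gamma\Big(\tfrac{1}{2} - L\gamma\Big)\|\nabla f(x^r)\|^2 + \gamma\Big(\tfrac{1}{2} + L\gamma\Big)\|e^{r+1}\|^2.
\end{equation*}

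Finally I would invoke the stepsize restriction $\gamma L \leq \frac{1}{24}$, which gives $\tfrac{1}{2} - L\gamma \geq \tfrac{11}{24}$ (keeping this coefficient positive) and $\tfrac{1}{2} + L\gamma \leq \tfrac{13}{24}$, turning the display into $f(x^{r+1}) \leq f(x^r) - \frac{11\gamma}{24}\|\nabla f(x^r)\|^2 + \frac{13\gamma}{24}\|e^{r+1}\|^2$. Taking total expectation and recalling $\mathcal{E}_r = \expect[\|\nabla f(x^r) - g^{r+1}\|^2] = \expect[\|e^{r+1}\|^2]$ gives the claim.

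There is no genuine obstacle here; the only thing demanding care is the constant bookkeeping, and specifically keeping the gradient coefficient $\tfrac12 - L\gamma$ positive, which is exactly what the constraint $\gamma L\le \frac1{24}$ guarantees and what pins the stated constants to $\frac{11}{24},\frac{13}{24}$. I would also note that a slicker variant replaces the Young step by the three-point identity $-\langle a,b\rangle = \tfrac12\|a-b\|^2 - \tfrac12\|a\|^2 - \tfrac12\|b\|^2$, which directly produces coefficients $\tfrac12,\tfrac12$ after discarding the nonpositive $-\tfrac{\gamma}{2}(1-L\gamma)\|g^{r+1}\|^2$ term under the milder $\gamma L\le 1$; since $\tfrac12 \ge \tfrac{11}{24}$ and $\tfrac12 \le \tfrac{13}{24}$, this already dominates the stated bound. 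Either way, the decisive design choice is that $\mathcal{E}_r$ is defined as the raw mean-squared gradient-estimation error, so this descent lemma is agnostic to how $g^{r+1}$ is constructed and can serve all four algorithms uniformly, deferring the algorithm-specific work of bounding $\mathcal{E}_r$ to later lemmas.
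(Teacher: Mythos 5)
Your proposal is correct and follows essentially the same route as the paper's proof: apply the $L$-smoothness quadratic bound to $x^{r+1}=x^r-\gamma g^{r+1}$, use Young's inequality on the cross term, bound $\|g^{r+1}\|^2\le 2\|\nabla f(x^r)\|^2+2\|\nabla f(x^r)-g^{r+1}\|^2$, and invoke $\gamma L\le\frac{1}{24}$ to pin the constants $\frac{11}{24}$ and $\frac{13}{24}$ before taking expectation. Your constant bookkeeping matches the paper exactly, and your closing observations (including the three-point-identity shortcut) are valid but not needed.
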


\begin{proof}
    Since $f$ is $L$-smooth, we have
    \begin{align}
        f(x^{r+1})\leq &f(x^r)+\langle \nabla f(x^r),x^{r+1}-x^r\rangle +\frac{L}{2}\|x^{r+1}-x^r\|^2\\
        =& {f(x^r)-\gamma  \|\nabla f(x^r)\|^2+\gamma  \langle \nabla f(x^r),\nabla f(x^r)-g^{r+1}\rangle+\frac{L\gamma ^2}{2}\|g^{r+1}\|^2}.
    \end{align}
    Since $x^{r+1}=x^r-\gamma  g^{r+1}$, using Young's inequality, we further have
    \begin{equation*}
        \begin{aligned}
            &f(x^{r+1})\\
            \leq& f(x^r)-\frac{\gamma }{2}\|\nabla f(x^r)\|^2+\frac{\gamma }{2} \|\nabla f(x^r)-g^{r+1}\|^2+L\gamma ^2(\|\nabla f(x^r)\|^2+\|\nabla f(x^r)-g^{r+1}\|^2) \\
            \leq& f(x^r) - \frac{11\gamma }{24}\|\nabla f(x^r)\|^2 + \frac{13\gamma }{24} \|\nabla f(x^r)-g^{r+1}\|^2,
        \end{aligned}
    \end{equation*}
    where the last inequality is due to $\gamma L\leq \frac{1}{24}$. Taking the global expectation completes the proof.
\end{proof}

To handle local updates and client sampling, we will also use the following technical lemmas.

\begin{lemma}[\cite{karimireddy2020scaffold}]\label{lem:bias-var}
Suppose $\{X_1,\cdots,X_{\tau}\}\subset \reals^d$ be  random variables that are potentially dependent. If their marginal means and variances satisfy $\expect[X_i] = \mu_i$ and  $\expect[\|X_i - \mu_i\|^2]\leq \sigma^2$, then it holds that
\[
    \expect\left[\left\|\sum_{i=1}^\tau X_i\right\|^2\right] \leq \left\|\sum_{i=1}^\tau \mu_i\right\|^2+ \tau^2 \sigma^2.
\]
If they are correlated in the Markov way such that  $\expect [X_i | X_{i-1}, \cdots X_{1}] = \mu_i$ and  $\expect[\|X_i - \mu_i\|^2\mid \mu_i]\leq \sigma^2$, \ie, the variables $\{X_i - \mu_i\}$ form a martingale. Then the following tighter bound holds:
\[
    \expect\left[\left\|\sum_{i=1}^\tau X_i\right\|^2\right] \leq 2\EE\left[\left\|\sum_{i=1}^\tau \mu_i\right\|^2\right]+ 2\tau\sigma^2.
\]
\end{lemma}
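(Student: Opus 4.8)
The plan is to treat both bounds by the standard mean--fluctuation decomposition $X_i = \mu_i + Z_i$ with $Z_i := X_i - \mu_i$, so that $\expect[Z_i] = 0$ (or $\expect[Z_i \mid \mathcal{F}_{i-1}] = 0$ in the martingale case) and $\expect[\|Z_i\|^2] \le \sigma^2$. Expanding $\|\sum_i X_i\|^2$ then isolates a ``signal'' piece built from the $\mu_i$, a cross term, and a ``fluctuation'' piece $\|\sum_i Z_i\|^2$; the two displayed bounds differ only in how tightly the fluctuation can be controlled.

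For the first (general, possibly dependent) bound, the key observation is that the $\mu_i = \expect[X_i]$ are deterministic constants, so $\sum_i \mu_i$ is deterministic. First I would write $\expect[\|\sum_i X_i\|^2] = \|\sum_i\mu_i\|^2 + 2\langle \sum_i\mu_i,\, \expect[\sum_i Z_i]\rangle + \expect[\|\sum_i Z_i\|^2]$, in which the cross term vanishes because $\expect[Z_i]=0$. For the fluctuation term I would \emph{not} attempt to exploit any independence, since the $X_i$ may be arbitrarily correlated, and instead apply the crude power--mean (Cauchy--Schwarz) bound $\|\sum_{i=1}^\tau Z_i\|^2 \le \tau\sum_{i=1}^\tau \|Z_i\|^2$, which upon taking expectations gives $\tau \sum_i \expect[\|Z_i\|^2] \le \tau^2\sigma^2$. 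Combining the pieces yields the claimed $\|\sum_i\mu_i\|^2 + \tau^2\sigma^2$.

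For the second (martingale) bound the $\mu_i$ are now random and $\mathcal{F}_{i-1}$-measurable, so I cannot pull $\sum_i\mu_i$ out of the expectation; instead I would start from $\|a+b\|^2 \le 2\|a\|^2 + 2\|b\|^2$ to get $\expect[\|\sum_i X_i\|^2] \le 2\,\expect[\|\sum_i\mu_i\|^2] + 2\,\expect[\|\sum_i Z_i\|^2]$. The gain over the first bound comes entirely from the martingale structure of $\{Z_i\}$: expanding $\expect[\|\sum_i Z_i\|^2]$ into diagonal and off-diagonal terms, each cross term with $i<j$ satisfies $\expect[\langle Z_i, Z_j\rangle] = \expect[\langle Z_i, \expect[Z_j\mid \mathcal{F}_{j-1}]\rangle] = 0$ by the tower rule, using that $Z_i$ is $\mathcal{F}_{j-1}$-measurable and $\expect[Z_j\mid\mathcal{F}_{j-1}]=0$. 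Hence $\expect[\|\sum_i Z_i\|^2] = \sum_i\expect[\|Z_i\|^2]\le \tau\sigma^2$, and the factor-$2$ splitting delivers $2\,\expect[\|\sum_i\mu_i\|^2] + 2\tau\sigma^2$.

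There is no deep obstacle here; the only point requiring care is the bookkeeping in the martingale case, namely verifying measurability of $Z_i$ with respect to $\mathcal{F}_{j-1}$ for $i<j$ so that the tower argument legitimately annihilates every cross term. The contrast between the two parts is the instructive part to record: without the conditional-mean-zero hypothesis the off-diagonal terms survive and can only be bounded by $\sigma^2$ apiece via Cauchy--Schwarz, producing the loose $\tau^2\sigma^2$, whereas the martingale assumption is exactly what collapses the $\tau^2$ down to $\tau$, at the harmless cost of the factor $2$ on the signal term.
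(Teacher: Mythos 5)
Your proof is correct and is essentially the standard argument: the paper itself gives no proof (the lemma is imported verbatim from \cite{karimireddy2020scaffold}), and your mean--fluctuation decomposition with Cauchy--Schwarz for the dependent case and the tower-property cancellation of cross terms for the martingale case is exactly how the cited source establishes it. The measurability bookkeeping you flag ($Z_i$ being $\mathcal{F}_{j-1}$-measurable for $i<j$ because $\mu_i$ is a function of $X_1,\dots,X_{i-1}$) is handled correctly.
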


\begin{lemma}\label{lem:par_sample}
    Given vectors $v_1, \cdots, v_N\in \reals^d$ and $\bar v=\frac{1}{N}\sum_{i=1}^N v_i$, if we sample $\mathcal{S}\subset \{1,\cdots,N\}$ uniformly randomly such that $|\mathcal{S}|=S$, then it holds that
    \begin{equation*}
            \expect \left[\left\|\frac{1}{S}\sum_{i\in \mathcal{S}}v_i\right\|^2\right] 
            = \|\bar v\|^2 + \frac{N-S}{S(N-1)}\frac{1}{N}\sum_{i=1}^N\|v_i-\bar v\|^2.
    \end{equation*}
\end{lemma}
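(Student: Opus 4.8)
The plan is to combine the bias--variance decomposition with the second-moment structure of sampling without replacement. Write $Y := \frac{1}{S}\sum_{i\in\cS} v_i$ for the sampled average. First I would check that $Y$ is unbiased: since $\cS$ is a uniformly random $S$-subset, each index satisfies $\mathbb{P}(i\in\cS)=S/N$, so $\expect[Y]=\frac{1}{S}\sum_{i=1}^N\frac{S}{N}v_i=\bar v$. This yields the orthogonal decomposition $\expect[\|Y\|^2]=\|\bar v\|^2+\expect[\|Y-\bar v\|^2]$, which already supplies the leading $\|\bar v\|^2$ term, so it remains only to evaluate the variance term $\expect[\|Y-\bar v\|^2]$.

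For the variance term I would pass to the centered vectors $w_i:=v_i-\bar v$, which satisfy $\sum_{i=1}^N w_i=0$, and write $Y-\bar v=\frac{1}{S}\sum_{i=1}^N\one\{i\in\cS\}\,w_i$ using inclusion indicators. Expanding the square reduces the whole computation to the inclusion moments: $\expect[\|Y-\bar v\|^2]=\frac{1}{S^2}\sum_{i,j}\expect[\one\{i\in\cS\}\one\{j\in\cS\}]\langle w_i,w_j\rangle$.

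The key quantities are therefore the pairwise inclusion probabilities. For the diagonal terms $\expect[\one\{i\in\cS\}]=S/N$, while for the off-diagonal terms the probability that both $i$ and $j$ fall in a uniformly random $S$-subset is $\mathbb{P}(i,j\in\cS)=\frac{S(S-1)}{N(N-1)}$. Substituting these and splitting the double sum into its diagonal and off-diagonal parts gives $\expect[\|Y-\bar v\|^2]=\frac{1}{S^2}\big(\frac{S}{N}\sum_i\|w_i\|^2+\frac{S(S-1)}{N(N-1)}\sum_{i\neq j}\langle w_i,w_j\rangle\big)$. Centering now pays off: from $\sum_i w_i=0$ we get $\sum_{i\neq j}\langle w_i,w_j\rangle=\|\sum_i w_i\|^2-\sum_i\|w_i\|^2=-\sum_i\|w_i\|^2$, so the cross terms collapse into a multiple of $\sum_i\|w_i\|^2$.

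Collecting the coefficient, the two contributions combine to $\frac{1}{S^2}\big(\frac{S}{N}-\frac{S(S-1)}{N(N-1)}\big)\sum_i\|w_i\|^2=\frac{N-S}{SN(N-1)}\sum_i\|w_i\|^2$, and writing $\frac{1}{N}\sum_i\|w_i\|^2=\frac{1}{N}\sum_i\|v_i-\bar v\|^2$ produces exactly the claimed factor $\frac{N-S}{S(N-1)}$. The only step that is not pure bookkeeping is obtaining the second inclusion moment $\mathbb{P}(i,j\in\cS)$ correctly; this is precisely where sampling without replacement departs from the independent case, and where the $N-1$ in the denominator --- hence the entire finite-population correction --- comes from. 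Everything else is routine algebra.
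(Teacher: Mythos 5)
Your proof is correct and follows essentially the same route as the paper's: both reduce everything to the inclusion moments $\mathbb{P}(i\in\mathcal{S})=S/N$ and $\mathbb{P}(i,j\in\mathcal{S})=\tfrac{S(S-1)}{N(N-1)}$ and then use the identity $\sum_{i\neq j}\langle w_i,w_j\rangle=\|\sum_i w_i\|^2-\sum_i\|w_i\|^2$ to collapse the cross terms. The only difference is organizational --- you center first via the bias--variance decomposition, while the paper expands the uncentered sum directly and rearranges at the end --- so the underlying calculation is identical.
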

\begin{proof}
Letting $\one\{i\in\mathcal{S}\}$ be the indicator for the event $i\in\cS_r$, we prove this lemma by direct calculation as follows:
    \begin{equation*}
        \begin{aligned}
            \expect \left[\left\|\frac{1}{S}\sum_{i\in \mathcal{S}}v_i\right\|^2\right] 
            &= \expect \left[\left\|\frac{1}{S}\sum_{i=1}^Nv_i\one\{i\in\mathcal{S}\}\right\|^2\right] \\
            &= \frac{1}{S^2}\expect \left[\left (\sum_{i}\|v_i\|^2\one\{i\in\mathcal{S}\}+2\sum_{i<j}v_i^\top v_j\one\{i,j\in\mathcal{S}\}\right)\right] \\
            &= \frac{1}{SN}\sum_{i=1}^N\|v_i\|^2+\frac{1}{S^2}\frac{S(S-1)}{N(N-1)}2\sum_{i<j}v_i^\top v_j \\
            &= \frac{1}{SN}\sum_{i=1}^N\|v_i\|^2 + \frac{1}{S^2}\frac{S(S-1)}{N(N-1)}\left(\left\|\sum_{i=1}^N v_i\right\|^2-\sum_{i=1}^N \|v_i\|^2\right) \\
            &= \frac{N-S}{S(N-1)}\frac{1}{N}\sum_{i=1}^N\|v_i\|^2+\frac{N(S-1)}{S(N-1)}\|\overline{v}\|^2 \\
            &= \frac{N-S}{S(N-1)}\frac{1}{N}\sum_{i=1}^N\|v_i-\overline{v}\|^2 + \|\overline{v}\|^2.
        \end{aligned}
    \end{equation*}
\end{proof}

In the following subsections, we present complete proofs of our main results. For \fedavgm and \scaffoldm, our proofs only rely on  Assumption \ref{asp:smooth} and \ref{asp:sgd_var}, while for \fedavgmvr and \scaffoldmvr, our proofs rely on Assumption \ref{asp:sample_smooth} and \ref{asp:sgd_var}.

\newpage
\section{\fedavg with momentm}
\subsection{\fedavgm}\label{app:fedavgm}
In this subsection, we present the proofs for the \fedavgm algorithm.
\begin{lemma}\label{lem:Fedavg_grad_err}
    If $\gamma L\leq \frac{\beta }{6}$, the following holds for $r\geq 1$:
    \begin{equation*}
       \cE_r\leq \left(1-\frac{8\beta }{9}\right)\mathcal{E}_{r-1}+\frac{4\gamma^2 L^2}{\beta }\expect [\|\nabla f(x^{r-1})\|^2]+\frac{2\beta ^2\sigma^2}{NK}+4\beta  L^2U_r .
    \end{equation*}
    Additionally, it holds for $r=0$ that
    \begin{equation*}
        \mathcal{E}_0\leq (1-\beta )\mathcal{E}_{-1}+\frac{2\beta ^2\sigma^2}{NK}+4\beta  L^2U_0.
    \end{equation*}
\end{lemma}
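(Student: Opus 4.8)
The plan is to derive a one-step recursion for the momentum estimation error $\mathcal{E}_r=\expect[\|\nabla f(x^r)-g^{r+1}\|^2]$ by unrolling the update of $g$ and isolating a conditionally mean-zero noise term from an $\mathcal{F}^r$-measurable remainder. First I would record the explicit recursion for the aggregated estimator: since $x^r-x_i^{r,K}=\eta\sum_k g_i^{r,k}$ and $g_i^{r,k}=\beta\nabla F(x_i^{r,k};\xi_i^{r,k})+(1-\beta)g^r$, the aggregation step collapses to $g^{r+1}=(1-\beta)g^r+\beta P$ with $P:=\frac{1}{NK}\sum_i\sum_k\nabla F(x_i^{r,k};\xi_i^{r,k})$. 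Subtracting from $\nabla f(x^r)$ and adding and subtracting $(1-\beta)\nabla f(x^{r-1})$ produces the decomposition $\nabla f(x^r)-g^{r+1}=H_0-\beta(P-\expect_r[P])$, where $H_0:=(1-\beta)(\nabla f(x^{r-1})-g^r)+(1-\beta)(\nabla f(x^r)-\nabla f(x^{r-1}))+\beta(\nabla f(x^r)-\expect_r[P])$ is $\mathcal{F}^r$-measurable and $\expect_r[P-\expect_r P]=0$. The purpose of inserting $(1-\beta)\nabla f(x^{r-1})$ is to expose $\mathcal{E}_{r-1}=\expect[\|\nabla f(x^{r-1})-g^r\|^2]$ as the leading term.

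Because $H_0$ is $\mathcal{F}^r$-measurable, the cross term $\expect_r[\langle H_0,P-\expect_r P\rangle]$ vanishes, giving the orthogonal split $\expect_r[\|\nabla f(x^r)-g^{r+1}\|^2]=\|H_0\|^2+\beta^2\expect_r[\|P-\expect_r P\|^2]$. For $\|H_0\|^2$ I would apply Young's inequality twice: once to peel off the contraction term $(1-\beta)(\nabla f(x^{r-1})-g^r)$, which contributes $(1-\beta)^2(1+O(\beta))\mathcal{E}_{r-1}$, and once internally to split the model-movement term $(1-\beta)(\nabla f(x^r)-\nabla f(x^{r-1}))$. Using $L$-smoothness together with $x^r-x^{r-1}=-\gamma g^r$ bounds the movement by $L^2\gamma^2\|g^r\|^2\le 2L^2\gamma^2(\|\nabla f(x^{r-1})\|^2+\mathcal{E}_{r-1})$, while $\nabla f(x^r)-\expect_r P=\frac{1}{NK}\sum_{i,k}\expect_r[\nabla f_i(x^r)-\nabla f_i(x_i^{r,k})]$ is bounded, via smoothness and Jensen, by $L^2U_r$. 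Choosing the Young parameters proportional to $\beta$ and invoking $\gamma L\le\beta/6$ collapses the $\mathcal{E}_{r-1}$ coefficient to $1-\tfrac{8\beta}{9}$ and yields the $\tfrac{4\gamma^2L^2}{\beta}\|\nabla f(x^{r-1})\|^2$ term and part of the drift term.

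For the variance term I would split $P-\expect_r P=\frac{1}{NK}\sum_{i,k}\epsilon_i^{r,k}+\frac{1}{NK}\sum_{i,k}\delta_i^{r,k}$ into pure gradient noise $\epsilon_i^{r,k}:=\nabla F(x_i^{r,k};\xi_i^{r,k})-\nabla f_i(x_i^{r,k})$ and iterate fluctuation $\delta_i^{r,k}:=\nabla f_i(x_i^{r,k})-\expect_r[\nabla f_i(x_i^{r,k})]$, then use $\|a+b\|^2\le 2\|a\|^2+2\|b\|^2$. The noise terms are conditionally uncorrelated — independent across clients and a martingale-difference sequence within each client, since $\expect[\epsilon_i^{r,k}\mid\mathcal{F}_i^{r,k}]=0$ and $\epsilon_i^{r,k}$ is $\mathcal{F}_i^{r,k'}$-measurable for $k<k'$ — so their averaged second moment is at most $\sigma^2/(NK)$, producing the $\tfrac{2\beta^2\sigma^2}{NK}$ term. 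The fluctuations are controlled by $\Var_r(\nabla f_i(x_i^{r,k}))\le L^2\expect_r\|x_i^{r,k}-x^r\|^2$, contributing the remaining $L^2U_r$. Taking total expectation and combining with the $H_0$ bound gives the recursion for $r\ge 1$; the case $r=0$ is simpler because $x^{-1}=x^0$ eliminates the model-movement term, so no Young split against it is needed and the contraction stays at the cleaner $(1-\beta)$ with no $\|\nabla f\|^2$ contribution.

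I expect the main obstacle to be securing the clean $\beta^2$ (rather than $\beta$) dependence on the noise. A naive application of Young's inequality to $\nabla f(x^r)-g^{r+1}=H_0+\beta(\cdots)$ would inflate the noise by a factor $1/\beta$ and destroy the target rate; avoiding this forces the decomposition to be centered precisely so that the fresh noise is conditionally mean-zero and hence orthogonal to the $\mathcal{F}^r$-measurable part $H_0$. The secondary subtlety is that the centered estimator $P-\expect_r P$ also carries noise that has propagated through the local iterates (the $\delta_i^{r,k}$), which must be separated from the direct noise and absorbed into the drift $U_r$; getting this separation right is what produces the final $4\beta L^2U_r$ coefficient.
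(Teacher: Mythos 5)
Your proposal is correct and is essentially the paper's own argument: the paper likewise collapses the aggregation step to $g^{r+1}=(1-\beta)g^r+\beta P$ with $P=\frac{1}{NK}\sum_{i,k}\nabla F(x_i^{r,k};\xi_i^{r,k})$, inserts $\nabla f(x^{r-1})$ to expose $\mathcal{E}_{r-1}$, strips the fresh noise from the cross term via the conditional expectation (its Lemma~\ref{lem:bias-var} packages your $\epsilon/\delta$ martingale split), bounds the bias $\nabla f(x^r)-\expect_r P$ by $L^2 U_r$ through smoothness and Jensen, controls the movement by $\|x^r-x^{r-1}\|^2\le 2\gamma^2(\|\nabla f(x^{r-1})\|^2+\|g^r-\nabla f(x^{r-1})\|^2)$, and closes with $\gamma L\le\beta/6$ and $(1-\beta)^2(1+\beta/2)^2\le 1-\beta$. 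The only delta is constant bookkeeping: your exact orthogonal split charges the bias once inside $\|H_0\|^2$ and then bounds $\expect\|P-\expect_r P\|^2\le 2\sigma^2/(NK)+2L^2U_r$ separately, yielding a $U_r$ coefficient of $2\beta+3\beta^2$ rather than the paper's $2\beta+2\beta^2\le 4\beta$ — harmless for every downstream use of the lemma, and recoverable by exploiting that the $\delta_i^{r,k}$ are conditionally independent across clients so their average has variance $L^2U_r/N$.
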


\begin{proof}
    For $r\geq 1$,
    \begin{equation*}
        \begin{aligned}
            \cE_r
            &= \expect [\|\nabla f(x^r)-g^{r+1}\|^2] \\
            &= \expect \left[\left\| (1-\beta )(\nabla f(x^r)-g^r)+\beta  \left(\nabla f(x^r)-\frac{1}{NK}\sum_{i}\sum_{k}\nabla F(x_i^{r,k};\xi_i^{r,k})\right)\right\|^2\right] \\
            &= \expect\left[\|(1-\beta )(\nabla f(x^r)-g^r)\|^2\right] + \beta ^2\expect\left[\left\|\nabla f(x^r)-\frac{1}{NK}\sum_{i,\,k}\nabla F(x_i^{r,k};\xi_i^{r,k})\right\|^2\right]\\
            &\quad\quad + 2\beta \expect\left[\left\langle (1-\beta )(\nabla f(x^r)-g^r),\nabla f(x^r)-\frac{1}{NK}\sum_{i,\,k}\nabla f(x_i^{r,k})\right\rangle\right].
        \end{aligned}
    \end{equation*}
    Note that $\{\nabla F(x_i^{r,k};\xi_i^{r,k})\}_{0\leq k<K}$ are sequentially correlated. Applying the AM-GM inequality and Lemma \ref{lem:bias-var}, we have
    \begin{align}
                    \cE_r&\leq \left(1+\frac{\beta }{2}\right)\expect [\|(1-\beta )(\nabla f(x^r)-g^r)\|^2] + 2\beta  L^2U_r  + 2\beta ^2\left(\frac{\sigma^2}{NK}+L^2U_r \right).
    \end{align}
    Using the AM-GM inequality again and Assumption \ref{asp:smooth}, we have 
    \begin{align}
                    \cE_r
            &\leq (1-\beta )^2\left(1+\frac{\beta }{2}\right)\left[\left(1+\frac{\beta}{2}\right)\mathcal{E}_{r-1}+\left(1+\frac{2}{\beta }\right)L^2\expect[\|x^r-x^{r-1}\|^2]\right]+\frac{2\beta ^2\sigma^2}{NK}+4\beta  L^2U_r \\
            &\leq (1-\beta )\mathcal{E}_{r-1}+\frac{2}{\beta }L^2\expect[\|x^r-x^{r-1}\|^2]+\frac{2\beta ^2\sigma^2}{NK}+4\beta  L^2U_r  \\
            &\leq \left(1-\frac{8\beta }{9}\right)\mathcal{E}_{r-1}+4\frac{\gamma^2 L^2}{\beta }\expect [\|\nabla f(x^{r-1})\|^2]+\frac{2\beta ^2\sigma^2}{NK}+4\beta  L^2U_r,
    \end{align}
    where we plug in $\|x^r-x^{r-1}\|^2\leq 2\gamma ^2(\|\nabla f(x^{r-1})\|^2+\|g^{r}-\nabla f(x^{r-1})\|^2)$ and use $\gamma L\leq \frac{\beta }{6}$ in the last inequality. Similarly for $r=0$,
    \begin{equation*}
        \begin{aligned}
            \mathcal{E}_0
            &\leq \left(1+\frac{\beta }{2}\right)\expect [\|(1-\beta )(\nabla f(x^0)-g^0)\|^2] + 2\beta  L^2U_0  + 2\beta ^2\left(\frac{\sigma^2}{NK}+L^2U_0 \right) \\
            &\leq (1-\beta )\mathcal{E}_{-1}+\frac{2\beta ^2\sigma^2}{NK}+4\beta  L^2U_0 .
        \end{aligned}
    \end{equation*}
\end{proof}

\begin{lemma}\label{lem:Fedavg_client_drift}
    If $\eta LK\leq\frac{1}{\beta }$, the following holds for $r\geq 0$:
    \begin{equation*}
        U_r \leq 2eK^2\Xi_r + K\eta^2\beta ^2\sigma^2(1+2K^3L^2\eta^2\beta ^2).
    \end{equation*}
\end{lemma}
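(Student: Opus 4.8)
The plan is to control the per-client drift $d_{i,k}:=\expect[\|x_i^{r,k}-x^r\|^2]$ and then recover $U_r=\frac{1}{NK}\sum_i\sum_k d_{i,k}$ by averaging. First I would telescope $x_i^{r,k}-x^r=\sum_{j=0}^{k-1}(x_i^{r,j+1}-x_i^{r,j})=-\eta\sum_{j=0}^{k-1}g_i^{r,j}$ and split each increment into its conditional mean $\zeta_i^{r,j}=\expect[x_i^{r,j+1}-x_i^{r,j}\mid \mathcal{F}_i^{r,j}]=-\eta(\beta\nabla f_i(x_i^{r,j})+(1-\beta)g^r)$ and a mean-zero fluctuation $-\eta\beta(\nabla F(x_i^{r,j};\xi_i^{r,j})-\nabla f_i(x_i^{r,j}))$. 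Only the $\beta\nabla F$ term carries stochasticity, so by Assumption \ref{asp:sgd_var} the fluctuations form a martingale-difference sequence with conditional second moment at most $\eta^2\beta^2\sigma^2$. Applying the martingale bound in Lemma \ref{lem:bias-var} then separates the accumulated noise from the drift of the conditional means:
\begin{equation*}
d_{i,k}\leq 2\,\expect\Big[\big\|\textstyle\sum_{j=0}^{k-1}\zeta_i^{r,j}\big\|^2\Big]+2k\eta^2\beta^2\sigma^2.
\end{equation*}

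Next I would use smoothness to make the drift-of-means self-referential. Since $\zeta_i^{r,j}-\zeta_i^{r,0}=-\eta\beta(\nabla f_i(x_i^{r,j})-\nabla f_i(x^r))$, Assumption \ref{asp:smooth} gives $\|\zeta_i^{r,j}-\zeta_i^{r,0}\|\leq \eta\beta L\|x_i^{r,j}-x^r\|$. Writing $\sum_{j<k}\zeta_i^{r,j}=k\zeta_i^{r,0}+\sum_{j<k}(\zeta_i^{r,j}-\zeta_i^{r,0})$ and applying Young and Cauchy--Schwarz converts the inequality above into a recursion that feeds the drift back on itself,
\begin{equation*}
d_{i,k}\leq 4k^2\,\expect[\|\zeta_i^{r,0}\|^2]+4k\eta^2\beta^2L^2\textstyle\sum_{j=0}^{k-1}d_{i,j}+2k\eta^2\beta^2\sigma^2,
\end{equation*}
in which $\expect[\|\zeta_i^{r,0}\|^2]$ averages over $i$ to $\Xi_r$ and the final term will accumulate to the $\sigma^2$ contributions in the claim.

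The crux is unrolling this recursion. Introducing the partial sums $W_k:=\sum_{j=0}^{k-1}d_{i,j}$ turns it into the linear recursion $W_{k+1}\leq(1+4k\eta^2\beta^2L^2)W_k+4k^2\Xi_i+2k\eta^2\beta^2\sigma^2$, whose solution is a weighted sum of the inhomogeneous terms times products $\prod_{l}(1+4l\eta^2\beta^2L^2)$. Here the stepsize condition $\eta LK\leq 1/\beta$ does the decisive work: it forces $4l\eta^2\beta^2L^2\leq 4/K$ for $l<K$, so each product is bounded by $\exp\!\big(4\eta^2\beta^2L^2\sum_{l<K}l\big)\leq\exp\!\big(2(\eta\beta LK)^2\big)$, a universal constant — this is the source of the $e$ appearing in the bound. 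Summing the resulting estimates for $d_{i,k}$ over $k$ (with $\sum_k k^2$ producing the $K^2$ in front of $\Xi_r$ after dividing by $K$, $\sum_k k$ producing the leading $K\eta^2\beta^2\sigma^2$, and the noise passed once through the smoothness coupling producing the higher-order $2K^3L^2\eta^2\beta^2$ correction) and averaging over $i$ yields the stated bound. I expect the main obstacle to be exactly this final bookkeeping: tracking the $k$-dependence inside the recursion precisely enough to land the clean prefactor $2e$ rather than a looser exponential constant, which rules out crudely replacing $k$ by $K$ and instead demands a careful weighted accounting of the unrolled terms.
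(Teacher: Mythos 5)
Your argument is correct in substance and proves a bound of the same form, but it routes through the middle of the proof differently from the paper. Both share the same skeleton: split each local increment into its conditional mean $\zeta_i^{r,j}$ plus a martingale fluctuation of conditional variance at most $\eta^2\beta^2\sigma^2$, invoke the martingale half of Lemma \ref{lem:bias-var} to get $d_{i,k}\le 2\,\expect[\|\sum_{j<k}\zeta_i^{r,j}\|^2]+2k\eta^2\beta^2\sigma^2$, and let the condition $\beta\eta LK\le 1$ compress an exponential growth factor into the constant $e$. Where you diverge is in which quantity carries the recursion. The paper controls the \emph{successive} differences $\zeta_i^{r,j}-\zeta_i^{r,j-1}$, whose norm is $\eta\beta L$ times the single-step move $\|x_i^{r,j}-x_i^{r,j-1}\|$; this yields a multiplicative recursion $\expect[\|\zeta_i^{r,j}\|^2]\le(1+\tfrac{2}{k})\expect[\|\zeta_i^{r,j-1}\|^2]+(k+1)L^2\eta^4\beta^4\sigma^2$ on the increment means themselves, unrolled to $e^2\expect[\|\zeta_i^{r,0}\|^2]+4k^2L^2\eta^4\beta^4\sigma^2$ \emph{before} being fed into the martingale bound. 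You instead anchor every $\zeta_i^{r,j}$ to $\zeta_i^{r,0}$, which couples to the \emph{cumulative} drift $\|x_i^{r,j}-x^r\|$ and produces a discrete Gr\"onwall recursion on the partial sums $W_k=\sum_{j<k}d_{i,j}$. Both are legitimate; the paper's version keeps the two noise contributions separated (hence the explicit $1+2K^3L^2\eta^2\beta^2$ factor), while yours bundles everything under a single $e^2$ growth factor. The one caveat is quantitative, and you flag it yourself: carrying out your unrolling gives roughly $\tfrac{4e^2}{3}K^2\Xi_r+e^2K\eta^2\beta^2\sigma^2$, so the prefactor on the $\Xi_r$ term is $\tfrac{4e^2}{3}\approx 9.9$ rather than the stated $2e\approx 5.4$, and trading Young's inequality weights to shrink it inflates the Gr\"onwall exponent instead. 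This is immaterial downstream --- Lemma \ref{lem:Fedavg_grad_norm} and Theorem \ref{thm:Fedavg-m} only need the bound up to absolute constants, with the numeric step-size conditions adjusted accordingly --- but as written your route establishes the lemma with a looser constant rather than the literal statement.
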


\begin{proof}
    Recall that $\zeta^{r,k}_i:= \expect[x^{r,k+1}_i-x^{r,k}_i|\mathcal{F}^{r,k}_i] = -\eta\left((1-\beta )g^r+\beta \nabla f_i(x^{r,k}_i)\right)$. Then we have
    \begin{equation*}
        \begin{aligned}
        \expect[ \|\zeta^{r,j}_i-\zeta^{r,j-1}_i\|^2]
        &\leq \eta^2L^2\beta ^2\expect [\|x^{r,j}_i-x^{r,j-1}_i\|^2] \\
        &\leq \eta^2L^2\beta ^2(\eta^2\beta ^2\sigma^2+\expect[\|\zeta^{r,j-1}_i\|^2). 
        \end{aligned}
    \end{equation*}
    For any $1\leq j\leq k-1\leq K-2$, using $\eta L\leq \frac{1}{\beta  K}\leq \frac{1}{\beta (k+1)}$, we have
    \begin{equation*}
        \begin{aligned}
            \expect [\|\zeta^{r,j}_i\|^2]
            &\leq \left(1+\frac{1}{k}\right) \expect [\|\zeta^{r,j-1}_i\|^2] + (1+k)\expect [\|\zeta^{r,j}_i-\zeta^{r,j-1}_i\|^2] \\
            &\leq \left(1+\frac{2}{k}\right) \expect [\|\zeta^{r,j-1}_i\|^2] + (k+1)L^2\eta^4\beta ^4\sigma^2 \\
            &\leq e^2\expect [\|\zeta^{r,0}_i\|^2] + 4k^2L^2\eta^4\beta ^4\sigma^2,
        \end{aligned}
    \end{equation*}
    where the last inequality is by unrolling the recursive bound and using $\left(1+\frac{2}{k}\right)^k\leq e^2$.
    By Lemma \ref{lem:bias-var}, it holds that for $k\geq 2$,
    \begin{equation*}
        \begin{aligned}
            \expect [\|x^{r,k}_i-x^r\|^2]
            &\leq 2\expect \left[\left\|\sum_{j=0}^{k-1}\zeta^{r,j}_i\right\|^2\right] + 2k\eta^2\beta ^2\sigma^2 \\
            &\leq 2k\sum_{j=0}^{k-1}\expect[\|\zeta^{r,k}_i\|^2]+2k\eta^2\beta ^2\sigma^2 \\
            &\leq 2e^2k^2\expect[\|\zeta^{r,0}_i\|^2]+2k\eta^2\beta ^2\sigma^2(1+4k^3L^2\eta^2\beta ^2).
        \end{aligned}
    \end{equation*}
    This is also valid for $k=0,1$. Summing up over $i$ and $k$ finishes the proof.
\end{proof}

\begin{lemma}\label{lem:Fedavg_grad_norm}
    If $288e (\eta KL)^2((1-\beta )^2+e(\beta \gamma LR)^2)\leq 1$,
    then it holds for $r\geq 0$ that
    \begin{equation*}
        \sum_{r=0}^{R-1}\Xi_r\leq \frac{1}{72eK^2L^2}\sum_{r=-1}^{R-2}(\mathcal{E}_{r}+\expect[\|\nabla f(x^{r})\|^2])+2\eta^2\beta ^2eRG_0 .
    \end{equation*}
\end{lemma}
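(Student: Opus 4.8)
The plan is to reduce the bound on $\sum_r \Xi_r$ to separate ``mean-gradient'', ``stochastic-error'', and ``heterogeneity'' contributions, exploiting that the first local displacement is $\zeta_i^{r,0}=-\eta\big((1-\beta)g^r+\beta\nabla f_i(x^r)\big)$. Writing $d_i^r:=(1-\beta)g^r+\beta\nabla f_i(x^r)$ and $\bar d^r:=\tfrac1N\sum_i d_i^r=(1-\beta)g^r+\beta\nabla f(x^r)$, the across-client bias-variance identity (the cross term vanishes because $\sum_i(d_i^r-\bar d^r)=0$) gives
\[
\Xi_r=\eta^2\expect\|\bar d^r\|^2+\eta^2\beta^2 H_r,\qquad H_r:=\frac1N\sum_{i=1}^N\expect[\|\nabla f_i(x^r)-\nabla f(x^r)\|^2],
\]
where $H_r$ is the gradient heterogeneity at $x^r$. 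I would bound the two pieces separately.

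For the mean part I first replace $g^r$ by $\nabla f(x^{r-1})$ at the cost of the error $\cE_{r-1}$: since $\bar d^r=(1-\beta)\nabla f(x^{r-1})+\beta\nabla f(x^r)+(1-\beta)(g^r-\nabla f(x^{r-1}))$, a Young-type split followed by Jensen on the convex combination $(1-\beta)\nabla f(x^{r-1})+\beta\nabla f(x^r)$ yields $\expect\|\bar d^r\|^2\lesssim(1-\beta)\expect\|\nabla f(x^{r-1})\|^2+\beta\expect\|\nabla f(x^r)\|^2+(1-\beta)^2\cE_{r-1}$. Summed over $r$ this contributes gradient-norm terms with coefficient $\Theta(\eta^2)$ — capped because the hypothesis forces $(\eta KL)^2$ to be small, its $(1-\beta)^2$ summand dominating in the small-$\beta$ regime where the parameters are ultimately set — together with error terms carrying the favourable factor $\eta^2(1-\beta)^2$. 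The boundary cases $r=0,1$ reproduce the $\cE_{-1}$ and $\|\nabla f(x^{-1})\|^2=\|\nabla f(x^0)\|^2$ entries via $g^0=0$.

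For the heterogeneity part the key observation is that $H_0\le G_0$ and that $H_r$ can grow only through gradient drift: by $L$-smoothness of each $f_i$ and of $f$, $H_r\le 2G_0+8L^2\expect\|x^r-x^0\|^2$. I then bound the path length $\|x^r-x^0\|^2=\gamma^2\big\|\sum_{s<r}g^{s+1}\big\|^2\le\gamma^2R\sum_{s<r}\|g^{s+1}\|^2$ and use $\expect\|g^{s+1}\|^2\le2\expect\|\nabla f(x^s)\|^2+2\cE_s$. Summing over $r$, the resulting double sum produces a factor $R^2$, so $\eta^2\beta^2\sum_rH_r\lesssim\eta^2\beta^2R\,G_0+\eta^2(\beta\gamma LR)^2\sum_s(\expect\|\nabla f(x^s)\|^2+\cE_s)$; this is exactly where the $2\eta^2\beta^2eRG_0$ term and the $(\beta\gamma LR)^2$ factor in the hypothesis originate.

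Finally I collect coefficients: every $\cE_r$ and $\expect\|\nabla f(x^r)\|^2$ on the right acquires a factor of the form $\eta^2\big[(1-\beta)^2+(\beta\gamma LR)^2\big]\times\text{const}$, and the assumption $288e(\eta KL)^2\big((1-\beta)^2+e(\beta\gamma LR)^2\big)\le1$ is precisely what pushes each such coefficient below $1/(72eK^2L^2)$. The hard part is the heterogeneity estimate: unlike the mean part, $H_r$ cannot be contracted round-by-round, so it must be traced all the way back to $x^0$, and the accumulated drift across all $R$ rounds is what forces the $R^2$ (hence the $\beta\gamma LR$ scaling). Keeping the numerical constants aligned so that this doubled summation still fits under $1/(72eK^2L^2)$, while isolating the irreducible initial term $G_0$, is the delicate quantitative core of the argument.
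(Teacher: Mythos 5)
Your across-client bias--variance identity for $\Xi_r$ and your treatment of the heterogeneity piece $H_r$ (bounding it by $G_0$ plus drift from $x^0$ and then by the accumulated path length, which is where the $2\eta^2\beta^2 eRG_0$ term and the $(\beta\gamma LR)^2$ factor come from) are sound, and are essentially a repackaging of what the paper does by unrolling $\expect[\|\nabla f_i(x^r)\|^2]$ back to $\expect[\|\nabla f_i(x^0)\|^2]$ with the $(1+1/r)^r\le e$ trick. The gap is in the mean part. Your bound $\expect\|\bar d^r\|^2\lesssim(1-\beta)\expect\|\nabla f(x^{r-1})\|^2+\beta\expect\|\nabla f(x^r)\|^2+(1-\beta)^2\cE_{r-1}$ leaves a term of order $\eta^2\beta\,\expect[\|\nabla f(x^r)\|^2]$ inside $\Xi_r$, and this causes two problems. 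First, at $r=R-1$ it contributes $\expect[\|\nabla f(x^{R-1})\|^2]$, which does not appear on the right-hand side of the lemma (the sum stops at $R-2$); pushing it back one index only turns it into roughly $2\eta^2\beta\expect[\|\nabla f(x^{R-2})\|^2]$ plus an error term, so the real issue is the coefficient. Second, for the conclusion you need every such coefficient to be at most $\tfrac{1}{72eK^2L^2}$, i.e. $72e\beta(\eta KL)^2\le 1$ --- but the stated hypothesis $288e(\eta KL)^2((1-\beta)^2+e(\beta\gamma LR)^2)\le 1$ does not imply this: take $\beta$ near $1$ (killing the $(1-\beta)^2$ summand) and $\gamma LR$ small, and $\eta KL$ can be made large while the hypothesis still holds. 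Your appeal to ``the small-$\beta$ regime where the parameters are ultimately set'' imports conditions from Theorem~\ref{thm:Fedavg-m} that are not part of the lemma, and that theorem in fact permits $\beta=c=1$.

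The paper avoids this trap by never isolating a current-iterate mean-gradient term: it bounds $\Xi_r\le 2\eta^2(1-\beta)^2\expect[\|g^r\|^2]+2\eta^2\beta^2\tfrac1N\sum_i\expect[\|\nabla f_i(x^r)\|^2]$ and unrolls the \emph{entire} second term (mean plus heterogeneity together) back to $x^0$, so that every surviving coefficient of a $\cE_j+\expect[\|\nabla f(x^j)\|^2]$ term carries either the factor $(1-\beta)^2$ (from $\|g^r\|^2\le 2(\cE_{r-1}+\|\nabla f(x^{r-1})\|^2)$) or the factor $(\beta\gamma LR)^2$ (from the unrolled drift) --- precisely the two quantities the hypothesis controls. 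To close your argument you would need to do the same with your $\beta\expect[\|\nabla f(x^r)\|^2]$ contribution (or simply with $\|\bar d^r\|^2$ as a whole), rather than leaving it at the current iterate with a bare $\eta^2\beta$ coefficient.
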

\begin{proof}
    Note that $\zeta_i^{r,0}=-\eta((1-\beta )g^r+\beta  \nabla f_i(x^r))$,
    \begin{equation*}
        \frac{1}{N}\sum_{i=1}^N\|\zeta_i^{r,0}\|^2 \leq 2\eta^2\left((1-\beta )^2\|g^r\|^2+\beta ^2\frac{1}{N}\sum_{i=1}^N\|\nabla f_i(x^r)\|^2\right).
    \end{equation*}
    Using Young's inequality, we have for any $q>0$ that
    \begin{equation*}
        \begin{aligned}
            \expect [\|\nabla f_i(x^r)\|^2]
            &\leq (1+q)\expect[\|\nabla f_i(x^{r-1})\|^2]+(1+q^{-1})L^2\expect[\|x^r-x^{r-1}\|^2]\\
            &\leq (1+q)\expect[\|\nabla f_i(x^{r-1})\|^2]+2(1+q^{-1})\gamma^2 L^2(\mathcal{E}_{r-1}+\expect[\|\nabla f(x^{r-1})\|^2])\\
            &\leq (1+q)^r\expect[\|\nabla f_i(x^0)\|^2]+\frac{2}{q}\gamma^2 L^2\sum_{j=0}^{r-1}(\mathcal{E}_{j}+\expect[\|\nabla f(x^{j})\|^2)(1+q)^{r-j}.
        \end{aligned}
    \end{equation*}
    Take $q=\frac{1}{r}$ and we have
    \begin{equation}\label{eqn:vninbvsdvds}
        \expect [\|\nabla f_i(x^r)\|^2]\leq e\expect [\|\nabla f_i(x^0)\|^2]+2e(r+1)\gamma^2 L^2\sum_{j=0}^{r-1}(\mathcal{E}_{j}+\expect[\|\nabla f(x^{j})\|^2).
    \end{equation}
    Note that this inequality is valid for $r=0$. Therefore, using \eqref{eqn:vninbvsdvds}, we have
    \begin{align}
        \sum_{r=0}^{R-1}\Xi_r
            &\leq \sum_{r=0}^{R-1} 2\eta^2\expect\left[(1-\beta )^2\|g^r\|^2+\beta ^2\frac{1}{N}\sum_{i=1}^N\|\nabla f_i(x^r)\|^2\right] \\
            &\leq \sum_{r=0}^{R-1} 2\eta^2\left(2(1-\beta )^2(\mathcal{E}_{r-1}+\expect[\|\nabla f(x^{r-1})\|^2])+\beta ^2\frac{1}{N}\sum_{i=1}^N\expect[\|\nabla f_i(x^r)\|^2]\right) \\
            &\leq \sum_{r=0}^{R-1} 4\eta^2(1-\beta )^2(\mathcal{E}_{r-1}+\expect[\|\nabla f(x^{r-1})\|^2]) \\ 
            &\quad\quad +2\eta^2\beta ^2\sum_{r=0}^{R-1} \left(\frac{e}{N}\sum_{i=1}^N\expect [\|\nabla f_i(x^0)\|^2]+2e(r+1)(\gamma L)^2\sum_{j=0}^{r-1}(\mathcal{E}_{j}+\expect[\|\nabla f(x^{j})\|^2])\right)\\
            &\leq 4\eta^2(1-\beta )^2\sum_{r=0}^{R-1} (\mathcal{E}_{r-1}+\expect[\|\nabla f(x^{r-1})\|^2]) \\
            &\quad\quad +2\eta^2\beta ^2\left( eRG_0 +2e(\gamma LR)^2\sum_{r=0}^{R-2} (\mathcal{E}_{r}+\expect[\|\nabla f(x^{r})\|^2])\right).
    \end{align}
    Rearranging the equation and applying the upper bound of $\eta$ completes the proof.
\end{proof}

\begin{thm}
    Under Assumption \ref{asp:smooth} and \ref{asp:sgd_var}, if we take $g^0=0$, 
    \begin{equation}\label{eqn:fedavg-m-para}
        \begin{aligned}
            &\beta  = \min\left\{, \sqrt{\frac{NKL\Delta  }{\sigma^2R}}\right\}\text{ for any constant }c\in (0,1],\quad \gamma =\min\left\{\frac{1}{24L}, \frac{\beta }{6L}\right\},\\
            &\eta KL \lesssim \min\left\{1,\frac{1}{\beta \gamma LR},\left(\frac{L\Delta }{G_0\beta^3R}\right)^{1/2}, \frac{1}{(\beta  N)^{1/2}}, \frac{1}{(\beta ^3NK)^{1/4}}\right\}
        \end{aligned}
    \end{equation}
    then \fedavgm converges as
    \begin{equation*}
        \frac{1}{R}\sum_{r=0}^{R-1}\expect[\|\nabla f(x^r)\|^2]
        \lesssim \sqrt{\frac{L\Delta  \sigma^2}{NKR}}+\frac{L\Delta }{R}.
    \end{equation*}
    Here $G_0 :=\frac{1}{N}\sum_{i=1}^N\|\nabla f_i(x^0)\|^2$.
\end{thm}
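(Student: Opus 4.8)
The plan is to assemble the four preceding lemmas into a single coupled recursion for the summed quantities
\[
P:=\sum_{r=0}^{R-1}\expect[\|\nabla f(x^r)\|^2],\quad E:=\sum_{r=0}^{R-1}\cE_r,\quad U:=\sum_{r=0}^{R-1}U_r,\quad \Xi:=\sum_{r=0}^{R-1}\Xi_r,
\]
and then to close it by absorption. First I would sum Lemma~\ref{lem:onestep} over $r=0,\dots,R-1$ and telescope; using $f(x^R)\geq f^*$ this gives the master inequality
\[
\frac{11\gamma}{24}\,P \leq \Delta + \frac{13\gamma}{24}\,E.
\]
Everything then reduces to bounding $E$ by $P$ plus noise with a sufficiently small coefficient on $P$, so that $P$ can be isolated.

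Next I would sum the recursion of Lemma~\ref{lem:Fedavg_grad_err}. Since the multiplier $1-\tfrac{8\beta}{9}$ is a strict contraction, telescoping and discarding the nonnegative endpoint $\cE_{R-1}$ yields $\tfrac{8\beta}{9}E \leq \cE_0 + \tfrac{4\gamma^2L^2}{\beta}P + \tfrac{2\beta^2\sigma^2 R}{NK} + 4\beta L^2 U$; feeding in the stated base case for $\cE_0$ and dividing by $\tfrac{8\beta}{9}$ produces
\[
E \lesssim \frac{1}{\beta}\cE_{-1} + \frac{\gamma^2 L^2}{\beta^2}P + \frac{\beta\sigma^2 R}{NK} + L^2 U.
\]
I would then eliminate $U$ and $\Xi$: summing Lemma~\ref{lem:Fedavg_client_drift} bounds $U$ by $2eK^2\Xi$ plus variance residuals, and Lemma~\ref{lem:Fedavg_grad_norm} bounds $\Xi$ by $\tfrac{1}{72eK^2L^2}\big(E+P+\cE_{-1}+\|\nabla f(x^0)\|^2\big)$ plus a $G_0$ residual.

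The decisive cancellation is arithmetic: the $4\beta L^2 U$ term feeds $\Xi$ back into $E$ with combined constant $4\beta L^2\cdot 2eK^2\cdot\tfrac{1}{72eK^2L^2}=\tfrac{\beta}{9}$, i.e. a self-coefficient of $\tfrac18$ after dividing by $\tfrac{8\beta}{9}$, so $E$ can be absorbed on the left. The same chain reintroduces $P$ into $E$ with coefficient $\tfrac87\big(\tfrac{9\gamma^2L^2}{2\beta^2}+\tfrac18\big)$, and the constraint $\gamma L\leq\tfrac{\beta}{6}$ forces $\tfrac{\gamma^2L^2}{\beta^2}\leq\tfrac1{36}$, keeping this coefficient at $\tfrac27$. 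Plugging into the master inequality, the net coefficient of $P$ on the right is $\tfrac{13\gamma}{24}\cdot\tfrac27$, strictly below the left-hand $\tfrac{11\gamma}{24}$, so $P$ can be isolated as well. Finally I would bound $\cE_{-1}=\|\nabla f(x^0)\|^2\leq 2L\Delta$ by smoothness, verify via the $\eta$-conditions in \eqref{eqn:fedavg-m-para} that the $G_0$- and higher-order $\sigma^2$-residuals are dominated by $\tfrac{L\Delta}{R}$ and $\tfrac{\beta\sigma^2}{NK}$ respectively, and divide by $R$ to obtain $\tfrac1R P\lesssim \tfrac{L\Delta}{\beta R}+\tfrac{\beta\sigma^2}{NK}$. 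The choice $\beta=\min\{c,\sqrt{NKL\Delta/(\sigma^2R)}\}$ balances these two terms and gives the stated rate.

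I expect the \textbf{main obstacle} to be exactly this third step: $E$, $U$, $\Xi$, $P$ form a \emph{cyclic} system ($E$ needs $U$, $U$ needs $\Xi$, $\Xi$ needs $E$ and $P$, and the master inequality needs $E$), so the argument closes only if every feedback coefficient is verified to be strictly contractive. This is precisely what the finely tuned numerical constants in the four lemmas ($\tfrac{11}{24}$, $\tfrac{13}{24}$, $\tfrac{8\beta}{9}$, $2eK^2$, $\tfrac{1}{72eK^2L^2}$) are engineered to guarantee, and confirming it---together with checking that \eqref{eqn:fedavg-m-para} simultaneously satisfies $\gamma L\leq\tfrac1{24}$, $\gamma L\leq\tfrac{\beta}{6}$, $\eta KL\leq\tfrac1\beta$, and the grad-norm condition $288e(\eta KL)^2\big((1-\beta)^2+e(\beta\gamma LR)^2\big)\leq1$---is the delicate bookkeeping that makes the proof work.
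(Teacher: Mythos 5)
Your proposal is correct and follows essentially the same route as the paper's own proof: sum Lemma~\ref{lem:onestep}, close the cyclic system $E\to U\to\Xi\to(E,P)$ using Lemmas~\ref{lem:Fedavg_grad_err}--\ref{lem:Fedavg_grad_norm}, absorb via the engineered constants (your $4\beta L^2\cdot 2eK^2\cdot\tfrac{1}{72eK^2L^2}=\tfrac{\beta}{9}$ cancellation and the resulting $\tfrac{2}{7}$ coefficient on $P$ are exactly the paper's), bound $\mathcal{E}_{-1}\leq 2L\Delta$ from $g^0=0$, and balance with the stated $\beta$. No substantive difference from the paper's argument.
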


\begin{proof}
    Combining Lemma \ref{lem:Fedavg_grad_err} and \ref{lem:Fedavg_client_drift}, we have
    \begin{align}
             \cE_r\leq & \left(1-\frac{8\beta }{9}\right)\mathcal{E}_{r-1}+4\frac{(\gamma L)^2}{\beta }\expect[\|\nabla f(x^{r-1})\|^2]+\frac{2\beta ^2\sigma^2}{NK}\\
             &\quad +4\beta  L^2\left( 2eK^2\Xi_r + K\eta^2\beta ^2\sigma^2(1+2K^3L^2\eta^2\beta ^2\right).
    \end{align}
    and 
    \begin{equation}
        \mathcal{E}_0\leq (1-\beta )\mathcal{E}_{-1}+\frac{2\beta ^2\sigma^2}{NK}+4\beta  L^2\left( 2eK^2\Xi_0 + K\eta^2\beta ^2\sigma^2(1+2K^3L^2\eta^2\beta ^2)\right).
    \end{equation}
    Summing over $r$ from $0$ to $R-1$ and applying Lemma \ref{lem:Fedavg_grad_norm},
    \begin{equation*}
        \begin{aligned}
            \sum_{r=0}^{R-1}\cE_r
            &\leq \left(1-\frac{8\beta }{9}\right)\sum_{r=-1}^{R-2}\cE_r + 4\frac{(\gamma L)^2}{\beta }\sum_{r=0}^{R-2}\expect [\|\nabla f(x^{r})\|^2] + 2\frac{\beta ^2\sigma^2}{NK}R \\
            &\quad\quad +4\beta  L^2\left(2eK^2\sum_{r=0}^{R-1}\Xi_r + RK\eta^2\beta ^2\sigma^2(1+2K^3L^2\eta^2\beta ^2)\right) \\
            &\leq \left(1-\frac{7\beta }{9}\right)\sum_{r=-1}^{R-2}\cE_r + \left(4\frac{(\gamma L)^2}{\beta }+\frac{\beta }{9}\right) \sum_{r=-1}^{R-2}\expect[\|\nabla f(x^{r})\|^2]+16\beta ^3(e\eta KL)^2RG_0  \\
            &\quad\quad +\frac{2\beta ^2\sigma^2}{NK}R+4\beta ^3(\eta KL)^2\left(\frac{1}{K}+2(\eta KL\beta )^2\right)\sigma^2R \\
            &\leq \left(1-\frac{7\beta }{9}\right)\sum_{r=-1}^{R-2}\cE_r + \frac{2\beta }{9}\sum_{r=-1}^{R-2}\expect [\|\nabla f(x^{r})\|^2]+16\beta ^3(e\eta KL)^2RG_0 +\frac{4\beta ^2\sigma^2}{NK}R.
        \end{aligned}
    \end{equation*}
    Here in the last inequality we apply
    \begin{equation*}
        4\beta (\eta KL)^2\left(\frac{1}{K}+2(\eta KL\beta )^2\right) \leq \frac{2}{NK}\quad\text{and}\quad \gamma {L}\leq\frac{\beta }{6}.
    \end{equation*}

    Therefore, 
    \begin{equation*}
        \sum_{r=0}^{R-1}\cE_r\leq \frac{9}{7\beta }\mathcal{E}_{-1} + \frac{2}{7}\expect [\sum_{r=-1}^{R-2}\|\nabla f(x^{r})\|^2] + \frac{144}{7}(e\beta \eta KL)^2G_0 R+ \frac{36\beta \sigma^2}{7NK}R.
    \end{equation*}
    Combine this inequality with Lemma \ref{lem:onestep} and we get
    \begin{equation*}
        \frac{1}{\gamma }\expect [f(x^R)-f(x^0)]\leq -\frac{1}{7}\sum_{r=0}^{R-1}\expect[\|\nabla f(x^r)\|^2] + \frac{39}{56\beta }\mathcal{E}_{-1}+ \frac{78}{7}(e\beta \eta KL)^2G_0 R+ \frac{39\beta \sigma^2}{14NK}R.
    \end{equation*}
    Finally, noticing that $g^0=0$ implies $\mathcal{E}_{-1}\leq 2L(f(x^0)-f^*)=2L\Delta  $, we obtain
    \begin{equation*}
        \begin{aligned}
            \frac{1}{R}\sum_{r=0}^{R-1}\expect[\|\nabla f(x^r)\|^2]
            &\lesssim \frac{L\Delta  }{\gamma LR}+\frac{\mathcal{E}_{-1}}{\beta  R} + (\beta \eta KL)^2G_0 + \frac{\beta \sigma^2}{NK} \\
            &\lesssim \frac{L\Delta  }{R}+ \frac{L\Delta  }{\beta  R}+\frac{\beta \sigma^2}{NK}+(\beta \eta KL)^2G_0  \\
            &\lesssim \frac{L\Delta }{R}+\sqrt{\frac{L\Delta  \sigma^2}{NKR}}.
        \end{aligned}
    \end{equation*}
\end{proof}

\subsection{\fedavgmvr}\label{app:fedavgmvr}
In this subsection, we present the proofs for the \fedavgmvr algorithm, shown as in Algorithm \ref{alg:fedavg_mom_vr}.
\begin{algorithm}[ht]
    \caption{\fedavgmvr: \fedavg with variance-reduced momentum}
    \label{alg:fedavg_mom_vr}
    \begin{algorithmic}
        \REQUIRE{initial model $x^{-1}=x^0$ and gradient estimate $g^0$,  local learning rate $\eta$, global learning rate $\gamma$, momentum $\beta$}
        \FOR{$r=0, \cdots, R-1$}
            \FOR{each client $i\in \{1,\dots,N\}$ in parallel}
                \STATE{Initial local model $x^{r,0}_i=x^r$}
                \FOR{$k=0, \cdots, K-1$}
                    \STATE{
                            \colorbox{BPink}{Compute direction $g^{r,k}_i=
                                \nabla F(x^{r,k}_i;\xi^{r,k}_i) + (1-\beta )(g^r-\nabla F(x^{r-1};\xi^{r,k}_i))$}\\
                            \;Update local model $x^{r,k+1}_i=x^{r,k}_i-\eta g^{r,k}_i$
                    }
                \ENDFOR
            \ENDFOR
            \STATE{
                Aggregate local updates $g^{r+1}= \frac{1}{\eta N K}\sum_{i=1}^N\left(x^r - x^{r,K}_i\right)$\\
                Update global model global $x^{r+1}= x^r-\gamma g^{r+1}$
            }
        \ENDFOR
    \end{algorithmic}
\end{algorithm}

\begin{lemma}\label{lem:Fedavg_VR_grad_err}
    If $\gamma L\leq \sqrt{\frac{\beta  NK}{54}}$, the following holds for $r\geq 1$:
    \begin{equation*}
        \cE_r\leq (1-\frac{8\beta }{9})\mathcal{E}_{r-1}+\frac{4}{\beta }L^2U_r +\frac{3\beta ^2\sigma^2}{NK}+\frac{6(\gamma L)^2}{NK}\expect[\|\nabla f(x^{r-1})\|^2.
    \end{equation*}
    Also for $r=0$, it holds that
    \begin{equation*}
        \mathcal{E}_0\leq (1-\beta )\mathcal{E}_{-1}+\frac{4}{\beta }L^2U_r +\frac{3\beta ^2\sigma^2}{NK}.
    \end{equation*}
\end{lemma}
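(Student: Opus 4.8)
The plan is to mirror the structure of Lemma~\ref{lem:Fedavg_grad_err}, but to exploit the STORM-type correction in \eqref{eqn:vrm-d} so that the stochastic part of the error contracts as a martingale rather than merely being a bounded-variance term. First I would expand the aggregated update. Since $x^{r,k+1}_i=x^{r,k}_i-\eta g^{r,k}_i$ gives $g^{r+1}=\frac{1}{NK}\sum_{i,k}g^{r,k}_i$, substituting \eqref{eqn:vrm-d} yields
\[
g^{r+1}=(1-\beta )g^r+\frac{1}{NK}\sum_{i,k}\nabla F(x^{r,k}_i;\xi^{r,k}_i)-(1-\beta )\frac{1}{NK}\sum_{i,k}\nabla F(x^{r-1};\xi^{r,k}_i).
\]
Subtracting from $\nabla f(x^r)$ and inserting $\pm(1-\beta )\nabla f(x^{r-1})$ together with $\pm\nabla f_i$ at both evaluation points, I would obtain the recursive identity $\nabla f(x^r)-g^{r+1}=(1-\beta )(\nabla f(x^{r-1})-g^r)+\mathrm{drift}_r+\mathrm{noise}_r$, where $\mathrm{drift}_r=\nabla f(x^r)-\frac{1}{NK}\sum_{i,k}\nabla f_i(x^{r,k}_i)$ and $\mathrm{noise}_r=-\frac{1}{NK}\sum_{i,k}\big[(\nabla F(x^{r,k}_i;\xi^{r,k}_i)-\nabla f_i(x^{r,k}_i))-(1-\beta )(\nabla F(x^{r-1};\xi^{r,k}_i)-\nabla f_i(x^{r-1}))\big]$. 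Crucially, unlike the non-VR case the contractive term already reads $\nabla f(x^{r-1})-g^r$, so it is exactly $\mathcal{E}_{r-1}$ with no separate $\|x^r-x^{r-1}\|^2$ correction needed there.

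Next I would control the two remainders. The drift is deterministic given the local trajectory, and since $\frac{1}{NK}\sum_{i,k}\nabla f_i(x^r)=\nabla f(x^r)$, Jensen's inequality and Assumption~\ref{asp:smooth} give $\expect\|\mathrm{drift}_r\|^2\le L^2U_r$. The heart of the argument is the variance bound on $\mathrm{noise}_r$, which is where Assumption~\ref{asp:sample_smooth} enters. I would split each summand as $\beta (\nabla F(x^{r,k}_i;\xi^{r,k}_i)-\nabla f_i(x^{r,k}_i))+(1-\beta )\big[(\nabla F(x^{r,k}_i;\xi^{r,k}_i)-\nabla F(x^{r-1};\xi^{r,k}_i))-(\nabla f_i(x^{r,k}_i)-\nabla f_i(x^{r-1}))\big]$; the first piece has conditional variance $\le\beta ^2\sigma^2$ by Assumption~\ref{asp:sgd_var}, while the second, being a centered difference of the \emph{same} sample gradient at $x^{r,k}_i$ and $x^{r-1}$, has conditional variance $\le(1-\beta )^2L^2\|x^{r,k}_i-x^{r-1}\|^2$ by sample-wise smoothness. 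Since these summands are mean-zero martingale differences across $k$ (conditioned on $\mathcal F^{r,k}_i$) and independent across clients $i$, Lemma~\ref{lem:bias-var} gives $\expect\|\mathrm{noise}_r\|^2\lesssim \frac{\beta ^2\sigma^2}{NK}+\frac{L^2}{NK}\cdot\frac{1}{NK}\sum_{i,k}\expect\|x^{r,k}_i-x^{r-1}\|^2$. I would then use $\|x^{r,k}_i-x^{r-1}\|^2\le 2\|x^{r,k}_i-x^r\|^2+2\|x^r-x^{r-1}\|^2$, the definition of $U_r$, and $\|x^r-x^{r-1}\|^2=\gamma^2\|g^r\|^2\le 2\gamma^2(\|\nabla f(x^{r-1})\|^2+\mathcal{E}_{r-1})$ to convert this into $U_r$, $\mathcal{E}_{r-1}$, and $\expect\|\nabla f(x^{r-1})\|^2$ contributions.

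To close the recursion for $\cE_r=\expect\|\nabla f(x^r)-g^{r+1}\|^2$, I would expand the square of the identity above. Conditioning on $\mathcal F^r$, the term $(1-\beta )(\nabla f(x^{r-1})-g^r)$ is measurable and $\expect_r[\mathrm{noise}_r]=0$, so the cross term between them vanishes; the surviving cross term with $\mathrm{drift}_r$ is absorbed by Young's inequality with weight proportional to $\beta $, turning the leading coefficient $(1-\beta )^2$ into $(1-\beta )(1-\beta /2)\le 1-\beta $ at the cost of an extra $O(\tfrac{1}{\beta })L^2U_r$ term. The only term still proportional to $\mathcal{E}_{r-1}$ comes from $\|x^r-x^{r-1}\|^2$ inside the noise bound, with coefficient of order $\gamma^2L^2/(NK)$; the hypothesis $\gamma L\le\sqrt{\beta NK/54}$ is calibrated precisely so that this coefficient is a small multiple of $\beta $, so that adding it back to $1-\beta$ produces the stated factor $1-\tfrac{8\beta }{9}$ and leaves the advertised $\frac{4}{\beta }L^2U_r$, $\frac{3\beta ^2\sigma^2}{NK}$, and $\frac{6(\gamma L)^2}{NK}\expect\|\nabla f(x^{r-1})\|^2$ terms. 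For $r=0$ the reference choice $x^{-1}=x^0$ forces $\|x^0-x^{-1}\|^2=0$, so both the noise-induced $\mathcal{E}_{r-1}$ term and the gradient-norm term disappear, giving the cleaner factor $(1-\beta )$.

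The main obstacle I anticipate is the variance-reduction step: establishing that $\mathrm{noise}_r$ behaves like a martingale with per-step conditional variance $O(\beta ^2\sigma^2)+O(L^2\|x^{r,k}_i-x^{r-1}\|^2)$, and then feeding the resulting $\|x^r-x^{r-1}\|^2$ term back into the contraction without overwhelming it. Two points are delicate: (i) sample-wise smoothness must be applied to the \emph{same} realization $\xi^{r,k}_i$ at the two points $x^{r,k}_i$ and $x^{r-1}$, which is exactly what makes the noise difference telescope rather than merely accumulate $\sigma^2$; and (ii) the constants $\tfrac{8\beta}{9}$, $\tfrac{4}{\beta}$, $3$, and $6$ require careful (but routine) bookkeeping — via the exact orthogonality of martingale differences and tuned Young's inequalities — so that the $\mathcal{E}_{r-1}$ contribution coming from $\|x^r-x^{r-1}\|^2$ stays dominated by the $1-\beta$ contraction under the stated step-size restriction.
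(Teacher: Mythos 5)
Your proposal is correct in substance and follows essentially the same route as the paper: isolate the contraction $(1-\beta)(\nabla f(x^{r-1})-g^r)$ (correctly noting that the VR correction makes the contractive term land directly on $\mathcal{E}_{r-1}$ with no $\|x^r-x^{r-1}\|^2$ conversion cost), bound the remainder by $L^2U_r$, a $\beta^2\sigma^2/(NK)$ variance term, and a $(1-\beta)^2L^2\|x^r-x^{r-1}\|^2/(NK)$ term from the sample-wise-smooth gradient difference, then feed $\|x^r-x^{r-1}\|^2\le 2\gamma^2(\mathcal{E}_{r-1}+\|\nabla f(x^{r-1})\|^2)$ back in under the stated step-size condition. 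The one place you genuinely diverge is the grouping of the remainder: you split it into a deterministic drift $\nabla f(x^r)-\frac{1}{NK}\sum_{i,k}\nabla f_i(x^{r,k}_i)$ plus a conditionally mean-zero noise evaluated at the \emph{local} iterates $x_i^{r,k}$, whereas the paper groups it as $\frac{1}{NK}\sum_{i,k}(\nabla F(x^{r,k}_i;\xi^{r,k}_i)-\nabla F(x^{r};\xi^{r,k}_i))$ (bounded pointwise by $L^2U_r$ via Assumption~\ref{asp:sample_smooth}) plus stochastic pieces evaluated only at the \emph{fixed} points $x^r$ and $x^{r-1}$, which are genuinely i.i.d.\ across $(i,k)$ given $\mathcal{F}^r$. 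Your grouping has two costs the paper's avoids: (i) the cross term between your drift and your noise does \emph{not} vanish, since the drift at local step $k$ depends on the noise realizations at steps $j<k$ — you account for the contraction--noise and contraction--drift cross terms but not this one, so you must absorb it by Young's inequality; and (ii) your noise is only a martingale-difference sum across $k$, so Lemma~\ref{lem:bias-var} costs you a factor of $2$ that the exact variance computation in the paper's grouping does not. Both are harmless to the order of the bound, but they mean the specific constants $\frac{3\beta^2\sigma^2}{NK}$, $\frac{4}{\beta}L^2U_r$, and $\frac{6(\gamma L)^2}{NK}$ will come out somewhat larger on your route, which would then propagate into a slightly tighter numerical condition on $\gamma L$; if you want the stated constants, regroup as the paper does so that all stochastic pieces are evaluated at $x^r$ and $x^{r-1}$.
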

\begin{proof}
    \begin{equation*}
        \begin{aligned}
            &\cE_r
            = \expect \left[\left\|\frac{1}{NK}\sum_{i,\,k}\nabla F(x^{r,k}_i;\xi^{r,k}_i)+(1-\beta )\left(g^r-\frac{1}{NK}\sum_{i,\,k}\nabla F(x^{r-1};\xi^{r,k}_i)\right)-\nabla f(x^r)\right\|^2\right] \\
            =& \expect\Bigg[\Bigg\|(1-\beta )(g^r-\nabla f(x^{r-1}))+\frac{1}{NK}\sum_{i,\,k}\nabla F(x^{r,k}_i;\xi^{r,k}_i)-\nabla f(x^r) \\
            &\quad+(1-\beta )\left(\nabla f(x^{r-1})-\frac{1}{NK}\sum_{i,\,k}\nabla F(x^{r-1};\xi^{r,k}_i)\right)\Bigg\|^2\Bigg] \\
            =& (1-\beta )^2\mathcal{E}_{r-1} + \underbrace{2\expect \left[\left\langle (1-\beta )(g^r-\nabla f(x^{r-1})), \frac{1}{NK}\sum_{i,\,k}\nabla f_i(x^{r,k}_i)-\nabla f(x^r)\right\rangle\right]}_{\Lambda_1} \\
            &\hspace{-3mm}+\underbrace{\expect \left\|\frac{1}{NK}\sum_{i,\,k}\nabla F(x^{r,k}_i;\xi^{r,k}_i)-\nabla f(x^r) +(1-\beta )\left(\nabla f(x^{r-1})-\frac{1}{NK}\sum_{i,\,k}\nabla F(x^{r-1};\xi^{r,k}_i)\right) \right\|^2}_{\Lambda_2}.
        \end{aligned}
    \end{equation*}
    By the AM-GM inequality and Assumption \ref{asp:sample_smooth},
    \begin{equation*}
        \Lambda_1\leq \beta (1-\beta )^2\mathcal{E}_{r-1}+\frac{1}{\beta }L^2U_r .
    \end{equation*}
    By Assumption \ref{asp:sample_smooth},
    \begin{equation*}
        \begin{aligned}
            \Lambda_2&= \expect \Bigg[\Bigg\| \frac{1}{NK}\sum_{i,\,k}(\nabla F(x^{r,k}_i;\xi^{r,k}_i)-\nabla F(x^{r};\xi^{r,k}_i))+\beta \left(\frac{1}{NK}\sum_{i,\,k}\nabla F(x^{r,k}_i;\xi^{r,k}_i)-\nabla f(x^r)\right) \\
            &\quad\quad +(1-\beta )\left(\frac{1}{NK}\sum_{i,\,k}(\nabla F(x^{r};\xi^{r,k}_i)-\nabla F(x^{r-1};\xi^{r,k}_i))-\nabla f(x^r)+\nabla f(x^{r-1})\right) \Bigg\|^2\Bigg] \\
            &\leq 3L^2U_r  + 3\frac{\beta ^2\sigma^2}{NK}+3(1-\beta )^2\frac{L^2}{NK}\expect[\|x^r-x^{r-1}\|^2.
        \end{aligned}
    \end{equation*}
    Therefore, for $r\geq 1$,
    \begin{equation*}
        \begin{aligned}
            \cE_r
            &\leq (1-\beta )\mathcal{E}_{r-1}+\frac{4}{\beta }L^2U_r +\frac{3\beta ^2\sigma^2}{NK}+3(1-\beta )^2\frac{L^2}{NK}\expect[\|x^r-x^{r-1}\|^2] \\
            &\leq (1-\frac{8\beta }{9})\mathcal{E}_{r-1}+\frac{4}{\beta }L^2U_r +\frac{3\beta ^2\sigma^2}{NK}+\frac{6(\gamma L)^2}{NK}\expect[\|\nabla f(x^{r-1})\|^2].
        \end{aligned}
    \end{equation*}
    The last inequality is derived by $\|x^r-x^{r-1}\|^2\leq 2\gamma ^2(\|\nabla f(x^{r-1})\|^2+\|g^{r}-\nabla f(x^{r-1})\|^2)$ and $\gamma L\leq \sqrt{\frac{\beta  NK}{54}}$. 
    Similarly, for $r=0$, we can obtain
    \begin{equation*}
        \mathcal{E}_0\leq (1-\beta )\mathcal{E}_{-1}+\frac{4}{\beta }L^2U_0 +\frac{3\beta ^2\sigma^2}{NK}.
    \end{equation*}
\end{proof}

\begin{lemma}\label{lem:Fedavg_VR_client_drift}
    If $\eta KL\leq \frac{1}{4e}$, the following holds:
    \begin{equation*}
        U_r \leq 4eK^2\Xi_r + 8(\eta K)^2(2(\eta KL)^2+K^{-1})\left(\beta ^2\sigma^2+2L^2\expect[\|x^r-x^{r-1}\|^2]\right).
    \end{equation*}
\end{lemma}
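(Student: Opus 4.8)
The plan is to follow the same scheme as the non-variance-reduced drift bound in Lemma~\ref{lem:Fedavg_client_drift}, with one essential new feature: the per-step conditional variance of the variance-reduced direction $g_i^{r,k}$ is no longer the clean constant $\beta^2\sigma^2$, but instead carries a term proportional to $\|x_i^{r,k}-x^{r-1}\|^2$. Tracking this term is what produces both the explicit $\|x^r-x^{r-1}\|^2$ in the statement and a self-reference to $U_r$ that must be absorbed at the end.

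I would begin by recording the conditional mean of the increment, $\zeta_i^{r,k}=\expect[x_i^{r,k+1}-x_i^{r,k}\mid\mathcal{F}_i^{r,k}]=-\eta(\nabla f_i(x_i^{r,k})+(1-\beta)(g^r-\nabla f_i(x^{r-1})))$, and its conditional variance. Decomposing the noise of $g_i^{r,k}$ into the fresh-sample part $\beta(\nabla F(x^{r-1};\xi_i^{r,k})-\nabla f_i(x^{r-1}))$ and the gradient-difference part $(\nabla F(x_i^{r,k};\xi_i^{r,k})-\nabla F(x^{r-1};\xi_i^{r,k}))-(\nabla f_i(x_i^{r,k})-\nabla f_i(x^{r-1}))$, Assumption~\ref{asp:sgd_var} bounds the first by $\beta^2\sigma^2$ and sample-wise smoothness (Assumption~\ref{asp:sample_smooth}) bounds the second by $L^2\|x_i^{r,k}-x^{r-1}\|^2$; thus $\mathrm{Var}(g_i^{r,k}\mid\mathcal{F}_i^{r,k})\lesssim \beta^2\sigma^2+L^2\|x_i^{r,k}-x^{r-1}\|^2$. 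I would then split $\|x_i^{r,k}-x^{r-1}\|^2\leq 2\|x_i^{r,k}-x^r\|^2+2\|x^r-x^{r-1}\|^2$, so that averaging over $i,k$ turns the first piece into $U_r$ and the second into the $\|x^r-x^{r-1}\|^2$ term of the statement.

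Next, as in Lemma~\ref{lem:Fedavg_client_drift}, I would control how $\zeta_i^{r,k}$ moves across local steps. From $\|\zeta_i^{r,j}-\zeta_i^{r,j-1}\|\leq \eta L\|x_i^{r,j}-x_i^{r,j-1}\|$ and $\expect[\|x_i^{r,j}-x_i^{r,j-1}\|^2]=\expect[\|\zeta_i^{r,j-1}\|^2]+\eta^2\expect[\mathrm{Var}(g_i^{r,j-1}\mid\cdot)]$, an AM-GM step produces a recursion $\expect[\|\zeta_i^{r,j}\|^2]\leq(1+\tfrac{2}{k})\expect[\|\zeta_i^{r,j-1}\|^2]+(\text{variance remainder})$, whose geometric factor is tamed by the hypothesis $\eta KL\leq\tfrac{1}{4e}$. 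Unrolling gives $\expect[\|\zeta_i^{r,j}\|^2]\lesssim e^2\expect[\|\zeta_i^{r,0}\|^2]+(\text{accumulated variance})$, and then the martingale form of Lemma~\ref{lem:bias-var} applied to $x_i^{r,k}-x^r=\sum_{j<k}(x_i^{r,j+1}-x_i^{r,j})$ yields $\expect[\|x_i^{r,k}-x^r\|^2]\leq 2k\sum_{j<k}\expect[\|\zeta_i^{r,j}\|^2]+2\eta^2\sum_{j<k}\expect[\mathrm{Var}(g_i^{r,j}\mid\cdot)]$. Averaging over $i$ and $k$ then assembles the leading $4eK^2\Xi_r$ term and the variance prefactor $8(\eta K)^2(2(\eta KL)^2+K^{-1})$.

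The main obstacle, and the genuine departure from the non-variance-reduced case, is the self-reference: the accumulated-variance terms on the right contain a multiple of $U_r$ itself (through the $\|x_i^{r,k}-x^r\|^2$ piece of the variance). The closing step is therefore to collect every such contribution, note that its coefficient is $O((\eta KL)^2)=O(1/e^2)$ under the standing hypothesis, and move this fraction of $U_r$ to the left-hand side. Ensuring the surviving constants are exactly those advertised is the delicate bookkeeping; by contrast, the smoothness, AM-GM, and Lemma~\ref{lem:bias-var} applications are routine.
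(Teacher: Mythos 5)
Your proposal matches the paper's proof in every essential step: the same bias--variance decomposition of the local increment, the same bound on the conditional variance by $\beta^2\sigma^2+L^2\|x_i^{r,k}-x^{r-1}\|^2$ (the paper puts the $\beta$ on the fresh noise at $x_i^{r,k}$ rather than at $x^{r-1}$, but this is immaterial), the same split $\|x_i^{r,k}-x^{r-1}\|^2\le 2\|x_i^{r,k}-x^r\|^2+2\|x^r-x^{r-1}\|^2$, the same AM--GM recursion on $\zeta_i^{r,j}$ unrolled under $\eta KL\le\frac{1}{4e}$, the same use of the martingale case of Lemma~\ref{lem:bias-var}, and the same final absorption of the self-referential $U_r$ contribution into the left-hand side via the $O((\eta KL)^2)$ coefficient (the paper's condition $8(\eta L)^2+8e^2(\eta KL)^4\le\frac{1}{2}$). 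This is the paper's argument; only the constant bookkeeping remains to be written out.
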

\begin{proof}
    Note that $\zeta_{i}^{r,k}=-\eta (\nabla f_i(x_i^{r,k})+(1-\beta )(g^r-\nabla f_i(x^{r-1}))$. Then we have
    \begin{equation*}
        \begin{aligned}
            \expect [\|\zeta_i^{r,j}-\zeta_i^{r,j-1}\|^2]
            &\leq \eta^2L^2\expect [\|x^{r,j}_i-x^{r,j-1}_i\|^2] \\
            &= \eta^2L^2\left(\expect [\|\zeta^{r,j-1}_i\|^2]+\expect[\var[x^{r,j}_i-x^{r,j-1}_i|\mathcal{F}_i^{r,j-1}]]\right).
        \end{aligned}
    \end{equation*}
    Here we use bias-variance decomposition and $\var [\cdot|\cdot]$ stands for the conditional variance. Since
    \begin{equation*}
        \begin{aligned}
            &\expect[\var[x^{r,j}_i-x^{r,j-1}_i|\mathcal{F}_i^{r,j-1}]] \\
            =&\eta^2\expect\left[\left\|\nabla F(x^{r,j-1}_i;\xi^{r,j-1}_i)-\nabla f_i(x_i^{r,j-1}) - (1-\beta )\left(\nabla F(x^{r-1};\xi^{r,j-1}_i)-\nabla f_i(x^{r-1})\right)\right\|^2\right] \\
            \leq &\eta^2\left(2\beta^2\sigma^2+ 2(1-\beta )^2L^2\expect[\|x^{r-1}-x_i^{r,j-1}\|^2\right),
        \end{aligned}
    \end{equation*}
    then 
    \begin{equation*}
        \begin{aligned}
            &\expect [\|\zeta_i^{r,j}-\zeta_i^{r,j-1}\|^2] \\
            &\qquad \leq \eta^2L^2\left(\expect [\|\zeta^{r,j-1}_i\|^2+2\beta ^2\eta^2\sigma^2+ 2\eta^2(1-\beta )^2L^2\expect[\|x^{r-1}-x_i^{r,j-1}\|^2]\right) \\
            &\qquad \leq \eta^2L^2\left(\expect [\|\zeta^{r,j-1}_i\|^2]+2\beta ^2\eta^2\sigma^2+ 4\eta^2L^2\expect[\|x^{r-1}-x^r\|^2+\|x^{r}-x_i^{r,j-1}\|^2]\right).
        \end{aligned}
    \end{equation*}
    Therefore for any $1\leq j\leq k-1\leq K-2$,
    \begin{equation}\label{eq:Fedavg_VR_client_drift_bound}
        \begin{aligned}
            &\expect \|\zeta^{r,j}_i\|^2\leq (1+\frac{1}{k}) \expect [\|\zeta^{r,j-1}_i\|^2 + (1+k)\expect [\|\zeta^{r,j}_i-\zeta^{r,j-1}_i\|^2] \\
            \leq &\left(1+\frac{2}{k}\right) \expect \|\zeta^{r,j-1}_i\|^2  + (k+1)\eta^2L^2\left(2\beta ^2\eta^2\sigma^2+ 4\eta^2L^2\expect[\|x^{r-1}-x^r\|^2+\|x^{r}-x_i^{r,j-1}\|^2]\right) \\
            \leq &e^2\expect \|\zeta^{r,0}_i\|^2 + 8k^2L^2\eta^4(2\beta ^2\sigma^2+4L^2\expect[\|x^r-x^{r-1}\|^2]) +4e^2k(\eta L)^4\sum_{j'=0}^{j-1}\expect[\|x_i^{r,j'}-x^r\|^2].
        \end{aligned}
    \end{equation}
    Here the second inequality is by $\eta L\leq \frac{1}{K}\leq \frac{1}{k+1}$. The last inequality is by unrolling the recursive bound and using $\left(1+\frac{2}{k}\right)^k\leq e^2$.
    By Lemma \ref{lem:bias-var}, it holds that
    \begin{align}
            &\expect [\|x^{r,k}_i-x^r\|^2]\\
            \leq&2\expect \left[\left\|\sum_{j=0}^{k-1}\zeta^{r,j}_i\right\|^2\right] + 2\sum_{j=0}^{k-1}\expect[\var[x^{r,j+1}_i-x^{r,j}_i|\mathcal{F}_i^{r,j}] ]\\
            \leq& 2k\sum_{j=0}^{k-1}\expect[\|\zeta^{r,j}_i\|^2] + 2\sum_{j=0}^{k-1}\left(2\beta ^2\eta^2\sigma^2+ 4\eta^2L^2\expect[\|x^{r-1}-x^r\|^2+\|x^{r}-x_i^{r,j}\|^2]\right).\label{eqn:nvidnvgsdx}
    \end{align}
    Summing up \eqref{eqn:nvidnvgsdx} over $k=0,\dots,K-1$, using \eqref{eq:Fedavg_VR_client_drift_bound} and $8(\eta L)^2 + 8e^2(\eta KL)^4\leq \frac{1}{2}$ due to the condition on $\eta$,
    we have
    \begin{equation*}
        \frac{1}{2K}\sum_{k=0}^{K-1}\expect [\|x^{r,k}_i-x^r\|^2
        \leq 2eK^2\expect[\|\zeta_i^{r,0}\|^2]+(8(\eta K)^4L^2+4\eta^2K)\left(\beta ^2\sigma^2+2L^2\expect[\|x^r-x^{r-1}\|^2]\right).
    \end{equation*}
    This implies
    \begin{equation*}
        U_r \leq 4eK^2\Xi_r + 8(\eta K)^2(2(\eta KL)^2+K^{-1})\left(\beta ^2\sigma^2+2L^2\expect[\|x^r-x^{r-1}\|^2]\right).
    \end{equation*}
\end{proof}

\begin{lemma}\label{lem:Fedavg_VR_grad_norm}
    If $\gamma L\leq \frac{1}{24}$ and $288e(\eta KL)^2\left(\frac{289}{72}(1-\beta )^2+8e(\gamma \beta  LR)^2\right)\leq \beta ^2$,
    then the following holds:
    \begin{equation*}
        \sum_{r=0}^{R-1}\Xi_r \leq \frac{\beta ^2}{288eK^2L^2}\sum_{r=-1}^{R-2}(\mathcal{E}_{r}+\expect[\|\nabla f(x^{r})\|^2])+4\eta^2\beta ^2eRG_0 .
    \end{equation*}
\end{lemma}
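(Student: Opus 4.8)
The plan is to mirror the \fedavgm argument of Lemma~\ref{lem:Fedavg_grad_norm} verbatim in structure, with the only genuinely new ingredient being the variance-reduction correction. First I would record the conditional drift at the start of each round. Since $x_i^{r,0}=x^r$ and $\xi_i^{r,0}$ is fresh, taking $\expect[\,\cdot\mid\mathcal{F}_i^{r,0}]$ of the variance-reduced direction \eqref{eqn:vrm-d} gives
\[
\zeta_i^{r,0}=-\eta\big(\nabla f_i(x^r)+(1-\beta)(g^r-\nabla f_i(x^{r-1}))\big),
\]
exactly the expression already used inside the proof of Lemma~\ref{lem:Fedavg_VR_client_drift}. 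The key algebraic step is the regrouping
\[
\zeta_i^{r,0}=-\eta\Big(\beta\nabla f_i(x^r)+(1-\beta)\big(g^r+\nabla f_i(x^r)-\nabla f_i(x^{r-1})\big)\Big),
\]
which isolates the per-client gradient $\nabla f_i(x^r)$ (the piece that must be unrolled back to $G_0$) from the aggregate estimate $g^r$ together with the variance-reduction difference $\nabla f_i(x^r)-\nabla f_i(x^{r-1})$ (the piece that should be controlled by $\mathcal{E}_{r-1}+\expect\|\nabla f(x^{r-1})\|^2$).

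Next I would apply Young's inequality twice with carefully chosen parameters: first splitting $\beta\nabla f_i(x^r)$ from the remainder with parameter $\tfrac13$ (yielding weight $4\beta^2$ on $\|\nabla f_i(x^r)\|^2$ and $\tfrac43$ on the remainder), then splitting $g^r$ from the gradient difference inside the remainder with parameter $\tfrac12$. Using $\expect\|g^r\|^2\le 2(\mathcal{E}_{r-1}+\expect\|\nabla f(x^{r-1})\|^2)$, $L$-smoothness of the difference, and $\|x^r-x^{r-1}\|^2=\gamma^2\|g^r\|^2\le 2\gamma^2(\mathcal{E}_{r-1}+\expect\|\nabla f(x^{r-1})\|^2)$ together with $\gamma L\le\tfrac1{24}$ (so $\gamma^2L^2\le\tfrac1{576}$), and averaging over $i$, I would obtain the per-round bound
\[
\Xi_r\le 4\eta^2\beta^2\,\tfrac1N\sum_i\expect\|\nabla f_i(x^r)\|^2+\tfrac{289}{72}\,\eta^2(1-\beta)^2\big(\mathcal{E}_{r-1}+\expect\|\nabla f(x^{r-1})\|^2\big).
\]
For the first term I would then quote the recursive gradient-growth estimate \eqref{eqn:vninbvsdvds}, which holds under Assumption~\ref{asp:smooth} alone and is untouched by variance reduction, averaged over clients to give $\tfrac1N\sum_i\expect\|\nabla f_i(x^r)\|^2\le eG_0+2e(r+1)\gamma^2L^2\sum_{j=0}^{r-1}(\mathcal{E}_j+\expect\|\nabla f(x^j)\|^2)$.

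Summing over $r=0,\dots,R-1$ and interchanging the order of the resulting double sum (bounding $r+1\le R$ and noting each inner index is counted at most $R$ times, which produces the factor $(\gamma L R)^2$) would give
\[
\sum_{r=0}^{R-1}\Xi_r\le \eta^2\Big(\tfrac{289}{72}(1-\beta)^2+8e(\gamma\beta L R)^2\Big)\sum_{r=-1}^{R-2}\big(\mathcal{E}_r+\expect\|\nabla f(x^r)\|^2\big)+4\eta^2\beta^2 eRG_0,
\]
and the proof closes by invoking the hypothesis $288e(\eta KL)^2(\tfrac{289}{72}(1-\beta)^2+8e(\gamma\beta L R)^2)\le\beta^2$, which is precisely the statement that the bracketed coefficient is at most $\tfrac{\beta^2}{288eK^2L^2}$.

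The main obstacle is the variance-reduction difference $(1-\beta)(\nabla f_i(x^r)-\nabla f_i(x^{r-1}))$, which is absent in \fedavgm and couples the round-$r$ local gradient to the previous global iterate. The two Young parameters must be tuned so that, simultaneously, the weight on $\|\nabla f_i(x^r)\|^2$ stays at $4\beta^2$ (keeping the $G_0$ and $(\gamma L R)^2$ contributions at the doubled levels $4\eta^2\beta^2 eRG_0$ and $8e$ seen in the statement) and the $(1-\beta)^2$ coefficient collapses to essentially the \fedavgm value; here $\tfrac{289}{72}=4+\tfrac1{72}$, where the extra $\tfrac1{72}$ is exactly the residue of the difference term after using $\gamma^2L^2\le\tfrac1{576}$ (indeed $\tfrac83\big(\tfrac32+\tfrac{3}{576}\big)=\tfrac{289}{72}$). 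Aligning these constants with the stated condition is the only real bookkeeping; conceptually the argument is a direct transcription of the \fedavgm proof.
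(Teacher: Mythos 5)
Your proposal is correct and follows essentially the same route as the paper's proof: the same decomposition of $\zeta_i^{r,0}$ into a $4\eta^2\beta^2\|\nabla f_i(x^r)\|^2$ piece and a $\tfrac{289}{144}\eta^2(1-\beta)^2\|g^r\|^2$ piece (your two-parameter Young split yields numerically identical constants to the paper's repeated $2$--$2$ splits), the same recursive gradient-growth bound \eqref{eqn:vninbvsdvds} with $q=1/r$, and the same summation/reordering to produce the $8e(\gamma\beta LR)^2$ term before invoking the stated condition on $\eta$. No gaps.
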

\begin{proof}
    Recall that $\zeta_i^{r,0}=-\eta((1-\beta )(g^r-\nabla f_i(x^{r-1}))+\nabla f_i(x^r))$. Consequently, we have
    \begin{equation*}
        \begin{aligned}
            \|\zeta_i^{r,0}\|^2 
            &\leq 2\eta^2\left((1-\beta )^2\|g^r\|^2+\|\nabla f_i(x^r)-(1-\beta )\nabla f_i(x^{r-1})\|^2\right) \\
            &\leq 2\eta^2(1-\beta )^2(1+2(\gamma L)^2)\|g^r\|^2+4\eta^2\beta ^2\|\nabla f_i(x^r)\|^2 \\
            &\leq \frac{289}{144}\eta^2(1-\beta )^2\|g^r\|^2+4\eta^2\beta ^2\|\nabla f_i(x^r)\|^2.
        \end{aligned}
    \end{equation*}
    Using Young's inequality, we can obtain that for any $q>0$,
    \begin{equation*}
        \begin{aligned}
            \expect [\|\nabla f_i(x^r)\|^2]
            &\leq (1+q)\expect[\|\nabla f_i(x^{r-1})\|^2]+(1+q^{-1})L^2\expect\|x^r-x^{r-1}\|^2\\
            &\leq (1+q)\expect[\|\nabla f_i(x^{r-1})\|^2]+2(1+q^{-1})(\gamma L)^2(\mathcal{E}_{r-1}+\expect[\|\nabla f(x^{r-1})\|^2])\\
            &\leq (1+q)^r\expect[\|\nabla f_i(x^0)\|^2]+\frac{2}{q}(\gamma L)^2\sum_{j=0}^{r-1}(\mathcal{E}_{j}+\expect[\|\nabla f(x^{j})\|^2])(1+q)^{r-j}.
        \end{aligned}
    \end{equation*}
    Taking $q=\frac{1}{r}$ in the above, we have
    \begin{equation*}
        \expect [\|\nabla f_i(x^r)\|^2]\leq e\expect [\|\nabla f_i(x^0)\|^2]+2e(r+1)(\gamma L)^2\sum_{j=0}^{r-1}(\mathcal{E}_{j}+\expect[\|\nabla f(x^{j})\|^2]).
    \end{equation*}
    This inequality holds as well trivially for $r=0$. Therefore, we have
    \begin{equation*}
        \begin{aligned}
            \sum_{r=0}^{R-1}\Xi_r
            &\leq \sum_{r=0}^{R-1} \expect\left[\frac{289}{144}\eta^2(1-\beta )^2 \|g^r\|^2 + 4\eta^2\beta ^2\frac{1}{N}\sum_{i=1}^N\|\nabla f_i(x^r)\|^2\right] \\
            &\leq \sum_{r=0}^{R-1} \frac{289}{72}\eta^2(1-\beta )^2(\mathcal{E}_{r-1}+\expect[\|\nabla f(x^{r-1})\|^2]) \\ 
            &\quad\quad +4\eta^2\beta ^2\sum_{r=0}^{R-1} \left(\frac{e}{N}\sum_i\expect [\|\nabla f_i(x^0)\|^2]+2e(r+1)(\gamma L)^2\sum_{j=0}^{r-1}(\mathcal{E}_{j}+\expect[\|\nabla f(x^{j})\|^2])\right) \\
            &\leq \frac{289}{72}\eta^2(1-\beta )^2\sum_{r=0}^{R-1} (\mathcal{E}_{r-1}+\expect[\|\nabla f(x^{r-1})\|^2]) \\
            &\quad\quad 4\eta^2\beta ^2\left( eRG_0 +2e(\gamma LR)^2\sum_{r=0}^{R-2} (\mathcal{E}_{r}+\expect[\|\nabla f(x^{r})\|^2])\right) \\
            &\leq \frac{\beta ^2}{288eK^2L^2}\sum_{r=-1}^{R-2}(\mathcal{E}_{r}+\expect[\|\nabla f(x^{r})\|^2])+4\eta^2\beta ^2eRG_0 .
        \end{aligned}
    \end{equation*}
    Here the last inequality is due to the upper bound of $\eta$. 
\end{proof}

\begin{thm}\label{app:thm:fedavg_mvr}
    Under Assumption \ref{asp:sample_smooth} and \ref{asp:sgd_var}, if we take $g^0=\frac{1}{NB}\sum_{i=1}^N\sum_{b=1}^{B}\nabla F(x^0;\xi^{b}_i)$ with $\{\xi^{b}_i\}_{b=1}^B\overset{iid}{\sim}\mathcal{D}_i$ and set 
    \begin{equation}\label{eqn:fedavg-mvr-para}
        \begin{aligned}
            &\beta  = \min\left\{c, \left(\frac{NKL^2\Delta^2  }{\sigma^4R^2}\right)^{1/3}\right\} \text{ for any constant }c\in (0,1], \quad \gamma  = \min\left\{\frac{1}{24L}, \sqrt{\frac{\beta  NK}{54L^2}}\right\},\\
            &\eta KL\lesssim \min\left\{\left(\frac{L\Delta }{G_0\gamma LR}\right)^{1/2}, \left(\frac{\beta }{N}\right)^{1/2}, \left(\frac{\beta }{NK}\right)^{1/4}\right\},\quad B=\left\lceil\frac{K}{R\beta ^2}\right\rceil,
        \end{aligned}
    \end{equation}
    \fedavgmvr converges as 
    \begin{equation*}
        \frac{1}{R}\sum_{r=0}^{R-1}\expect[\|\nabla f(x^r)\|^2]\lesssim \left(\frac{L\Delta  \sigma}{NKR}\right)^{2/3}+\frac{L\Delta }{R}.
    \end{equation*}
    {Alternatively, if $B= \Theta(KR)$ and $\beta  = \min\left\{\frac{1}{R}, \left(\frac{NKL^2\Delta^2 }{\sigma^4R^2}\right)^{1/3}\right\}$, then \fedavgmvr converges as 
    \begin{equation*}
        \frac{1}{R}\sum_{r=0}^{R-1}\expect[\|\nabla f(x^r)\|^2]\lesssim \left(\frac{L\Delta  \sigma}{NKR}\right)^{2/3}+ \frac{\sigma^2}{NKR}+\frac{L\Delta }{R}.
    \end{equation*}}
\end{thm}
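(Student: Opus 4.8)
The plan is to follow verbatim the template used for \fedavgm in Theorem~\ref{thm:Fedavg-m}, but driving the recursion with the three variance-reduced lemmas (Lemmas~\ref{lem:Fedavg_VR_grad_err}, \ref{lem:Fedavg_VR_client_drift}, and \ref{lem:Fedavg_VR_grad_norm}) together with the one-step descent bound of Lemma~\ref{lem:onestep}. The whole argument collapses into chaining these four estimates into a single self-improving inequality for $\frac{1}{R}\sum_{r=0}^{R-1}\expect[\|\nabla f(x^r)\|^2]$, after which the stated parameters are plugged in.

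First I would substitute the client-drift bound of Lemma~\ref{lem:Fedavg_VR_client_drift} into the error recursion of Lemma~\ref{lem:Fedavg_VR_grad_err}. Because $U_r$ reintroduces the displacement $\|x^r-x^{r-1}\|^2$, I would eliminate it through $\|x^r-x^{r-1}\|^2\le 2\gamma^2(\|\nabla f(x^{r-1})\|^2+\cE_{r-1})$, so that the recursion closes in $\cE_r$, $\Xi_r$, and $\|\nabla f(x^r)\|^2$ alone. Summing over $r=0,\dots,R-1$ and inserting Lemma~\ref{lem:Fedavg_VR_grad_norm} re-expresses $\sum_r\Xi_r$ in terms of $\sum_r(\cE_r+\|\nabla f(x^r)\|^2)$ and the initialization quantity $G_0$. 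The learning-rate ceilings in \eqref{eqn:fedavg-mvr-para} are calibrated so that, after this substitution, $\sum_r\cE_r$ appears with a coefficient below $1$ by an $\Theta(\beta)$ margin while $\sum_r\|\nabla f(x^r)\|^2$ appears with a coefficient small enough that, once combined with the $\tfrac{13\gamma}{24}\cE_r$ term of Lemma~\ref{lem:onestep}, a strictly negative coefficient on $\sum_r\|\nabla f(x^r)\|^2$ survives. Rearranging isolates $\sum_r\cE_r\lesssim \frac{1}{\beta}\cE_{-1}+\text{(small)}\sum_r\|\nabla f(x^r)\|^2+\beta^2(\eta KL)^2G_0R+\frac{\beta^2\sigma^2}{NK}R$.

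Plugging this into the telescoped form of Lemma~\ref{lem:onestep} gives
\begin{equation*}
\frac{1}{R}\sum_{r=0}^{R-1}\expect[\|\nabla f(x^r)\|^2]\lesssim \frac{L\Delta}{\gamma L R}+\frac{\cE_{-1}}{\beta R}+(\beta\eta KL)^2G_0+\frac{\beta^2\sigma^2}{NK}.
\end{equation*}
The decisive difference from \fedavgm is the initialization: averaging $B$ minibatches per client makes $\cE_{-1}=\expect[\|\nabla f(x^0)-g^0\|^2]\le \sigma^2/(NB)$ small, rather than $\cE_{-1}\le 2L\Delta$, which removes the $L\Delta/(\beta R)$ term responsible for the weaker $1/2$ exponent of \fedavgm. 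With $B=\lceil K/(R\beta^2)\rceil$ this yields $\cE_{-1}/(\beta R)\le \beta\sigma^2/(NK)$, and the choice $\beta=(NKL^2\Delta^2/(\sigma^4R^2))^{1/3}$ is exactly the value equating this with the descent term $L\Delta/(\gamma LR)$ on the branch $\gamma L=\sqrt{\beta NK/54}$, both collapsing to $(L\Delta\sigma/(NKR))^{2/3}$; on the branch $\gamma L=1/24$ the descent term is $\lesssim L\Delta/R$. The residual terms are dominated: $(\beta\eta KL)^2G_0\lesssim \beta^2 L\Delta/(\gamma LR)\le L\Delta/R$ by the $\eta$-ceiling and $\beta\le1$, and the pure-noise term obeys $\beta^2\sigma^2/(NK)\le(L\Delta\sigma/(NKR))^{2/3}$ since their ratio equals $\beta\le1$. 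The alternative statement follows from the identical chain with $B=\Theta(KR)$ and the smaller $\beta$, for which $\cE_{-1}/(\beta R)$ instead contributes the explicit $\sigma^2/(NKR)$ summand rather than being folded into the $2/3$ rate.

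Given these lemmas, the remaining work is assembly, and the main obstacle is the simultaneous feasibility of all the learning-rate constraints together with the exact term-balancing. Concretely, one must check that the single choice in \eqref{eqn:fedavg-mvr-para} satisfies every precondition invoked along the way --- $\gamma L\le\sqrt{\beta NK/54}$ for Lemma~\ref{lem:Fedavg_VR_grad_err}, $\eta KL\le 1/(4e)$ and $8(\eta L)^2+8e^2(\eta KL)^4\le\tfrac12$ for Lemma~\ref{lem:Fedavg_VR_client_drift}, and the combined ceiling of Lemma~\ref{lem:Fedavg_VR_grad_norm} --- and that under these the self-bounding absorption leaves a strictly negative gradient-norm coefficient. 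The conceptually important point, and the reason the exponent improves from the $1/2$ of \fedavgm to $2/3$, is that the variance-reduced recursion of Lemma~\ref{lem:Fedavg_VR_grad_err} tolerates the far larger step size $\gamma L\asymp\sqrt{\beta NK}$ (versus $\gamma L\asymp\beta$ for \fedavgm); verifying that this enlarged $\gamma$, the batch-reduced $\cE_{-1}$, and the paired choice of $\beta$ and $B$ all conspire to land every non-leading term at or below $(L\Delta\sigma/(NKR))^{2/3}+L\Delta/R$ is the delicate bookkeeping, but it is bookkeeping rather than a new idea.
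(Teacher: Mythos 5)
Your plan is essentially the paper's own proof: substitute Lemma~\ref{lem:Fedavg_VR_client_drift} into Lemma~\ref{lem:Fedavg_VR_grad_err}, eliminate $\|x^r-x^{r-1}\|^2$ via $2\gamma^2(\|\nabla f(x^{r-1})\|^2+\cE_{r-1})$, sum and absorb with Lemma~\ref{lem:Fedavg_VR_grad_norm}, feed the result into Lemma~\ref{lem:onestep}, and exploit the batched initialization $\cE_{-1}\le\sigma^2/(NB)$ together with the enlarged step $\gamma L\asymp\sqrt{\beta NK}$ and $\beta\asymp(NKL^2\Delta^2/(\sigma^4R^2))^{1/3}$ to land on the $2/3$ rate. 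The only quibble is a power-of-$\beta$ slip in your bookkeeping: after dividing by the $\Theta(\beta)$ contraction margin the noise contribution is $\beta\sigma^2/(NK)$, not $\beta^2\sigma^2/(NK)$, which on the relevant branch \emph{equals} $(L\Delta\sigma/(NKR))^{2/3}$ rather than being dominated by it --- the stated rate is unaffected.
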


\begin{proof}
    Combine Lemma \ref{lem:Fedavg_VR_grad_err}, \ref{lem:Fedavg_VR_client_drift} and we have
    \begin{equation*}
        \begin{array}{c}
            \begin{aligned}
                \cE_r
                &\leq  (1-\frac{8\beta }{9})\mathcal{E}_{r-1}+\frac{(6\gamma L)^2}{NK}\expect[\|\nabla f(x^{r-1})\|^2]+\frac{3\beta ^2\sigma^2}{NK} \\
                &\quad\quad+\frac{4}{\beta }L^2\left( 4eK^2\Xi_r + 8(\eta K)^2(2(\eta KL)^2+K^{-1})(\beta ^2\sigma^2+2L^2\expect[\|x^r-x^{r-1}\|^2])\right)
            \end{aligned} \\
            \mathcal{E}_0\leq  (1-\beta )\mathcal{E}_{-1}+\frac{3\beta ^2\sigma^2}{NK}+\frac{4}{\beta }L^2\left(4eK^2\Xi_0 + 8(\eta K)^2(2(\eta KL)^2+K^{-1}))\beta ^2\sigma^2\right)
        \end{array}
    \end{equation*}
    Summing over $r$ from $0$ to $R-1$ and applying Lemma \ref{lem:Fedavg_VR_grad_norm},
    \begin{equation*}
        \begin{aligned}
            &\sum_{r=0}^{R-1}\cE_r\\
            \leq &(1-\frac{8\beta }{9})\sum_{r=-1}^{R-2}\cE_r + \frac{6(\gamma L)^2}{NK}\expect \left[\sum_{r=0}^{R-2}\|\nabla f(x^{r})\|^2
        \right] + \frac{3\beta ^2\sigma^2}{NK}R \\
            &\;\; +\frac{4}{\beta }L^2\left(4eK^2\sum_{r=0}^{R-1}\Xi_r + 8(\eta K)^2(2(\eta KL)^2+\frac{1}{K})\left(R\beta ^2\sigma^2+2L^2\sum_{r=0}^{R-1}\expect[\|x^r-x^{r-1}\|^2]\right)\right) \\
            \leq& (1-\frac{7\beta }{9})\sum_{r=-1}^{R-2}\cE_r + \left(\frac{6(\gamma L)^2}{NK}+\frac{\beta }{9}\right)\expect [\sum_{r=-1}^{R-2}\|\nabla f(x^{r})\|^2]+64\beta (e\eta KL)^2RG_0  \\
            &\quad +\frac{3\beta ^2\sigma^2}{NK}R+32\beta (\eta KL)^2\left(\frac{1}{K}+2(\eta KL)^2\right)\sigma^2R \\
            \leq& (1-\frac{7\beta }{9})\sum_{r=-1}^{R-2}\cE_r + \frac{2\beta }{9}\expect \left[\sum_{r=-1}^{R-2}\|\nabla f(x^{r})\|^2\right]+64\beta (e\eta KL)^2RG_0 +\frac{4\beta ^2\sigma^2}{NK}R.
        \end{aligned}
    \end{equation*}
    Here in the second inequality, we apply
    \begin{equation*}
        \left\{
        \begin{array}{c}
            32\beta (\eta KL)^2(\frac{1}{K}+2(\eta KL)^2) \leq \frac{\beta ^2}{NK}, \\
            \frac{128(\eta KL)^2}{\beta }(\frac{1}{K}+2(\eta KL)^2)(\gamma L)^2 \leq \frac{\beta }{18}, \\
            \gamma {L}\leq\sqrt{\frac{\beta  NK}{54}}.
        \end{array}\right.
    \end{equation*}
    Therefore, we obtain
    \begin{equation*}
        \sum_{r=0}^{R-1}\cE_r\leq \frac{9}{7\beta }\mathcal{E}_{-1} + \frac{2}{7}\expect \left[\sum_{r=-1}^{R-2}\|\nabla f(x^{r})\|^2\right] + \frac{576}{7}(e\eta KL)^2G_0 R+ \frac{36\beta \sigma^2}{7NK}R.
    \end{equation*}
    Combine this inequality with Lemma \ref{lem:onestep} and we get
    \begin{equation*}
        \frac{1}{\gamma }\expect [f(x^R)-f(x^0)]\leq -\frac{1}{7}\sum_{r=0}^{R-1}\expect[\|\nabla f(x^r)\|^2] + \frac{39}{56\beta }\mathcal{E}_{-1}+ \frac{312}{7}(e\eta KL)^2G_0 R+ \frac{39\beta \sigma^2}{14NK}R.
    \end{equation*}
    Finally, for $B=\left\lceil\frac{K}{R\beta ^2}\right\rceil$, noticing that $g^0=\frac{1}{NB}\sum_{i}\sum_{b=1}^B\nabla F(x^0;\xi_i^b)$ implies $\mathcal{E}_{-1}\leq \frac{\sigma^2}{NB}\leq \frac{\beta ^2\sigma^2R}{NK}$ and thus
    \begin{equation*}
        \begin{aligned}
            \frac{1}{R}\sum_{r=0}^{R-1}\expect[\|\nabla f(x^r)\|^2]
            &\lesssim \frac{L\Delta  }{\gamma LR}+\frac{\mathcal{E}_{-1}}{\beta  R} + (\eta KL)^2G_0 + \frac{\beta \sigma^2}{NK} \\
             &\lesssim \frac{L\Delta  }{\gamma LR} + \frac{\beta \sigma^2}{NK} \\
            &\lesssim \frac{L\Delta  }{R}+ \frac{L\Delta  }{\sqrt{\beta  NK} R}+\frac{\beta \sigma^2}{NK} \\
            &\lesssim \frac{L\Delta }{R}+\left(\frac{L\Delta  \sigma}{NKR}\right)^{2/3}
        \end{aligned}
    \end{equation*}
    Similarly, for $B=KR$, $\mathcal{E}_{-1}\leq \frac{\sigma^2}{NB}\leq \frac{\sigma^2}{NKR}$, and we have
    \begin{equation*}
        \begin{aligned}
            \frac{1}{R}\sum_{r=0}^{R-1}\expect[\|\nabla f(x^r)\|^2]
            &\lesssim \frac{L\Delta  }{\gamma LR}+\frac{\mathcal{E}_{-1}}{\beta  R} + (\eta KL)^2G_0 + \frac{\beta \sigma^2}{NK} \\
            &\lesssim \frac{L\Delta  }{\gamma LR} +\frac{\sigma^2}{\beta NKR^2} + \frac{\beta \sigma^2}{NK} \\
            &\lesssim \frac{L\Delta  }{R}+ \frac{L\Delta  }{\sqrt{\beta  NK} R}+\frac{\sigma^2}{\beta NKR^2}+\frac{\beta \sigma^2}{NK} \\
            &\lesssim \frac{L\Delta }{R}+\left(\frac{L\Delta  \sigma}{NKR}\right)^{2/3} + \frac{\sigma^2}{NKR}.
        \end{aligned}
    \end{equation*}
\end{proof}

\newpage
\section{\scaffold with momentum}
\subsection{\scaffoldm}\label{app:scaffoldm}
In this subsection, we present the proofs for the \scaffoldm algorithm.
\begin{lemma}\label{lem:scaffold_grad_err}
    If $\gamma L\leq \frac{\beta }{12}$, the following holds for $r\geq 1$:
    \begin{equation*}
        \cE_r\leq \left(1-\frac{8\beta }{9}\right)\mathcal{E}_{r-1}+\frac{16}{\beta }(\gamma L)^2\expect[\|\nabla f(x^{r-1})\|^2]+\frac{4\beta ^2\sigma^2}{SK}+10\beta  L^2U_r +6\beta ^2\q V_r.
    \end{equation*}
    In addition,
    \begin{equation*}
        \mathcal{E}_0\leq (1-\beta )\mathcal{E}_{-1}+\frac{4\beta ^2\sigma^2}{SK}+8\beta  L^2U_0 +4\beta ^2\q V_0.
    \end{equation*}
\end{lemma}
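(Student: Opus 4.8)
The plan is to mirror the full-participation argument of Lemma~\ref{lem:Fedavg_grad_err}, while absorbing the two extra error sources specific to \scaffoldm: the variance from sampling only $S$ of $N$ clients, and the bias and variance contributed by the control variables $c_i^r, c^r$. I would first unfold the aggregated estimate. Using $x^r - x_i^{r,K} = \eta\sum_k g_i^{r,k}$ together with the local rule $g_i^{r,k} = \beta(\nabla F(x_i^{r,k};\xi_i^{r,k}) - c_i^r + c^r) + (1-\beta)g^r$, one gets
\[
 g^{r+1} = (1-\beta)g^r + \frac{\beta}{SK}\sum_{i\in\cS_r}\sum_k\big(\nabla F(x_i^{r,k};\xi_i^{r,k}) - c_i^r + c^r\big)=:(1-\beta)g^r+\beta A.
\]
Then I split $\nabla f(x^r) - g^{r+1} = (1-\beta)(\nabla f(x^r) - g^r) + \beta(\nabla f(x^r) - A)$ and expand the square. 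The cross term is dispatched by Young's inequality after replacing $A$ by its conditional mean; since $\cS_r$ is sampled independently of the local trajectories and $c^r = \frac1N\sum_i c_i^r$, the control variables cancel and $\expect_r[A] = \frac{1}{NK}\sum_i\sum_k\nabla f_i(x_i^{r,k})$, exactly as in \fedavgm. This produces a leading $(1-\beta)\mathcal{E}_{r-1}$ after a $(1+\beta/2)$ factor, plus the client-drift bias $\|\nabla f(x^r)-\expect_r[A]\|^2 \le L^2U_r$ via Assumption~\ref{asp:smooth}.

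The crux is the variance $\expect[\|A-\expect_r[A]\|^2]$, which is where $\tfrac{\sigma^2}{SK}$, $V_r$, and $\q$ appear. Writing $A=\frac1S\sum_{i\in\cS_r}w_i$ with $w_i=\frac1K\sum_k(\nabla F(x_i^{r,k};\xi_i^{r,k}) - c_i^r + c^r)$, I would treat the within-client gradient noise by the martingale branch of Lemma~\ref{lem:bias-var}: the $SK$ increments $\nabla F - \nabla f_i$ are sequentially correlated with conditional variance at most $\sigma^2$, so their normalized second moment is $\lesssim \sigma^2/(SK)$, giving the $\tfrac{4\beta^2\sigma^2}{SK}$ channel. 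For the client-sampling fluctuation I would condition on the trajectories and invoke Lemma~\ref{lem:par_sample}, which attaches the factor $\q$ to $\frac1N\sum_i\|u_i-\bar u\|^2$, where $u_i$ collects the conditional means $\frac1K\sum_k(\nabla f_i(x_i^{r,k})-c_i^r+c^r)$.

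The key observation, and the step that actually makes the control variables pay off, is that $c^r$ is common to every $u_i$ and hence drops out of $u_i-\bar u$. Writing the surviving quantity as $\tfrac1K\sum_k(\nabla f_i(x_i^{r,k}) - \nabla f_i(x^{r-1})) + (\nabla f_i(x^{r-1}) - c_i^r)$ and applying $L$-smoothness with $\|x_i^{r,k}-x^{r-1}\|^2\le 2\|x_i^{r,k}-x^r\|^2+2\|x^r-x^{r-1}\|^2$ converts $\frac1N\sum_i\|u_i-\bar u\|^2$ into a combination of $U_r$, $\|x^r-x^{r-1}\|^2$, and the tracking error $V_r=\frac1N\sum_i\expect[\|c_i^r-\nabla f_i(x^{r-1})\|^2]$. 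Finally I collect everything, bound the displacement by $\|x^r-x^{r-1}\|^2\le 2\gamma^2(\|\nabla f(x^{r-1})\|^2+\mathcal{E}_{r-1})$, fold the resulting $\mathcal{E}_{r-1}$ into the leading coefficient so that under $\gamma L\le\beta/12$ the factor $(1-\beta)$ tightens to $1-\tfrac{8\beta}{9}$, and read off the gradient-norm coefficient $\tfrac{16}{\beta}(\gamma L)^2$ and the $U_r,V_r,\sigma^2$ constants $10\beta L^2$, $6\beta^2\q$, $\tfrac{4\beta^2}{SK}$. The case $r=0$ is identical except that $x^{-1}=x^0$ kills the displacement term, removing the $\|\nabla f(x^{r-1})\|^2$ contribution and relaxing the constants to $8\beta L^2U_0$ and $4\beta^2\q V_0$ with a plain $(1-\beta)$ leading factor.

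I expect the main obstacle to be the bookkeeping of the client-sampling variance: one has to verify the cancellation of $c^r$, correctly attach the $\q$ factor through Lemma~\ref{lem:par_sample}, and prevent the interaction between sampling randomness and sequentially-correlated gradient noise from inflating the constants---precisely the part that the control variables are engineered to tame and that has no counterpart in the full-participation \fedavgm proof.
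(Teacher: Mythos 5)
Your proposal is correct and takes essentially the same route as the paper: the paper likewise applies Lemma~\ref{lem:par_sample} to split $\cE_r$ into a full-participation term (handled exactly as in Lemma~\ref{lem:Fedavg_grad_err}, with $c^r=\frac1N\sum_i c_i^r$ cancelling in the mean) plus $\q$ times a client-deviation term in which $c_i^r-c^r$ survives and is converted into $U_r$, $V_r$, $\sigma^2/K$, and $\|x^r-x^{r-1}\|^2$ contributions, before folding the displacement into $\mathcal{E}_{r-1}$ and $\|\nabla f(x^{r-1})\|^2$ under $\gamma L\le\beta/12$. The decomposition, key lemmas, and cancellations you identify all match the paper's argument.
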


\begin{proof}
    Note that $\frac{1}{N}\sum_{i=1}^Nc_i^r=c^r$ holds for any $r\geq 0$. Using Lemma \ref{lem:par_sample}, we have
    \begin{equation*}
        \begin{aligned}
            \cE_r
            &= \expect \left[\left\|\nabla f(x^r)-\frac{1}{NK}\sum_{i,\,k}g_i^{r,k}\right\|^2\right] + \frac{N-S}{S(N-1)}\frac{1}{N}\sum_{i=1}^N \expect\left[\left\|\frac{1}{K}\sum_{k}g_i^{r,k}-\frac{1}{NK}\sum_{j,k}g_j^{r,k}\right\|^2\right] \\
            &= \underbrace{\expect \left[\left (1-\beta )(\nabla f(x^r)-g^r)+\beta \left(\frac{1}{NK}\sum_{i,\,k}\nabla F(x^{r,k}_i;\xi^{r,k}_i)-\nabla f(x^r)\right)\right\|^2\right]}_{\Lambda_1} \\
            &\quad +\frac{\beta^2(N-S)}{S(N-1)}\underbrace{\frac{1}{N}\sum_{i=1}^N \expect\left[\left\|\frac{1}{K}\sum_{k}\nabla F(x^{r,k}_i;\xi^{r,k}_i)-\frac{1}{NK}\sum_{j,k}\nabla F(x^{r,k}_j;\xi^{r,k}_j)-(c^r_i-c^r)\right\|^2\right]}_{\Lambda_2}.
        \end{aligned}
    \end{equation*}
    For $r\geq 1$, similar to the proof of Lemma \ref{lem:Fedavg_grad_err}, we have
    \begin{equation*}
        \Lambda_1\leq (1-\beta )\mathcal{E}_{r-1}+\frac{2}{\beta }L^2\expect[\|x^r-x^{r-1}\|^2]+\frac{2\beta ^2\sigma^2}{NK}+4\beta  L^2U_r .
    \end{equation*}
    Besides, by AM-GM inequality and Lemma \ref{lem:bias-var},
    \begin{equation*}
        \begin{aligned}
            \Lambda_2
            &\leq \frac{1}{N}\sum_{i=1}^N \expect\left[\left\|\frac{1}{K}\sum_{k}\nabla F(x^{r,k}_i;\xi^{r,k}_i)-c_i^r\right\|^2\right] \\
            &\leq \frac{2\sigma^2}{K}+\frac{2}{N}\sum_i\expect \left[\left\|\frac{1}{K}\sum_k\nabla f_i(x^{r,k}_i)-c_i^r\right\|^2\right] \\
            &\leq \frac{2\sigma^2}{K}+6(L^2U_r +L^2\expect[\|x^r-x^{r-1}\|^2]+V_r).
        \end{aligned}
    \end{equation*}
    Since $\expect [\|x^r-x^{r-1}\|^2]\leq 2\gamma ^2(\mathcal{E}_{r-1}+\expect [\|\nabla f(x^{r-1})\|^2])$ and $\left(\frac{2}{\beta }+6\beta ^2\q\right)2(\gamma L)^2\leq \frac{16}{\beta }(\gamma L)^2\leq\frac{\beta }{9}$, we have 
    \begin{equation*}
        \cE_r
        \leq \left(1-\frac{8\beta }{9}\right)\mathcal{E}_{r-1}+\frac{16}{\beta }(\gamma L)^2\expect[\|\nabla f(x^{r-1})\|^2]+\frac{4\beta ^2\sigma^2}{SK}+10\beta  L^2U_r +6\beta ^2\q V_r.
    \end{equation*}
    The case for $r=0$ is similar.
\end{proof}

\begin{lemma}\label{lem:scaffold_client_drift}
    If $\gamma L\leq \frac{1}{\sqrt{2\beta }}$ and $ \eta KL\leq \frac{1}{\beta }$, it holds for all $r\geq 1$ that
    \begin{equation*}
        U_r \leq \eta^2K^2 \left(8e(\mathcal{E}_{r-1}+2\expect [\|\nabla f(x^{r-1})\|^2]+\beta ^2V_r)+ \beta ^2\sigma^2(K^{-1}+2(\beta \eta KL)^2))\right).
    \end{equation*}
\end{lemma}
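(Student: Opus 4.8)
The plan is to mirror the recursive client-drift argument of Lemma~\ref{lem:Fedavg_client_drift} and then carry out a SCAFFOLD-specific bound on the initial drift $\Xi_r$. First I would compute the conditional increment $\zeta_i^{r,k}=-\eta\bigl(\beta(\nabla f_i(x_i^{r,k})-c_i^r+c^r)+(1-\beta)g^r\bigr)$ and observe that the control-variate terms $-c_i^r+c^r$ and the momentum buffer $g^r$ are constant across the inner index $k$, so that $\zeta_i^{r,j}-\zeta_i^{r,j-1}=-\eta\beta(\nabla f_i(x_i^{r,j})-\nabla f_i(x_i^{r,j-1}))$. By Assumption~\ref{asp:smooth} this gives $\expect[\|\zeta_i^{r,j}-\zeta_i^{r,j-1}\|^2]\le \eta^2\beta^2 L^2\expect[\|x_i^{r,j}-x_i^{r,j-1}\|^2]$, and a bias--variance split with Assumption~\ref{asp:sgd_var} yields $\expect[\|x_i^{r,j}-x_i^{r,j-1}\|^2]\le \expect[\|\zeta_i^{r,j-1}\|^2]+\eta^2\beta^2\sigma^2$, since the only stochasticity entering the conditional variance is the single minibatch gradient $\nabla F$ scaled by $\eta\beta$. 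This is structurally identical to the \fedavgm case.

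Next I would run the same AM--GM recursion: using $\eta KL\le 1/\beta$ to ensure $(1+k)\eta^2\beta^2L^2\le 1/k$, unroll $\expect[\|\zeta_i^{r,j}\|^2]\le(1+2/k)\expect[\|\zeta_i^{r,j-1}\|^2]+(k+1)L^2\eta^4\beta^4\sigma^2$ and use $(1+2/k)^k\le e^2$ to get $\expect[\|\zeta_i^{r,j}\|^2]\le e^2\expect[\|\zeta_i^{r,0}\|^2]+4k^2L^2\eta^4\beta^4\sigma^2$. Applying the martingale form of Lemma~\ref{lem:bias-var} to $x_i^{r,k}-x^r=\sum_{j<k}(x_i^{r,j+1}-x_i^{r,j})$ and summing over $i$ and $k$ then produces $U_r\le 2eK^2\Xi_r+\eta^2\beta^2\sigma^2\bigl(K+2K^4L^2\eta^2\beta^2\bigr)$, which already accounts for the noise term $\eta^2K^2\beta^2\sigma^2(K^{-1}+2(\beta\eta KL)^2)$ in the claim.

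The genuinely new step is bounding $\Xi_r=\tfrac1N\sum_i\expect[\|\zeta_i^{r,0}\|^2]$ by $(r-1)$-quantities. Here I would exploit the SCAFFOLD invariant $\tfrac1N\sum_i c_i^r=c^r$, which forces $\tfrac1N\sum_i(\nabla f_i(x^r)-c_i^r+c^r)=\nabla f(x^r)$. Writing $d_i^r:=\nabla f_i(x^r)-c_i^r+c^r-\nabla f(x^r)$ so that $\tfrac1N\sum_i d_i^r=0$, the increment decomposes as $\zeta_i^{r,0}=-\eta\bigl[(1-\beta)g^r+\beta\nabla f(x^r)+\beta d_i^r\bigr]$; since the first two bracketed terms are independent of $i$, the cross term vanishes upon averaging and $\Xi_r=\eta^2\expect[\|(1-\beta)g^r+\beta\nabla f(x^r)\|^2]+\eta^2\beta^2\tfrac1N\sum_i\expect[\|d_i^r\|^2]$. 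For the first piece I would use the identity $(1-\beta)g^r+\beta\nabla f(x^r)=\nabla f(x^{r-1})-(1-\beta)(\nabla f(x^{r-1})-g^r)+\beta(\nabla f(x^r)-\nabla f(x^{r-1}))$ with Young's inequality and $L$-smoothness; for the second, since $d_i^r$ is the mean-deviation of $\nabla f_i(x^r)-c_i^r$, I would bound $\tfrac1N\sum_i\|d_i^r\|^2\le\tfrac1N\sum_i\|\nabla f_i(x^r)-c_i^r\|^2\le 2L^2\expect[\|x^r-x^{r-1}\|^2]+2V_r$. Both pieces produce a residual $\|x^r-x^{r-1}\|^2\le 2\gamma^2(\|\nabla f(x^{r-1})\|^2+\|\nabla f(x^{r-1})-g^r\|^2)$, whose $\beta^2L^2\gamma^2$ prefactor is absorbed using $\gamma L\le 1/\sqrt{2\beta}$, leaving $\Xi_r\lesssim\eta^2(\mathcal{E}_{r-1}+2\expect[\|\nabla f(x^{r-1})\|^2]+\beta^2 V_r)$. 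Combining this with $U_r\le 2eK^2\Xi_r$ and the noise term gives the stated inequality.

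The main obstacle is precisely the $\Xi_r$ bound: unlike \fedavgm, the initial drift couples the momentum buffer $g^r$, the control-variate error (captured by $V_r$), and the inter-client deviation, so I must use the control-variate invariant to cancel the heterogeneity exactly rather than letting it surface as a $\zeta^2$-type term. The delicate part is the constant bookkeeping, ensuring every residual $\|x^r-x^{r-1}\|^2$ is absorbed by the step-size conditions $\gamma L\le 1/\sqrt{2\beta}$ and $\eta KL\le 1/\beta$ so that no bounded-heterogeneity assumption enters the final bound.
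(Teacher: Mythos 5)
Your proposal is correct and follows essentially the same route as the paper: the same reduction $U_r\le 2eK^2\Xi_r+K\eta^2\beta^2\sigma^2(1+2K^3L^2\eta^2\beta^2)$ via the \fedavgm drift recursion (noting the control variates and $g^r$ are constant in $k$), followed by bounding $\Xi_r$ through the invariant $\frac{1}{N}\sum_i c_i^r=c^r$ and a decomposition into $\mathcal{E}_{r-1}$, $\expect[\|\nabla f(x^{r-1})\|^2]$, $\beta^2 V_r$, and a residual $\|x^r-x^{r-1}\|^2$ absorbed by $\gamma L\le 1/\sqrt{2\beta}$. Your mean-plus-deviation split of $\zeta_i^{r,0}$ (with vanishing cross term) is only cosmetically different from the paper's direct four-term AM--GM, though its constants come out slightly looser (e.g., a coefficient up to $10e$ rather than $8e$ on $\mathcal{E}_{r-1}$), so matching the stated constant exactly would require the paper's single four-way splitting or a weighted Young's inequality.
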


\begin{proof}
    Since $\zeta_i^{r,k}=\expect [x^{r,k+1}_i-x^{r,k}_i|\mathcal{F}_i^{r,k}]=-\eta(\beta \nabla f_i(x_i^{r,k})+(1-\beta )g_r-\beta (c_i^r-c^r))$ and $\var[x^{r,k+1}_i-x^{r,k}_i|\mathcal{F}_i^{r,k}]\leq \beta ^2\eta^2\sigma^2$, with exactly the same procedures of Lemma \ref{lem:Fedavg_client_drift}, we have
    \begin{equation*}
        U_r \leq 2eK^2\Xi_r + K\eta^2\beta ^2\sigma^2(1+2K^3L^2\eta^2\beta ^2).
    \end{equation*}
    Additionally, by AM-GM inequality,
    \begin{equation*}
        \begin{aligned}
            \Xi_r
            &= \frac{\eta^2}{N}\sum_i \expect[\|\beta \nabla f_i(x^r)+(1-\beta )g^r-\beta (c_i^r-c^r)\|^2] \\
            &= \frac{\eta^2}{N}\sum_i \expect\left[\|\beta (\nabla f_i(x^r)-\nabla f_i(x^{r-1}))+(1-\beta )(g^r-\nabla f(x^{r-1})) \right. \\
            &\left. \quad\quad -\beta \left(c_i^r-c^r-\nabla f_i(x^{r-1})+\nabla f(x^{r-1})\right)+\nabla f(x^{r-1})\|^2\right] \\
            &\leq 4\eta^2 \left(\beta ^2L^2\EE[\|x^r-x^{r-1}\|^2]+(1-\beta )^2\mathcal{E}_{r-1}+\beta ^2 V_r+\EE[\|\nabla f(x^{r-1})\|^2]\right) \\
            &\leq 4\eta^2(\mathcal{E}_{r-1}+2\expect [\|\nabla f(x^{r-1})\|^2]+\beta ^2V_r).
        \end{aligned}
    \end{equation*}
    Plug this inequality into the above bound completes the proof.
\end{proof}

\begin{lemma}\label{lem:scaffold_control}
    Under the same conditions of Lemma \ref{lem:scaffold_client_drift}, if $\beta \eta KL\leq \frac{1}{24K^{1/4}}$ and $\eta K\leq \frac{N}{5S}\gamma $, then we have
    \begin{equation*}
        \sum_{r=0}^{R-1} V_r\leq \frac{3N}{S}\left(V_0+\frac{4SR}{NK}\sigma^2+\frac{8N}{S}(\gamma L)^2\sum_{r=-1}^{R-2}(\mathcal{E}_{r}+\expect[\|\nabla f(x^{r})\|^2])\right).
    \end{equation*}
\end{lemma}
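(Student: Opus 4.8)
The plan is to establish a one-step geometric recursion for $V_{r+1}$ and then sum it. Writing $p:=S/N$ for the marginal probability that a fixed client lies in $\cS_r$, and conditioning on $\mathcal{F}^r$ (so that $x^r$, hence $\nabla f_i(x^r)$, is fixed), the control-variable update gives
\begin{equation*}
V_{r+1}=\frac1N\sum_i\left(p\,\expect\left[\Big\|\tfrac1K\sum_k\nabla F(x_i^{r,k};\xi_i^{r,k})-\nabla f_i(x^r)\Big\|^2\right]+(1-p)\,\expect\left[\|c_i^r-\nabla f_i(x^r)\|^2\right]\right),
\end{equation*}
since a frozen client ($i\notin\cS_r$) keeps $c_i^{r+1}=c_i^r$ while a sampled one resets $c_i^{r+1}$ to its averaged local gradient; crucially, the conditional law of client $i$'s local trajectory does not depend on which other clients are sampled, so the per-client conditional expectations factor out cleanly.

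For the sampled term I would split $\tfrac1K\sum_k\nabla F(x_i^{r,k};\xi_i^{r,k})-\nabla f_i(x^r)$ into the martingale noise $\tfrac1K\sum_k(\nabla F(x_i^{r,k};\xi_i^{r,k})-\nabla f_i(x_i^{r,k}))$ and the deterministic local drift $\tfrac1K\sum_k\nabla f_i(x_i^{r,k})-\nabla f_i(x^r)$. The former has squared norm at most $\sigma^2/K$ by the martingale part of Lemma \ref{lem:bias-var} (the conditional means vanish), while the latter is at most $\tfrac{L^2}{K}\sum_k\|x_i^{r,k}-x^r\|^2$ by $L$-smoothness and Jensen; averaging over $i$ contributes $p\,(2\sigma^2/K+2L^2U_r)$. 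For the frozen term I would re-anchor $c_i^r$ from $\nabla f_i(x^r)$ back to $\nabla f_i(x^{r-1})$ by Young's inequality with parameter $a=p/2$, producing the factor $(1-p)(1+\tfrac p2)\le 1-\tfrac p2$ on $V_r$ together with a drift term whose coefficient $(1-p)(1+\tfrac2p)L^2\le 3\tfrac{N}{S}L^2$ carries the expected $O(N/S)$ blow-up.

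Collecting these and substituting $\expect[\|x^r-x^{r-1}\|^2]\le 2\gamma^2(\mathcal{E}_{r-1}+\expect[\|\nabla f(x^{r-1})\|^2])$, I reach
\begin{equation*}
V_{r+1}\le\Big(1-\tfrac p2\Big)V_r+2pL^2U_r+c_2\,\tfrac{N}{S}(\gamma L)^2\big(\mathcal{E}_{r-1}+\expect[\|\nabla f(x^{r-1})\|^2]\big)+c_3\,\tfrac{S}{NK}\sigma^2,
\end{equation*}
after which Lemma \ref{lem:scaffold_client_drift} is invoked to rewrite $U_r$ in terms of $\mathcal{E}_{r-1}$, $\expect[\|\nabla f(x^{r-1})\|^2]$, $V_r$, and $\sigma^2$. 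The two remaining hypotheses are exactly what make this clean: $\beta\eta KL\le\frac{1}{24K^{1/4}}$ forces the $V_r$-feedback entering through $U_r$ (whose coefficient is of order $p(\beta\eta KL)^2$) to be absorbed into the contraction, leaving a factor at most $1-\tfrac{S}{3N}$; and $\eta K\le\frac{N}{5S}\gamma$ makes the $U_r$-induced $(\mathcal{E}_{r-1},\|\nabla f\|^2)$ terms, of size $p(\eta KL)^2=\tfrac{S}{N}(\eta KL)^2\le\tfrac{N}{25S}(\gamma L)^2$, dominated by the main $\tfrac{N}{S}(\gamma L)^2$ drift term.

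Finally I would sum the resulting recursion $V_{r+1}\le(1-\tfrac{S}{3N})V_r+b_r$ over $r=0,\dots,R-1$. Telescoping $\sum_{r=1}^{R}V_r\le(1-\tfrac{S}{3N})\sum_{r=0}^{R-1}V_r+\sum_{r=0}^{R-1}b_r$ and dropping $V_R\ge0$ yields $\sum_{r=0}^{R-1}V_r\le\tfrac{3N}{S}\big(V_0+\sum_{r=0}^{R-1}b_r\big)$, and substituting the $b_r$ reproduces the stated bound after collecting constants (the $\sigma^2$ source summing to $\tfrac{4SR}{NK}\sigma^2$ and the drift source to $\tfrac{8N}{S}(\gamma L)^2\sum_{r=-1}^{R-2}(\mathcal{E}_r+\expect[\|\nabla f(x^r)\|^2])$ inside the bracket). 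I expect the main obstacle to be the coupled feedback $V_r\!\to\!U_r\!\to\!V_{r+1}$: one must check that plugging in the client-drift bound, which itself contains $V_r$, does not destroy the contraction, and this is precisely where the fourth-root condition $\beta\eta KL\le\frac{1}{24K^{1/4}}$ and the ratio condition $\eta K\le\frac{N}{5S}\gamma$ are consumed. A secondary bookkeeping difficulty is keeping track of the two distinct $N/S$ factors—one from the Young parameter on the frozen clients, one from $1/(1-\rho)$—so that they multiply out to the correct $\tfrac{3N}{S}$ and $\tfrac{8N}{S}$ constants.
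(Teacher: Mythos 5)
Your proposal is correct and follows essentially the same route as the paper: the same conditional decomposition of $V_{r+1}$ over sampled versus frozen clients, the same Young re-anchoring of $c_i^r$ to $\nabla f_i(x^{r-1})$ with parameter $S/(2N)$, the same substitution of Lemma \ref{lem:scaffold_client_drift} to handle the $V_r\to U_r\to V_{r+1}$ feedback (with the two hypotheses consumed exactly where you say), and the same geometric summation after dropping $V_R\ge 0$. The one bookkeeping slip is bounding the frozen-client drift coefficient by $(1-\tfrac{S}{N})(1+\tfrac{2N}{S})\le \tfrac{3N}{S}$ rather than the tight $\tfrac{2N}{S}$ used in the paper; with your looser constant the drift term sums to roughly $\tfrac{9.5N}{S}(\gamma L)^2$ instead of the stated $\tfrac{8N}{S}(\gamma L)^2$, so you need the tight form to recover the lemma's constants verbatim.
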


\begin{proof}
    Since
    \begin{equation*}
        c_i^{r+1}=\left\{\begin{array}{ll}
             c_i^r & \text{with probability } 1-\frac{S}{N}\\
             \frac{1}{K}\sum_k \nabla F(x^{r,k}_i;\xi^{r,k}_i) & \text{with probability } \frac{S}{N},
        \end{array}\right.
    \end{equation*}
    using Young's inequality repeatedly, we have
    \begin{equation*}
        \begin{aligned}
            V_{r+1}
            &= \left(1-\frac{S}{N}\right)\frac{1}{N}\sum_{i=1}^N\expect[\|c_i^r-\nabla f_i(x^r)\|^2]+\frac{S}{N}\frac{1}{N}\sum_{i=1}^N\expect\left[\left\|\frac{1}{K}\sum_k \nabla F(x^{r,k}_i;\xi^{r,k}_i)-\nabla f_i(x^r)\right\|^2\right] \\
            &\leq \left(1-\frac{S}{N}\right)\frac{1}{N}\sum_{i=1}^N\expect[\|c_i^r-\nabla f_i(x^r)\|^2]+\frac{S}{N}\left(\frac{2\sigma^2}{K}+2L^2U_r \right) \\
            &\leq \left(1-\frac{S}{N}\right)\frac{1}{N}\sum_{i=1}^N\expect\left[\left(1+\frac{S}{2N}\right)\|c_i^r-\nabla f_i(x^{r-1})\|^2+\left(1+\frac{2N}{S}\right)L^2\|x^r-x^{r-1}\|^2\right] \\
            &\quad+ \frac{2S}{N}\left(\frac{\sigma^2}{K}+L^2U_r \right) \\
            &\leq \left(1-\frac{S}{2N}\right)V_r+\frac{2N}{S}L^2\expect[\|x^r-x^{r-1}\|^2]+\frac{2S\sigma^2}{NK} + \frac{2S}{N}L^2U_r .
        \end{aligned}
    \end{equation*}
    Here we apply Lemma \ref{lem:bias-var} to obtain the second inequality. Combine this with Lemma \ref{lem:scaffold_client_drift},
    \begin{equation*}
        \begin{aligned}
            V_{r+1}
            &\leq \left(1-\frac{S}{2N}+16e\frac{S}{N}(\beta \eta KL)^2\right)V_r + 2\sigma^2\left(\frac{S}{NK}+\frac{2S}{N}(\beta \eta KL)^2(K^{-1}+2(\beta \eta KL)^2)\right)\\
            &\quad\quad + \left(\frac{4N}{S}(\gamma L)^2+\frac{32eS}{N}(\eta KL)^2\right)(\mathcal{E}_{r-1}+\expect[\|\nabla f(x^{r-1})\|^2]) \\
            &\leq \left(1-\frac{S}{3N}\right)V_r + \frac{4S}{NK}\sigma^2+\frac{8N}{S}(\gamma L)^2(\mathcal{E}_{r-1}+\expect[\|\nabla f(x^{r-1})\|^2]),
        \end{aligned}
    \end{equation*}
    where we apply the upper bound of $\eta$. Therefore, we finish the proof by summing up over $r$ from $0$ to $R-1$ and rearranging the inequality.
\end{proof}

\begin{thm}
    Under Assumption \ref{asp:smooth} and \ref{asp:sgd_var}, if we take $g^0=0$, $c_i^0=\frac{1}{B}\sum_{b=1}^B\nabla F(x^0;\xi_i^b)$ with $\{\xi^{b}_i\}_{b=1}^B\overset{iid}{\sim}\mathcal{D}_i$, $c^0=\frac{1}{N}\sum_{i=1}^Nc_i^0$ and set
    \begin{equation}\label{eqn:scaffold-m-para}
        \begin{aligned}
            &\gamma=\frac{\beta }{L},\quad \beta  = \min\left\{c, \frac{S}{N^{2/3}}, \sqrt{\frac{L\Delta  SK}{\sigma^2R}}, \sqrt{\frac{L\Delta  S^2}{G_0 N}}\right\},\\
            &\eta KL\lesssim \min\left\{\frac{1}{S^{1/2}}, \frac{1}{\beta  K^{1/4}}, \frac{S^{1/2}}{N}
    \right\},\quad B=\left\lceil \frac{NK}{SR}\right\rceil,
        \end{aligned}
    \end{equation}
    then \scaffoldm converges as 
    \begin{equation*}
        \frac{1}{R}\sum_{r=0}^{R-1}\expect[\|\nabla f(x^r)\|^2]\lesssim \sqrt{\frac{L\Delta  \sigma^2}{SKR}}+\frac{L\Delta  }{R}\left(1+\frac{N^{2/3}}{S}\right).
    \end{equation*}
\end{thm}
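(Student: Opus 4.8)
The plan is to combine the three recursive estimates already in hand---the gradient-error recursion (Lemma~\ref{lem:scaffold_grad_err}), the client-drift bound (Lemma~\ref{lem:scaffold_client_drift}), and the control-variable bound (Lemma~\ref{lem:scaffold_control})---into one inequality for $\sum_r\mathcal{E}_r$, and then close the argument with the descent Lemma~\ref{lem:onestep}. This follows the same skeleton as the \fedavgm proof; the genuinely new ingredient is the control-variable error $V_r$, which appears in all three lemmas and carries the partial-participation factor $N/S$.

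First I would insert the drift bound of Lemma~\ref{lem:scaffold_client_drift} for $U_r$ into the gradient-error recursion of Lemma~\ref{lem:scaffold_grad_err}. Because that drift bound is itself linear in $\mathcal{E}_{r-1}$, $\expect[\|\nabla f(x^{r-1})\|^2]$, $V_r$ and a noise floor, the substitution (using $\eta KL\lesssim 1/(\beta K^{1/4})$ and $\gamma=\beta/L$ to discard higher-order pieces) produces a recursion of the shape
\[
\mathcal{E}_r\leq\Big(1-\tfrac{8\beta}{9}\Big)\mathcal{E}_{r-1}+c_1\tfrac{(\gamma L)^2}{\beta}\expect[\|\nabla f(x^{r-1})\|^2]+c_2\tfrac{\beta^2\sigma^2}{SK}+c_3\beta^2\tfrac{N-S}{S(N-1)}V_r.
\]
Summing over $r=0,\dots,R-1$ and invoking Lemma~\ref{lem:scaffold_control} then replaces $\sum_r V_r$ by $V_0$, a noise term of order $\tfrac{R}{K}\sigma^2$, and a feedback term proportional to $\tfrac{N^2}{S^2}(\gamma L)^2\sum_r(\mathcal{E}_r+\expect[\|\nabla f(x^r)\|^2])$.

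The remaining work is to spend the parameter budget so that all self-referential terms are absorbed. Tracking the $V_r$ path, the feedback into $\sum_r\mathcal{E}_r$ collects the weight $\beta^2\tfrac{N-S}{S(N-1)}\lesssim\beta^2/S$ from Lemma~\ref{lem:scaffold_grad_err} together with the $\tfrac{N^2}{S^2}(\gamma L)^2$ factor from Lemma~\ref{lem:scaffold_control}, so with $\gamma=\beta/L$ it is of order $\tfrac{N^2\beta^4}{S^3}\sum_r\mathcal{E}_r$; requiring this to stay below the contraction slack $\beta$ forces $\beta\lesssim S/N^{2/3}$, which is precisely the third entry in the minimum defining $\beta$ in \eqref{eqn:scaffold-m-para}. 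After absorbing, $\sum_r\mathcal{E}_r$ is bounded by $\tfrac{1}{\beta}\mathcal{E}_{-1}$, a noise term $\tfrac{\beta\sigma^2}{SK}R$, initialization terms in $V_0$ and $G_0$, and a small multiple of $\sum_r\expect[\|\nabla f(x^r)\|^2]$. Feeding this into the summed form of Lemma~\ref{lem:onestep} leaves a strictly negative coefficient on $\sum_r\expect[\|\nabla f(x^r)\|^2]$, which I transfer to the left-hand side to obtain
\[
\tfrac{1}{R}\sum_r\expect[\|\nabla f(x^r)\|^2]\lesssim\tfrac{L\Delta}{\gamma LR}+\tfrac{\mathcal{E}_{-1}}{\beta R}+\tfrac{\beta\sigma^2}{SK}+(\text{$G_0$ and $V_0$ terms}).
\]
Plugging in $\gamma=\beta/L$, the initializations $\mathcal{E}_{-1}\leq 2L\Delta$ (since $g^0=0$) and $V_0\leq\sigma^2/B\leq\tfrac{SR}{NK}\sigma^2$ (since $B=\lceil NK/(SR)\rceil$), and finally the four-way minimum for $\beta$---with its branches $S/N^{2/3}$, $\sqrt{L\Delta SK/(\sigma^2R)}$ and $\sqrt{L\Delta S^2/(G_0N)}$ controlling the $\tfrac{N^{2/3}}{S}$, the $\sqrt{L\Delta\sigma^2/(SKR)}$, and the $G_0$ contributions respectively---yields the claimed rate.

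The main obstacle is the three-way coupling among $\mathcal{E}_r$, $U_r$, and $V_r$. Unlike \fedavgm, the control variable introduces a second, slowly contracting recursion (rate $1-S/(3N)$ in Lemma~\ref{lem:scaffold_control}) whose every term is inflated by $N/S$, both through the partial-sampling variance weight $\tfrac{N-S}{S(N-1)}$ and through the mismatch between the $1/N$ control-variable update and the $1/S$ gradient aggregation. Verifying that a \emph{single} choice of $(\beta,\gamma,\eta,B)$ simultaneously contracts both the $\mathcal{E}_r$ and $V_r$ recursions while keeping the $G_0$ and $V_0$ initialization terms no larger than $L\Delta/R$, and recognizing $\beta\asymp S/N^{2/3}$ as the threshold that balances the $\tfrac{L\Delta}{\beta R}$ penalty against the $V_r$ feedback, is the delicate bookkeeping that the proof must get right.
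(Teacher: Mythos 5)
Your proposal follows essentially the same route as the paper's proof: substitute the client-drift bound into the gradient-error recursion, sum over rounds, invoke the control-variable lemma to replace $\sum_r V_r$, absorb the feedback terms using exactly the parameter constraints you identify (in particular $\beta\lesssim S/N^{2/3}$ arising from the $\tfrac{\beta^4 N^2}{S^3}$ feedback coefficient, and $V_0\le\sigma^2/B\le \tfrac{SR}{NK}\sigma^2$ from the batch initialization), and close with the descent lemma. The bookkeeping and the final balancing of the four branches of $\beta$ match the paper's argument, so this is correct and not a different approach.
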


\begin{proof}
    By Lemma \ref{lem:scaffold_grad_err}, sum over $r$ from $0$ to $R-1$ and plug Lemma \ref{lem:scaffold_client_drift}, Lemma \ref{lem:scaffold_control} in,
    \begin{equation*}
        \begin{aligned}
            \sum_{r=0}^{R-1}\cE_r
            &\leq \left(1-\frac{8\beta }{9}\right)\sum_{r=-1}^{R-2}\cE_r+\frac{16}{\beta }(\gamma L)^2\sum_{r=0}^{R-2}\expect [\|\nabla f(x^{r})\|^2] \\
            &\quad + \frac{4\beta ^2\sigma^2}{SK}R + 10\beta  L^2\sum_{r=0}^{R-1}U_r +6\beta ^2\q \sum_{r=0}^{R-1}V_r \\
            &\leq \left(1-\frac{8\beta }{9}+80e\beta (\eta KL)^2\right)\sum_{r=-1}^{R-2}\cE_r+(\frac{16}{\beta }(\gamma L)^2+160e\beta (\eta KL)^2)\sum_{r=0}^{R-2}\expect [\|\nabla f(x^{r})\|^2] \\
            &\quad+\beta ^2\sigma^2R\left(\frac{4}{SK}+10(\eta KL)^2(K^{-1}+2(\beta \eta KL)^2)\right) + \\
            &\quad +\beta ^2\left(6\q + 80e\beta (\eta KL)^2\right)\sum_{r=0}^{R-1}V_r \\
            &\leq \left(1-\frac{7\beta }{9}\right)\sum_{r=-1}^{R-2}\cE_r + \left(\frac{16}{\beta }(\gamma L)^2+\frac{\beta }{9}\right)\sum_{r=0}^{R-2}\expect [\|\nabla f(x^{r})\|^2] + \frac{80\beta ^2\sigma^2}{SK}R+\frac{30\beta ^2N}{S^2}V_0.
        \end{aligned}
    \end{equation*}
    Here the coefficients in the last inequality are derived by the following bounds:
    \begin{equation*}
        \left\{
        \begin{array}{c}
            160e\beta  (\eta KL)^2+24(\frac{\beta \gamma LN}{S})^2\left(6\q+80e\beta  (\eta KL)^2\right) \leq \frac{\beta }{9}, \\
            \\
            10(\eta KL)^2(K^{-1}+2(\beta \eta KL)^2) + 960e\beta  K^{-1}(\eta KL)^2 \leq \frac{4}{SK}, \\
            \\
            80e\beta  (\eta KL)^2 \leq \frac{4}{S},
        \end{array}
        \right.
    \end{equation*}
    which can be guaranteed by
    \begin{equation*}
        \left\{
        \begin{array}{c}
            \gamma L\lesssim \frac{S^{3/2}}{\beta ^{1/2} N}, \\
            \\
            \eta KL \lesssim \frac{1}{S^{1/2}}.
        \end{array}
        \right.
    \end{equation*}
    Therefore, 
    \begin{equation*}
        \sum_{r=0}^{R-1}\cE_r\leq \frac{9}{7\beta }\mathcal{E}_{-1} + \frac{2}{7}\expect \left[\sum_{r=-1}^{R-2}\|\nabla f(x^{r})\|^2\right] + \frac{270\beta  N}{7S^2}V_0+ \frac{720\beta \sigma^2}{7SK}R.
    \end{equation*}
    Combining this inequality with Lemma \ref{lem:onestep}, we obtain
    \begin{equation*}
        \frac{1}{\gamma }\expect [f(x^R)-f(x^0)]\leq -\frac{1}{7}\sum_{r=0}^{R-1}\expect[\|\nabla f(x^r)\|^2] + \frac{39}{56\beta }\mathcal{E}_{-1}+ \frac{585\beta  N}{28S^2}V_0+ \frac{390\beta \sigma^2}{7SK}R.
    \end{equation*}
    Finally, noticing that $g^0=0$ implies $\mathcal{E}_{-1}\leq 2L\Delta$ and $c_i=\frac{1}{B}\sum_{b}\nabla F(x^0;\xi_i^b)$ implies $V_0\leq \frac{\sigma^2}{B}\leq \frac{SR\sigma^2}{NK}$, we reach
    \begin{equation*}
        \begin{aligned}
            \frac{1}{R}\sum_{r=0}^{R-1}\expect[\|\nabla f(x^r)\|^2]
            &\lesssim \frac{L\Delta  }{\gamma LR}+\frac{\mathcal{E}_{-1}}{\beta  R} + \frac{\beta  N}{S^2R}V_0+ \frac{\beta \sigma^2}{SK} \\
            &\lesssim \frac{L\Delta  }{\beta  R}+\frac{L\Delta  }{S^{3/2}R}N\beta ^{1/2}+\frac{\beta \sigma^2}{SK} \\
            &\lesssim \frac{L\Delta  }{R}\left(1+\frac{N^{2/3}}{S}\right)+\sqrt{\frac{L\Delta  \sigma^2}{SKR}}.
        \end{aligned}
    \end{equation*}
\end{proof}

\subsection{\scaffoldmvr}\label{app:scaffoldmvr}
In this subsection, we present the proofs for the \scaffoldmvr algorithm, shown as in Algorithm \ref{alg:scaffold_mom_vr}.
\begin{algorithm}[ht]
    \caption{\scaffoldmvr: {\sc SCAFFOLD} with variance-reduced momentum}
    \label{alg:scaffold_mom_vr}
    \begin{algorithmic}
        \REQUIRE{initial model $x^{-1}=x_0$, gradient estimator $g^0$, control variables $\{c_i^0\}_{i=1}^N$ and $c^0$, local learning rate $\eta$, global learning rate $\gamma $, momentum $\beta$}
        \FOR{$r=0, \cdots, R-1$}
            \STATE{Uniformly sample clients $\mathcal{S}_r\subseteq \{1, \cdots, N\}$ with $|\cS_r|=S$}
            \FOR{each client $i\in \cS_r$ in parallel}
                \STATE{Initialize local model $x^{r,0}_i=x^r$}
                \FOR{$k=0, \cdots, K-1$}
                    \STATE{
                            \colorbox{BPink}{Compute  $g^{r,k}_i=  \nabla F(x^{r,k}_i;\xi^{r,k}_i)-\beta (c_i^r-c^r) + (1-\beta )(g^r-\nabla F(x^{r-1};\xi_i^{r,k}))$}\\
                            \;Update local model $x^{r,k+1}_i=x^{r,k}_i-\eta g^{r,k}_i$
                    }
                \ENDFOR 
                \STATE{Update control variable $c^{r+1}_i:= \frac{1}{K}\sum_k \nabla F(x^{r,k}_i;\xi^{r,k}_i)$ (for $i\notin \cS_r$, $c_i^{r+1}=c_i^r$)}
            \ENDFOR
            \STATE Aggregate local updates $g^{r+1}= \frac{1}{\eta S K}\sum_{i\in\cS_r}\left(x^r - x^{r,K}_i\right)$\\
            \STATE Update global model $x^{r+1}= x^r-\gamma g^{r+1}$\\
            \STATE Update control variable $c^{r+1}= c^{r}+ \frac{1}{N}\sum_{i\in\mathcal{S}_r} (c^{r+1}_i-c^{r}_i)$
        \ENDFOR
    \end{algorithmic}
\end{algorithm}

\begin{lemma}\label{lem:scaffold_VR_grad_err}
    If $\gamma L \leq \sqrt{\frac{\beta  S}{126}}$, then the following holds for $r\geq 1$:
    \begin{equation*}
        \begin{aligned}
            \cE_r
            &\leq (1-\frac{8\beta }{9})\mathcal{E}_{r-1}+\frac{14(\gamma L)^2}{S}\expect [\|\nabla f(x^{r-1})\|^2]+\frac{8}{\beta }L^2U_r +\frac{7\beta ^2\sigma^2}{SK}+\frac{4(N-S)}{S(N-1)} \beta ^2V_r.
        \end{aligned}
    \end{equation*}
    In addition,
    \begin{equation*}
        \mathcal{E}_0\leq (1-\beta )\mathcal{E}_{-1}+\frac{8}{\beta }L^2U_0 +\frac{7\beta ^2\sigma^2}{SK}+\frac{4(N-S)}{S(N-1)} \beta ^2V_0.
    \end{equation*}
\end{lemma}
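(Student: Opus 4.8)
The plan is to reproduce the two‑term decomposition that underlies Lemma~\ref{lem:scaffold_grad_err} for \scaffoldm, but to feed the variance‑reduced bookkeeping of Lemma~\ref{lem:Fedavg_VR_grad_err} into each term. Since $x^r-x^{r,K}_i=\eta\sum_k g^{r,k}_i$, the server estimate is $g^{r+1}=\frac{1}{S}\sum_{i\in\cS_r}\bar g_i^r$ with $\bar g_i^r:=\frac{1}{K}\sum_k g^{r,k}_i$, where the per‑client trajectories $x_i^{r,k}$ (hence $\bar g_i^r$) are defined for \emph{all} $i$ as hypothetical quantities, the sampling $\cS_r$ being independent of the local noise. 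First I would condition on $\mathcal{F}^r$ together with the local stochastic gradients and apply Lemma~\ref{lem:par_sample} to the vectors $\nabla f(x^r)-\bar g_i^r$, which after taking the global expectation yields
\begin{equation*}
\cE_r = \underbrace{\expect\left[\left\|\nabla f(x^r) - \tfrac{1}{N}\sum_i \bar g_i^r\right\|^2\right]}_{\text{full-participation term}} + \q\,\underbrace{\frac{1}{N}\sum_i\expect\left[\left\|\bar g_i^r - \tfrac{1}{N}\sum_j \bar g_j^r\right\|^2\right]}_{\text{sampling-deviation term}}.
\end{equation*}

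Because $\frac{1}{N}\sum_i(c_i^r-c^r)=0$, the control variables drop out of the global average $\frac{1}{N}\sum_i\bar g_i^r$, which then coincides exactly with the \fedavgmvr update direction. The full‑participation term is therefore bounded verbatim by the argument proving Lemma~\ref{lem:Fedavg_VR_grad_err}: controlling the cross term with AM‑GM and killing the residual variance through Assumption~\ref{asp:sample_smooth} gives $(1-\tfrac{8\beta}{9})\mathcal{E}_{r-1}+\tfrac{4}{\beta}L^2U_r+\tfrac{3\beta^2\sigma^2}{NK}+\tfrac{6(\gamma L)^2}{NK}\expect[\|\nabla f(x^{r-1})\|^2]$.

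For the deviation term I would subtract the $i$‑independent vector $(1-\beta)g^r+\beta c^r$ (legitimate, since the empirical mean minimizes $\sum_i\|\bar g_i^r-\cdot\|^2$), leaving $\frac{1}{K}\sum_k[\nabla F(x^{r,k}_i;\xi^{r,k}_i)-(1-\beta)\nabla F(x^{r-1};\xi^{r,k}_i)]-\beta c_i^r$. Applying the martingale form of Lemma~\ref{lem:bias-var} across the $K$ correlated local steps splits this into a bias and a variance part. For the variance, the shared‑sample structure and Assumption~\ref{asp:sample_smooth} give the per‑step bound $2L^2\|x^{r,k}_i-x^{r-1}\|^2+2\beta^2\sigma^2$, so the leading noise is the \emph{reduced} $\tfrac{\beta^2\sigma^2}{K}$ plus $L^2$ drift. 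For the bias, writing it as $\frac{1}{K}\sum_k[\nabla f_i(x^{r,k}_i)-\nabla f_i(x^{r-1})]+\beta[\nabla f_i(x^{r-1})-c_i^r]$ and using AM‑GM routes the first piece into $L^2U_r$ and $L^2\|x^r-x^{r-1}\|^2$, while the second, averaged over $i$, becomes exactly $\beta^2 V_r$; the martingale doubling produces the constant $4\beta^2 V_r$, which with the prefactor $\q$ is precisely the stated $\frac{4(N-S)}{S(N-1)}\beta^2 V_r$.

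Finally I would collect both terms, substitute $\|x^r-x^{r-1}\|^2\le 2\gamma^2(\mathcal{E}_{r-1}+\|\nabla f(x^{r-1})\|^2)$, and invoke $\gamma L\le\sqrt{\beta S/126}$ so that the $\mathcal{E}_{r-1}$ contamination entering through the deviation (scaled by $\q\le 1/S$) is absorbed into the slack between $(1-\beta)$ and $(1-\tfrac{8\beta}{9})$; the $\|\nabla f(x^{r-1})\|^2$ contributions then consolidate into $\tfrac{14(\gamma L)^2}{S}$, the $U_r$ terms into $\tfrac{8}{\beta}L^2U_r$, and the noise into $\tfrac{7\beta^2\sigma^2}{SK}$ using $S\le N$. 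The case $r=0$ is identical but simpler, since $x^{-1}=x^0$ annihilates the $\|x^r-x^{r-1}\|^2$ term. The main obstacle is the deviation term: I must simultaneously exploit the variance reduction so that the dominant $\sigma^2$ carries the factor $\beta^2$, and steer the control‑variable residual into $\beta^2 V_r$ rather than into an uncontrolled $\|\nabla f_i-\nabla f\|^2$ that would silently reintroduce a bounded‑heterogeneity assumption; pinning down the constant $4$ requires tracking the martingale doubling jointly with the centering choice.
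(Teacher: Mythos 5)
Your proposal is correct and follows essentially the same route as the paper: the same client-sampling decomposition (Lemma~\ref{lem:par_sample} conditioned on the local noise, with hypothetical trajectories for unsampled clients), the same re-centering of the deviation term at $(1-\beta)g^r+\beta c^r$, the same verbatim reuse of the Lemma~\ref{lem:Fedavg_VR_grad_err} argument for the mean term after noting that the control variables cancel in the global average, and the same routing of $\|\nabla f_i(x^{r-1})-c_i^r\|^2$ into $4\beta^2 V_r$. The only cosmetic difference is that you bound the deviation term via the martingale form of Lemma~\ref{lem:bias-var} plus a two-way AM--GM (which gives slightly larger constants on the $U_r$ and $\|x^r-x^{r-1}\|^2$ pieces than the paper's direct four-way AM--GM, but still within the stated $\tfrac{8}{\beta}L^2U_r$ after multiplying by $\q\le 1/S$).
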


\begin{proof}
    By Lemma \ref{lem:bias-var}, we have
    \begin{equation*}
        \begin{aligned}
            \cE_r
            &\leq \underbrace{\expect \left[\left\|\nabla f(x^r)-\frac{1}{NK}\sum_{i,\,k}\left[\nabla F(x^{r,k}_i;\xi_i^{r,k})+(1-\beta )(g^r-\nabla F(x^{r-1}; \xi_i^{r,k}))\right] \right\|^2\right]}_{\Lambda_1} \\
            &\quad\quad + \q \underbrace{\frac{1}{N}\sum_{i=1}^N \expect\left[\left\|\frac{1}{K}\sum_{k}\left[\nabla F(x^{r,k}_i;\xi_i^{r,k})-(1-\beta )\nabla F(x^{r-1};\xi_i^{r,k})\right]-\beta  c_i^r\right\|^2\right]}_{\Lambda_2}.
        \end{aligned}
    \end{equation*}
    Applying the same derivation as  Lemma \ref{lem:Fedavg_VR_grad_err}, we can show that
    \begin{equation*}
        \Lambda_1\leq (1-\beta )\mathcal{E}_{r-1}+\frac{4}{\beta }L^2U_r +3\frac{\beta ^2\sigma^2}{NK}+3(1-\beta )^2\frac{L^2}{NK}\expect[\|x^r-x^{r-1}\|^2].
    \end{equation*}
    Additionally, by the AM-GM inequality,
    \begin{equation*}
        \begin{aligned}
            \Lambda_2
            &\leq \frac{1}{N}\sum_{i=1}^N 4\expect\left[ \left\|\frac{1}{K}\sum_{k}\nabla F(x^{r,k}_i;\xi_i^{r,k})-\nabla F(x^r;\xi_i^{r,k})\right\|^2 \right. \\
            &\quad +\beta ^2\left\|\frac{1}{K}\sum_{k}\nabla F(x^{r};\xi_i^{r,k})-\nabla f_i(x^r)\right\|^2 +\beta ^2\|\nabla f_i(x^{r-1})-c^r_i\|^2 \\
            &\left. \quad\quad +\left\|\beta  (\nabla f_i(x^r)-\nabla f_i(x^{r-1}))+\frac{1-\beta }{K}\sum_{k}\nabla F(x^{r};\xi_i^{r,k})-\nabla F(x^{r-1};\xi_i^{r,k})\right\|^2 \right] \\
            &\leq 4\left(L^2U_r +\frac{\beta ^2\sigma^2}{K}+\beta ^2V_r+L^2\expect[\|x^r-x^{r-1}\|^2]\right).
        \end{aligned}
    \end{equation*}
    Further notice that for $r\geq 1$, $\expect [\|x^r-x^{r-1}\|^2\leq 2\gamma ^2(\mathcal{E}_{r-1}+\expect[\|\nabla f(x^{r-1})\|^2])$ and 
    \begin{equation*}
        (\gamma L)^2(\frac{8(N-S)}{S(N-1)} + \frac{6(1-\beta )^2}{NK})\leq \frac{14(\gamma L)^2}{S} \leq \frac{\beta }{9}.
    \end{equation*}
    Hence we obtain
    \begin{equation*}
        \begin{aligned}
            \cE_r
            &\leq (1-\frac{8\beta }{9})\mathcal{E}_{r-1}+\frac{14(\gamma L)^2}{S}\expect [\|\nabla f(x^{r-1})\|^2+\frac{8}{\beta }L^2U_r +\frac{7\beta ^2\sigma^2}{SK}+\frac{4(N-S)}{S(N-1)} \beta ^2V_r.
        \end{aligned}
    \end{equation*}
    The case for $r=0$ can be established similarly.
\end{proof}

\begin{lemma}\label{lem:scaffold_VR_control}
    If $\eta KL\leq \frac{1}{4e}$, $\eta K\leq \frac{\gamma N}{10S}$, and $\gamma L\leq \frac{1}{24}$, then it holds that
    \begin{equation*}
        \sum_{r=0}^{R-1}V_{r}\leq \frac{3N}{S}\left(V_0+\frac{4SR}{NK}\sigma^2+\frac{6N}{S}(\gamma L)^2\sum_{r=-1}^{R-2}(\mathcal{E}_{r}+\expect[\|\nabla f(x^{r})\|^2])\right).
    \end{equation*}
\end{lemma}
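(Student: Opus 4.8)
The plan is to follow the blueprint of Lemma~\ref{lem:scaffold_control} almost verbatim, exploiting the fact that the control-variable update $c_i^{r+1}=\frac1K\sum_k\nabla F(x_i^{r,k};\xi_i^{r,k})$ in Algorithm~\ref{alg:scaffold_mom_vr} is \emph{identical} to the one in \scaffoldm; the only change is that the local trajectory $\{x_i^{r,k}\}$ is now driven by the variance-reduced direction \eqref{eqn:sca-vrm-d}. First I would derive the one-step recursion for $V_{r+1}$ exactly as before: conditioning on the client sampling, each client is refreshed with probability $S/N$ and frozen otherwise, so that
\begin{equation*}
V_{r+1}\leq\Big(1-\frac{S}{N}\Big)\frac1N\sum_i\expect[\|c_i^r-\nabla f_i(x^r)\|^2]+\frac{S}{N}\frac1N\sum_i\expect\Big[\Big\|\tfrac1K\sum_k\nabla F(x_i^{r,k};\xi_i^{r,k})-\nabla f_i(x^r)\Big\|^2\Big].
\end{equation*}
Bounding the refreshed term by $\tfrac{2\sigma^2}{K}+2L^2U_r$ via the martingale form of Lemma~\ref{lem:bias-var} and $L$-smoothness, and splitting the frozen term with Young's inequality (inserting $\nabla f_i(x^{r-1})$, with weights $1+\tfrac{S}{2N}$ and $1+\tfrac{2N}{S}$), yields the same intermediate bound
\begin{equation*}
V_{r+1}\leq\Big(1-\frac{S}{2N}\Big)V_r+\frac{2N}{S}L^2\expect[\|x^r-x^{r-1}\|^2]+\frac{2S\sigma^2}{NK}+\frac{2S}{N}L^2U_r.
\end{equation*}
This step is insensitive to the variance-reduced modification because it never touches the local descent direction.

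Second, I would supply the \scaffoldmvr client-drift bound for $U_r$, the only place the VR direction enters. Mirroring Lemma~\ref{lem:Fedavg_VR_client_drift} (the conditional variance of the VR increment is $\leq 2\beta^2\sigma^2+2(1-\beta)^2L^2\|x_i^{r,k}-x^{r-1}\|^2$ by Assumption~\ref{asp:sample_smooth}) and Lemma~\ref{lem:scaffold_client_drift} (for the control-variate contribution), I expect $U_r\lesssim eK^2\Xi_r+(\eta K)^2(\cdot)\big(\beta^2\sigma^2+L^2\expect[\|x^r-x^{r-1}\|^2]\big)$, together with the drift initializer $\Xi_r\leq 4\eta^2\big(\mathcal{E}_{r-1}+2\expect[\|\nabla f(x^{r-1})\|^2]+\beta^2V_r\big)$. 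The $\beta^2V_r$ factor arises by writing $\zeta_i^{r,0}$ as $-\eta$ times the sum of $\nabla f_i(x^r)-\nabla f_i(x^{r-1})$, $(1-\beta)(g^r-\nabla f(x^{r-1}))$, $-\beta\big(c_i^r-c^r-\nabla f_i(x^{r-1})+\nabla f(x^{r-1})\big)$, and $\nabla f(x^{r-1})$, and then using $\frac1N\sum_ic_i^r=c^r$ and $\frac1N\sum_i\nabla f_i(x^{r-1})=\nabla f(x^{r-1})$ so that the centered averaged variance of the third group is $\leq V_r$.

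Third, I would substitute this $U_r$ bound and $\expect[\|x^r-x^{r-1}\|^2]\leq 2\gamma^2\big(\mathcal{E}_{r-1}+\expect[\|\nabla f(x^{r-1})\|^2]\big)$ into the recursion, aiming for the clean geometric form
\begin{equation*}
V_{r+1}\leq\Big(1-\frac{S}{3N}\Big)V_r+\frac{4S}{NK}\sigma^2+\frac{6N}{S}(\gamma L)^2\big(\mathcal{E}_{r-1}+\expect[\|\nabla f(x^{r-1})\|^2]\big).
\end{equation*}
Here the conditions $\eta KL\leq\tfrac1{4e}$, $\eta K\leq\tfrac{\gamma N}{10S}$, $\gamma L\leq\tfrac1{24}$ do the work: the first shrinks the self-feedback coefficient $\tfrac{2S}{N}L^2\cdot eK^2\eta^2\beta^2$ of $V_r$ enough to turn $1-\tfrac{S}{2N}$ into $1-\tfrac{S}{3N}$, while $\eta K\leq\tfrac{\gamma N}{10S}$ forces the $(\eta KL)^2$ gradient contribution from the drift to be reabsorbed into the $\tfrac{2N}{S}L^2\cdot2\gamma^2=\tfrac{4N}{S}(\gamma L)^2$ term, keeping the total coefficient at $\tfrac{6N}{S}(\gamma L)^2$ (sharper than the $\tfrac{8N}{S}$ of Lemma~\ref{lem:scaffold_control} precisely because the step-size constraint is tighter). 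Summing this over $r=0,\dots,R-1$ and dividing by the rate $\tfrac{S}{3N}$ gives $\sum_{r=0}^{R-1}V_r\leq\tfrac{3N}{S}\big(V_0+\sum_r(\text{inputs})\big)$, and re-indexing the gradient sum to $\sum_{r=-1}^{R-2}$ produces exactly the claimed bound.

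The main obstacle I anticipate is the bookkeeping in the third step: one must check that the stated step-size constraints are simultaneously strong enough that (i) the $V_r$ self-feedback becomes a genuine contraction and (ii) the surviving noise and gradient coefficients collapse to the advertised $\tfrac{4S}{NK}\sigma^2$ and $\tfrac{6N}{S}(\gamma L)^2$. This is routine but delicate: the extra VR noise $\beta^2\sigma^2$ and the $(1-\beta)^2L^2\|x_i^{r,k}-x^{r-1}\|^2$ terms spawned by the variance-reduced increment must be shown not to inflate the noise floor beyond $O\big(S\sigma^2/(NK)\big)$, which is exactly where Assumption~\ref{asp:sample_smooth} and $\eta KL\leq\tfrac1{4e}$ are essential.
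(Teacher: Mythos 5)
Your proposal follows the paper's proof essentially verbatim: the same one-step recursion for $V_{r+1}$ (unchanged by variance reduction), the same $U_r$ bound obtained by combining the Lemma~\ref{lem:Fedavg_VR_client_drift}-style drift analysis with the decomposition of $\zeta_i^{r,0}$ that produces the $\beta^2 V_r$ term, and the same contraction $1-\tfrac{S}{3N}$ with coefficients $\tfrac{4S}{NK}\sigma^2$ and $\tfrac{6N}{S}(\gamma L)^2$ absorbed via the stated step-size conditions before summing. The approach and all key intermediate bounds match; only minor constants in the $\Xi_r$ estimate differ, which your $\lesssim$ bookkeeping already accommodates.
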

\begin{proof}
    Note that $\zeta_i^{r,k}=-\eta(\nabla f_i(x_i^{r,k})+(1-\beta )(g^r-\nabla f_i(x^{r-1}))-\beta (c_i^r-c^r))$, with the same procedures in Lemma \ref{lem:Fedavg_VR_client_drift}, we have
    \begin{equation*}
        U_r \leq 4eK^2\Xi_r + 8(\eta K)^2(2(\eta KL)^2+K^{-1})\left(\beta ^2\sigma^2+2L^2\expect[\|x^r-x^{r-1}\|^2]\right).
    \end{equation*}
    Additionally, by the AM-GM inequality,
    \begin{equation*}
        \begin{aligned}
            \Xi_r
            &= \frac{\eta^2}{N}\sum_i \expect[\|\nabla f_i(x^r)+(1-\beta )(g^r-\nabla f_i(x^{r-1})-\beta (c_i^r-c^r)\|^2] \\
            &= \frac{\eta^2}{N}\sum_i \expect\left[\left\|(\nabla f_i(x^r)-\nabla f_i(x^{r-1}))+(1-\beta )(g^r-\nabla f(x^{r-1})) \right.\right.\\
            &\left. \left.\quad\quad -\beta \left(c_i^r-c^r-\nabla f_i(x^{r-1})+\nabla f(x^{r-1})\right)+\nabla f(x^{r-1})\right\|^2 \right] \\
            &\leq 4\eta^2 \expect \left[L^2\|x^r-x^{r-1}\|^2+(1-\beta )^2\mathcal{E}_{r-1}+\beta ^2 V_r+\|\nabla f(x^{r-1})\|^2\right] \\
            &\leq 8\eta^2(\mathcal{E}_{r-1}+\expect [\|\nabla f(x^{r-1})\|^2]+\beta ^2V_r).
        \end{aligned}
    \end{equation*}
    Hence, by applying $32(2(\eta KL)^2+K^{-1})(\gamma L)^2\leq 96(\gamma L)^2\leq 2$, we obtain
    \begin{equation}\label{eq:scaffold_VR_client_drift}
        \begin{aligned}
            U_r 
            &\leq 32e(\eta K)^2(\mathcal{E}_{r-1}+\expect [\|\nabla f(x^{r-1})\|^2]+\beta ^2V_r) \\
            &\quad\quad +8(\eta K)^2(2(\eta KL)^2+K^{-1})\left(\beta ^2\sigma^2+2L^2\expect[\|x^r-x^{r-1}\|^2]\right) \\
            &\leq 90(\eta K)^2(\mathcal{E}_{r-1}+\expect [\|\nabla f(x^{r-1})\|^2]+\beta ^2V_r) + 8(\beta \eta K)^2(2(\eta KL)^2+K^{-1})\sigma^2.
        \end{aligned}
    \end{equation}
    Also, similar to Lemma \ref{lem:scaffold_control}, it still holds that
    \begin{equation*}
        V_{r+1}\leq \left(1-\frac{S}{2N}\right)V_r+\frac{2N}{S}L^2\expect[\|x^r-x^{r-1}\|^2+\frac{2S\sigma^2}{NK} + \frac{2S}{N}L^2U_r .
    \end{equation*}
    Combine this with the upper bound of $U_r $,
    \begin{equation*}
        \begin{aligned}
            &V_{r+1}\\
            \leq &\left(1-\frac{S}{2N}+\frac{180(\beta \eta KL)^2S}{N}\right)V_r+\left(\frac{4N(\gamma L)^2}{S}+\frac{180(\eta KL)^2S}{N}\right)(\mathcal{E}_{r-1}+\expect [\|\nabla f(x^{r-1})\|^2]) \\
            &\quad + \sigma^2\left(\frac{2S}{NK}+8(\beta \eta KL)^2(2(\eta KL)^2+K^{-1})\right) \\
            \leq &\left(1-\frac{S}{3N}\right)V_r+\frac{6N(\gamma L)^2}{S}(\mathcal{E}_{r-1}+\expect [\|\nabla f(x^{r-1})\|^2])+\frac{4S\sigma^2}{NK},
        \end{aligned}
    \end{equation*}
    where we apply the upper bound of $\eta$ in the last inequality. Iterating the above inequality completes the proof.
\end{proof}

\begin{thm}\label{app:thm:scaffold_mvr}
       Under Assumption \ref{asp:sample_smooth} and \ref{asp:sgd_var}, if we take $c_i^0=\frac{1}{B}\sum_{b=1}^{B}\nabla F(x^0;\xi^{b}_i)$ with $\{\xi_i^b\}_{b=1}^B\overset{iid}{\sim}\mathcal{D}_i$, $g^0=c^0=\frac{1}{N}\sum_{i=1}^Nc_i^0$  and set 
       \begin{equation}\label{eqn:scaffold-mvr-para}
           \begin{aligned}
               &\gamma  = \min\left\{\frac{1}{L}, \frac{\sqrt{\beta  S}}{L}\right\},\quad \beta  = \min\left\{\frac{S}{N}, \left(\frac{KL\Delta  }{\sigma^2R}\right)^{2/3}S^{1/3}\right\},\\
               &\eta KL\lesssim \min\left\{\left(\frac{\beta }{S}\right)^{1/2}, \left(\frac{\beta }{SK}\right)^{1/4}\right\} ,\quad B =\left\lceil \max\left\{\frac{SK}{NR\beta ^2}, \frac{NK}{SR}\right\}\right\rceil,
           \end{aligned}
       \end{equation}
    \scaffoldmvr converges as
    \begin{equation*}
        \frac{1}{R}\sum_{r=0}^{R-1}\expect[\|\nabla f(x^r)\|^2]\lesssim \left(\frac{L\Delta  \sigma}{S\sqrt{K}R}\right)^{2/3}+\frac{L\Delta  }{R}\left(1+\frac{N^{1/2}}{S}\right).
    \end{equation*}
    {Alternatively, if $R\gtrsim \frac{N}{S}$ and $\beta  = \min\left\{\frac{1}{R}, \left(\frac{KL\Delta  }{\sigma^2R}\right)^{2/3}S^{1/3}\right\}$, $B=\Theta(\frac{SKR}{N})$, \scaffoldmvr converges as
    \begin{equation*}
        \frac{1}{R}\sum_{r=0}^{R-1}\expect[\|\nabla f(x^r)\|^2]\lesssim \left(\frac{L\Delta  \sigma}{S\sqrt{K}R}\right)^{2/3}+\frac{L\Delta  }{R}\left(1+\frac{N^{1/2}}{S} + \frac{\sigma^2}{SKR}\right).
    \end{equation*}}
\end{thm}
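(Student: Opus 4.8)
The plan is to follow the same four-stage template used for \scaffoldm, now feeding in the variance-reduced estimates. The three ingredients are already in place: Lemma~\ref{lem:scaffold_VR_grad_err} gives a one-step contraction for the gradient-estimation error $\cE_r$ in terms of $\cE_{r-1}$, $\expect[\|\nabla f(x^{r-1})\|^2]$, the client drift $U_r$, and the control-variable error $V_r$; the drift estimate \eqref{eq:scaffold_VR_client_drift} inside Lemma~\ref{lem:scaffold_VR_control} controls $U_r$ by $\cE_{r-1}$, $\expect[\|\nabla f(x^{r-1})\|^2]$, $\beta^2 V_r$, and an $O(\sigma^2)$ floor; and Lemma~\ref{lem:scaffold_VR_control} bounds $\sum_r V_r$ by $V_0$, an $O(\sigma^2)$ term, and $\sum_r(\cE_r+\expect[\|\nabla f(x^r)\|^2])$. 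Substituting the drift bound into the $\cE_r$ recursion and summing over $r=0,\dots,R-1$ couples these three quantities linearly.

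First I would sum the recursion of Lemma~\ref{lem:scaffold_VR_grad_err}, replace $U_r$ using \eqref{eq:scaffold_VR_client_drift}, and eliminate $\sum_r V_r$ via Lemma~\ref{lem:scaffold_VR_control}. This yields an inequality of the schematic form
\begin{equation*}
\sum_{r=0}^{R-1}\cE_r \le \Big(1-\tfrac{8\beta}{9}+\epsilon_1\Big)\sum_{r=-1}^{R-2}\cE_r + \epsilon_2\sum_{r=-1}^{R-2}\expect[\|\nabla f(x^r)\|^2] + c_1\,\tfrac{\beta^2\sigma^2}{SK}R + c_2\,\tfrac{\beta N}{S^2}V_0,
\end{equation*}
where the perturbations $\epsilon_1,\epsilon_2$ collect all the drift- and control-variable feedback. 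The crux is to verify that the step-size constraints in \eqref{eqn:scaffold-mvr-para}---$\gamma L\lesssim\sqrt{\beta S}$, $\eta KL\lesssim\min\{(\beta/S)^{1/2},(\beta/(SK))^{1/4}\}$, together with $\beta\le S/N$---force $\epsilon_1<\tfrac{\beta}{9}$ and $\epsilon_2<\tfrac{2\beta}{9}$. Rearranging then gives $\sum_r\cE_r\lesssim \tfrac{1}{\beta}\cE_{-1}+\tfrac{2}{7}\sum_r\expect[\|\nabla f(x^r)\|^2]+\tfrac{\beta N}{S^2}V_0+\tfrac{\beta\sigma^2}{SK}R$, exactly mirroring the \scaffoldm computation.

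Next I would combine this with the descent Lemma~\ref{lem:onestep}. Telescoping and absorbing the $\tfrac{2}{7}\sum\expect[\|\nabla f\|^2]$ term into the negative $-\tfrac{11\gamma}{24}\sum\expect[\|\nabla f\|^2]$ contribution leaves $\tfrac{1}{R}\sum_r\expect[\|\nabla f(x^r)\|^2]\lesssim \tfrac{L\Delta}{\gamma LR}+\tfrac{\cE_{-1}}{\beta R}+\tfrac{\beta N}{S^2 R}V_0+\tfrac{\beta\sigma^2}{SK}$. The variance-reduced initialization is decisive here: because $g^0=c^0=\tfrac1N\sum_i c_i^0$ with $c_i^0$ a $B$-sample average, both $\cE_{-1}\le\sigma^2/(NB)$ and $V_0\le\sigma^2/B$ are small, and the choice $B=\lceil\max\{SK/(NR\beta^2),NK/(SR)\}\rceil$ makes $\cE_{-1}/(\beta R)$ and $\tfrac{\beta N}{S^2R}V_0$ both $\lesssim \beta\sigma^2/(SK)$. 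The bound collapses to $\tfrac{L\Delta}{\gamma LR}+\tfrac{\beta\sigma^2}{SK}$; substituting $\gamma=\min\{1/L,\sqrt{\beta S}/L\}$ and $\beta=\min\{S/N,(KL\Delta/(\sigma^2R))^{2/3}S^{1/3}\}$ then balances $L\Delta/(\sqrt{\beta S}R)$ against $\beta\sigma^2/(SK)$ to give $(L\Delta\sigma/(S\sqrt{K}R))^{2/3}$, while the boundary $\beta=S/N$ produces $\tfrac{L\Delta}{R}(1+N^{1/2}/S)$ since $\sqrt{\beta S}=S/\sqrt{N}$ there. The alternative statement follows identically by instead taking $\beta=\min\{1/R,\dots\}$ and $B=\Theta(SKR/N)$, so that $\cE_{-1}\le\sigma^2/(NB)\le\sigma^2/(SKR)$ contributes the extra $\sigma^2/(SKR)$ term.

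I expect the main obstacle to be the bookkeeping in the second paragraph: one must satisfy a small system of polynomial inequalities in $(\beta,\gamma,\eta,S,N,K)$ so that the coupled $\cE_r$--$U_r$--$V_r$ recursion closes with a genuine contraction. The delicate loop is the control-variable feedback: $V_r$ re-enters the $\cE_r$ recursion with coefficient $\q\,\beta^2\le\beta^2/S$, but Lemma~\ref{lem:scaffold_VR_control} amplifies $\sum_r V_r$ by a factor $\tfrac{N}{S}$ (and its internal step requires $\eta K\lesssim\gamma N/S$). Tracing this loop shows that closing it is exactly what forces $\beta\le S/N$, which in turn is the source of the unavoidable $N^{1/2}/S$ factor in the final rate; certifying that this mutual amplification does not overwhelm the $\tfrac{8\beta}{9}$ contraction---rather than proving the individual lemmas---is the genuinely delicate part.
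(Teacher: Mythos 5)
Your proposal follows the paper's proof essentially verbatim: sum the recursion of Lemma~\ref{lem:scaffold_VR_grad_err}, substitute the drift bound \eqref{eq:scaffold_VR_client_drift}, eliminate $\sum_r V_r$ via Lemma~\ref{lem:scaffold_VR_control}, verify the polynomial step-size constraints so the contraction closes, combine with Lemma~\ref{lem:onestep}, and use the batched initialization to control $\cE_{-1}$ and $V_0$ before balancing $\beta$ and $\gamma$. Your diagnosis of the delicate point is also the right one --- the $V_r$ feedback amplified by $N/S$ is exactly what forces $\beta\lesssim S/N$ (via $\gamma L\lesssim S^{3/2}/(\beta^{1/2}N)$) and hence the $N^{1/2}/S$ term --- so this matches the paper's argument in both route and substance.
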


\begin{proof}
    By Lemma \ref{lem:scaffold_VR_grad_err}, sum over $r$ from $0$ to $R-1$ and plug \eqref{eq:scaffold_VR_client_drift}, Lemma \ref{lem:scaffold_VR_control} in,
    \begin{equation*}
        \begin{aligned}
            \sum_{r=0}^{R-1}\cE_r
            &\leq (1-\frac{8\beta }{9})\sum_{r=-1}^{R-2}\cE_r+\frac{14(\gamma L)^2}{S}\sum_{r=0}^{R-2}\expect [\|\nabla f(x^{r})\|^2] + \frac{7\beta ^2\sigma^2}{SK}R \\
            &\quad + \frac{8}{\beta } L^2\sum_{r=0}^{R-1}U_r +4\beta^2\q \sum_{r=0}^{R-1}V_r \\
            &\leq (1-\frac{8\beta }{9}+720\frac{(\eta KL)^2}{\beta }) \sum_{r=-1}^{R-2}\cE_r+(\frac{14(\gamma L)^2}{S}+720\frac{(\eta KL)^2}{\beta })\sum_{r=0}^{R-2}\expect [\|\nabla f(x^{r})\|^2] \\
            &\quad +\beta ^2\sigma^2R\left(\frac{7}{SK}+\frac{64(\eta KL)^2}{\beta }(K^{-1}+2(\eta KL)^2)\right) \\
            &\quad\quad+ \beta ^2\left(\frac{4(N-S)}{S(N-1)} + 720\frac{(\eta KL)^2}{\beta }\right)\sum_{r=0}^{R-1}V_r \\
            &\leq (1-\frac{7\beta }{9})\sum_{r=-1}^{R-2}\cE_r + (\frac{14(\gamma L)^2}{S}+\frac{\beta }{9})\sum_{r=0}^{R-2}\expect [\|\nabla f(x^{r})\|^2] + 60\frac{\beta ^2\sigma^2}{SK}R+15\frac{\beta ^2N}{S^2}V_0.
        \end{aligned}
    \end{equation*}
    Here the coefficients in the last inequality are derived by the following bounds:
    \begin{equation*}
        \left\{
        \begin{array}{c}
            720\frac{(\eta KL)^2}{\beta }+18(\frac{\beta \gamma LN}{S})^2\left(4\q + 720\frac{(\eta KL)^2}{\beta }\right) \leq \frac{\beta }{9}, \\
            \\
            64\frac{(\eta KL)^2}{\beta }(K^{-1}+2(\eta KL)^2) + 8640\frac{(\eta KL)^2}{\beta  K} \leq \frac{5}{SK}, \\
            \\
            720\frac{(\eta KL)^2}{\beta } \leq \frac{1}{S},
        \end{array}
        \right.
    \end{equation*}
    which can be guaranteed by
    \begin{equation*}
        \left\{
        \begin{array}{c}
            \gamma L\lesssim \frac{S^{3/2}}{\beta ^{1/2} N}, \\
            \\
            \eta KL \lesssim \min\{\sqrt{\frac{\beta }{S}}, (\frac{\beta }{SK})^{1/4}\}.
        \end{array}
        \right.
    \end{equation*}
    Therefore, it holds that
    \begin{equation*}
        \sum_{r=0}^{R-1}\cE_r\leq \frac{9}{7\beta }\mathcal{E}_{-1} + \frac{2}{7}\expect \left[\sum_{r=-1}^{R-2}\|\nabla f(x^{r})\|^2\right] + \frac{135\beta  N}{7S^2}V_0+ \frac{540\beta \sigma^2}{7SK}R.
    \end{equation*}
    Combine this inequality with Lemma \ref{lem:onestep} and we get
    \begin{equation*}
        \frac{1}{\gamma }\expect [f(x^R)-f(x^0)]\leq -\frac{1}{7}\sum_{r=0}^{R-1}\expect[\|\nabla f(x^r)\|^2] + \frac{39}{56\beta }\mathcal{E}_{-1}+ \frac{585\beta  N}{56S^2}V_0+ \frac{585\beta \sigma^2}{14SK}R.
    \end{equation*}
    Finally, for $B=\left\lceil \max\left\{\frac{SK}{NR\beta ^2}, \frac{NK}{SR}\right\}\right\rceil$, noticing that $g^0=\frac{1}{NB}\sum_{i,b}\nabla F(x^0;\xi_i^b)$ implies $\mathcal{E}_{-1}\leq \frac{\sigma^2}{NB}\leq \frac{\beta ^2\sigma^2R}{SK}$ and $c_i=\frac{1}{B}\sum_{b}\nabla F(x^0;\xi_i^b)$ implies $V_0\leq\frac{\sigma^2}{B}\leq \frac{SR\sigma^2}{NK}$, we reach
    \begin{equation*}
        \begin{aligned}
            \frac{1}{R}\sum_{r=0}^{R-1}\expect[\|\nabla f(x^r)\|^2]
            &\lesssim \frac{L\Delta  }{\gamma LR}+\frac{\mathcal{E}_{-1}}{\beta  R} + \frac{\beta  N}{S^2R}V_0+ \frac{\beta \sigma^2}{SK} \\
            &\lesssim \frac{L\Delta  }{R}+\frac{L\Delta  }{(\beta  S)^{1/2}R}+\frac{L\Delta  }{S^{3/2}R}N\beta ^{1/2}+\frac{\beta \sigma^2}{SK} \\
            &\lesssim \frac{L\Delta  }{R}\left(1+\frac{N^{1/2}}{S}\right)+\left(\frac{L\Delta  \sigma}{S\sqrt{K}R}\right)^{2/3}.
        \end{aligned}
    \end{equation*}
    Similarly, for $B=\frac{SKR}{N}$ and $R\gtrsim \frac{N}{S}$, $\mathcal{E}_{-1}\leq \frac{\sigma^2}{NB}\leq \frac{\sigma^2}{SKR}$, $V_0\leq\frac{\sigma^2}{B}\leq \frac{N\sigma^2}{SKR}$ and thus we have
    \begin{equation*}
        \begin{aligned}
            \frac{1}{R}\sum_{r=0}^{R-1}\expect[\|\nabla f(x^r)\|^2]
            &\lesssim \frac{L\Delta  }{\gamma LR}+\frac{\mathcal{E}_{-1}}{\beta  R} + \frac{\beta  N}{S^2R}V_0+ \frac{\beta \sigma^2}{SK} \\
            &\lesssim \frac{L\Delta  }{R}+\frac{L\Delta  }{(\beta  S)^{1/2}R}+\frac{L\Delta  }{S^{3/2}R}N\beta ^{1/2}+ \frac{\sigma^2}{\beta SKR^2}+ \frac{\beta \sigma^2}{SK} \\
            &\lesssim \frac{L\Delta  }{R}\left(1+\frac{N^{1/2}}{S}\right)+\left(\frac{L\Delta  \sigma}{S\sqrt{K}R}\right)^{2/3} + \frac{\sigma^2}{SKR}.
        \end{aligned}
    \end{equation*}
\end{proof}

\newpage
\section{Implementation details \& more experiments}\label{app_sec:exp}

\subsection{Training setup of MLP}
We generate non-iid data for the clients, we sample label ratios from the Dirichlet distribution \citep{hsu2019measuring} with a parameter of $0.5$ for the full participation setting and $0.2$ for the partial participation setting. Our experimental setup involves $N=10$ clients and $K=32$ local updates. The weight decay is set as $10^{-4}$. The global learning rate is fixed as $\gamma=\eta K$ for all the algorithms, and we perform a grid search for the local learning rate $\eta$ in values $\{0.005, 0.01, 0.05, 0.1, 0.5\}$. Similarly, we search for the momentum parameter $\beta$ in values $\{0.1,0.2,0.5,0.8\}$.

\subsection{Training setup of ResNet18}\label{app_subsec:resnet_high}

We generate non-iid data by setting the parameter of Dirichlet distribution as $0.1$, which implies higher heterogeneity. The experiment involves $N=10$ clients and $K=16$ local updates. We set $S=2$ in the partial participation setting. The local learning is fixed as $\hat{\eta}=0.001$ and global learning rate is $\hat{\gamma}=\hat{\eta}K$. The momentum parameter is $\beta=0.1$ and batchsize is $128$.

\paragraph{Reparameterizing momentum. }The update rule of \fedavgm in \eqref{eq:fedavg_m} is equivalent to, with a transformation of hyperparameters $\hat{g}_i^{r,k} := g_i^{r,k} / \beta,\ \hat{g}^r := g^r / \beta,\ \hat{\eta}:= \beta\eta,\ \hat{\gamma}:= \beta\gamma$,
\begin{align}
    \hat{g}_i^{r,k}=\nabla F(x^{r,k}_i;\xi^{r,k}_i) + (1-\beta)\hat{g}^r,
\end{align}
\begin{align}
    x_i^{r,k+1}=x_i^{r,k}-\hat{\eta}\ \hat{g}_i^{r,k},\quad x^{r+1}=x^r-\hat{\gamma}\hat{g}^{r+1}.
\end{align}
This is typically used in the current Pytorch implementation of momentum-based methods. When $\beta=1$, it still reduces to vanilla \fedavg.
Similarly, in \scaffoldm, the update rule \eqref{eq:scaffold_m} is equivalent to 
\begin{align}
    \hat{g}^{r,k}_i= (\nabla F(x^{r,k}_i;\xi^{r,k}_i)-c_i^r+c^r) + (1-\beta )\hat{g}^r
\end{align}
In all the experiments on ResNet18, we implement our proposed \fedavgm and \scaffoldm with this reparameterization.


\subsection{More experiments of ResNet18}
We conduct more algorithms under mild heterogeneity with the parameter of Dirichlet distribution being $0.5$.  We set $S=5$ in the partial participation setting. Other hyperparameters are the same as described in Section \ref{app_subsec:resnet_high}. We plot the evolution of test loss in Figures \ref{fig:full_low_loss} and \ref{fig:par_low_loss}, respectively.  Again, we observe that our proposed \fedavgm and \scaffoldm outperform the vanilla \fedavg and \scaffold with evident margins. We also evaluate VR methods in terms of test accuracy in the context of full client participation. The results are presented in Figure \ref{fig:vr_full_low}, which demonstrates the advantage of our proposed VR methods over the prior methods.

\begin{figure}[ht]
    \centering
    \hspace{-3mm}
    \subfigure[Full participation]{\includegraphics[clip,width=0.45\textwidth]{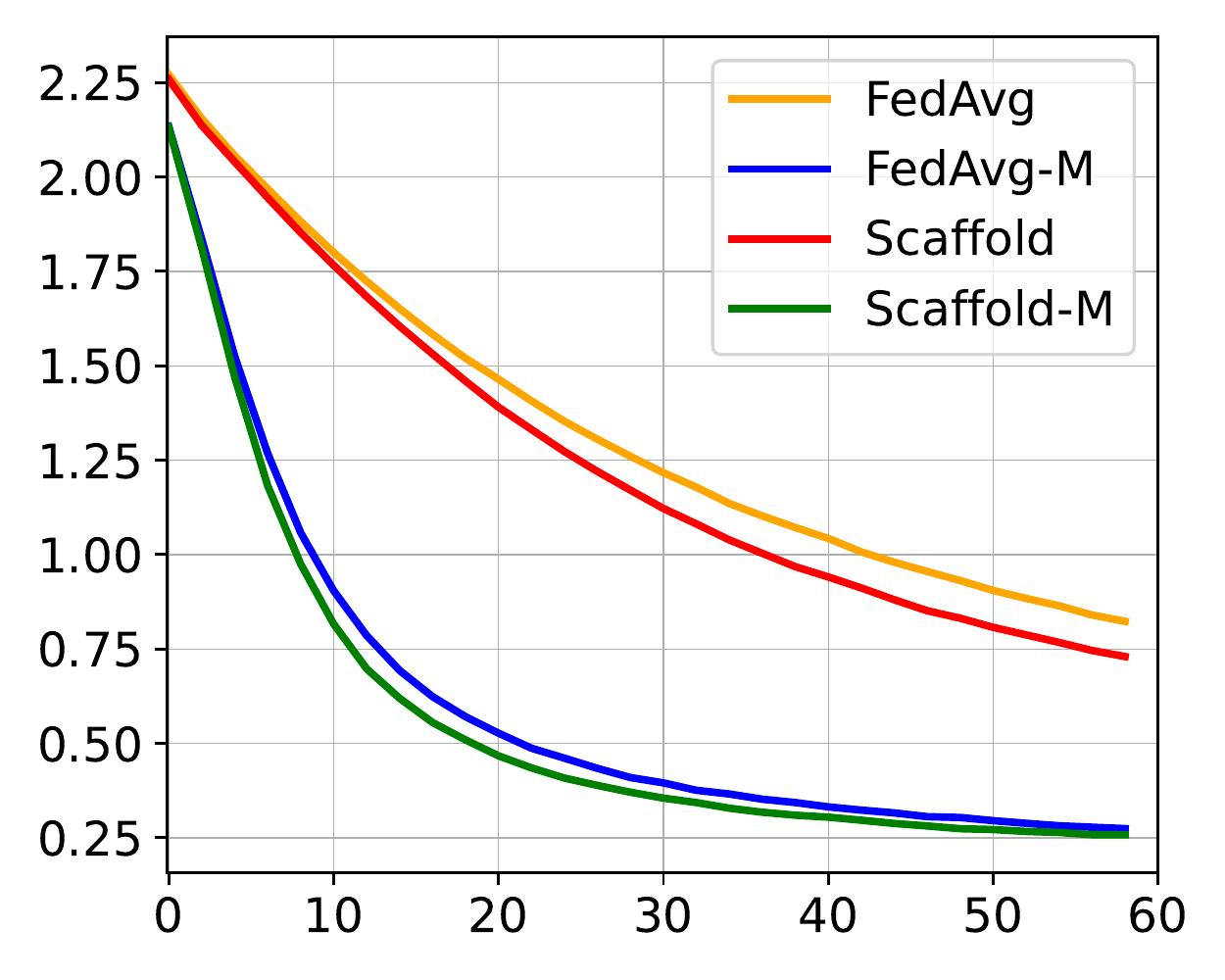} \label{fig:full_low_loss}} \hspace{-3mm}
    \subfigure[Partial participation]{\includegraphics[clip,width=0.45\textwidth]{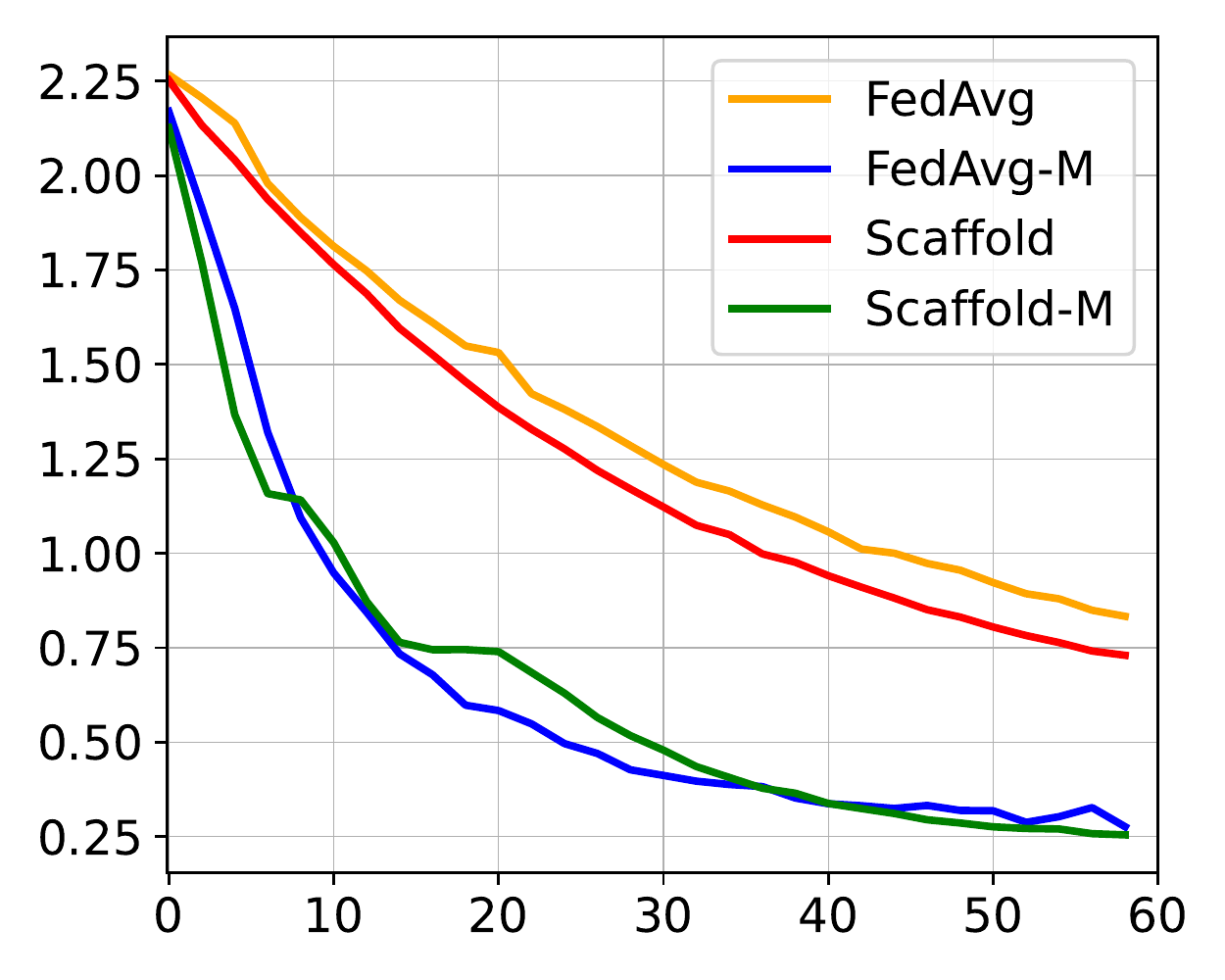} \label{fig:par_low_loss}}
    \hspace{-3mm}
    \caption{Test loss of ResNet18  versus the number of communication rounds}
\end{figure}

\begin{figure}
    \centering
    \hspace{-3mm}
    \includegraphics[clip,width=0.45\textwidth]{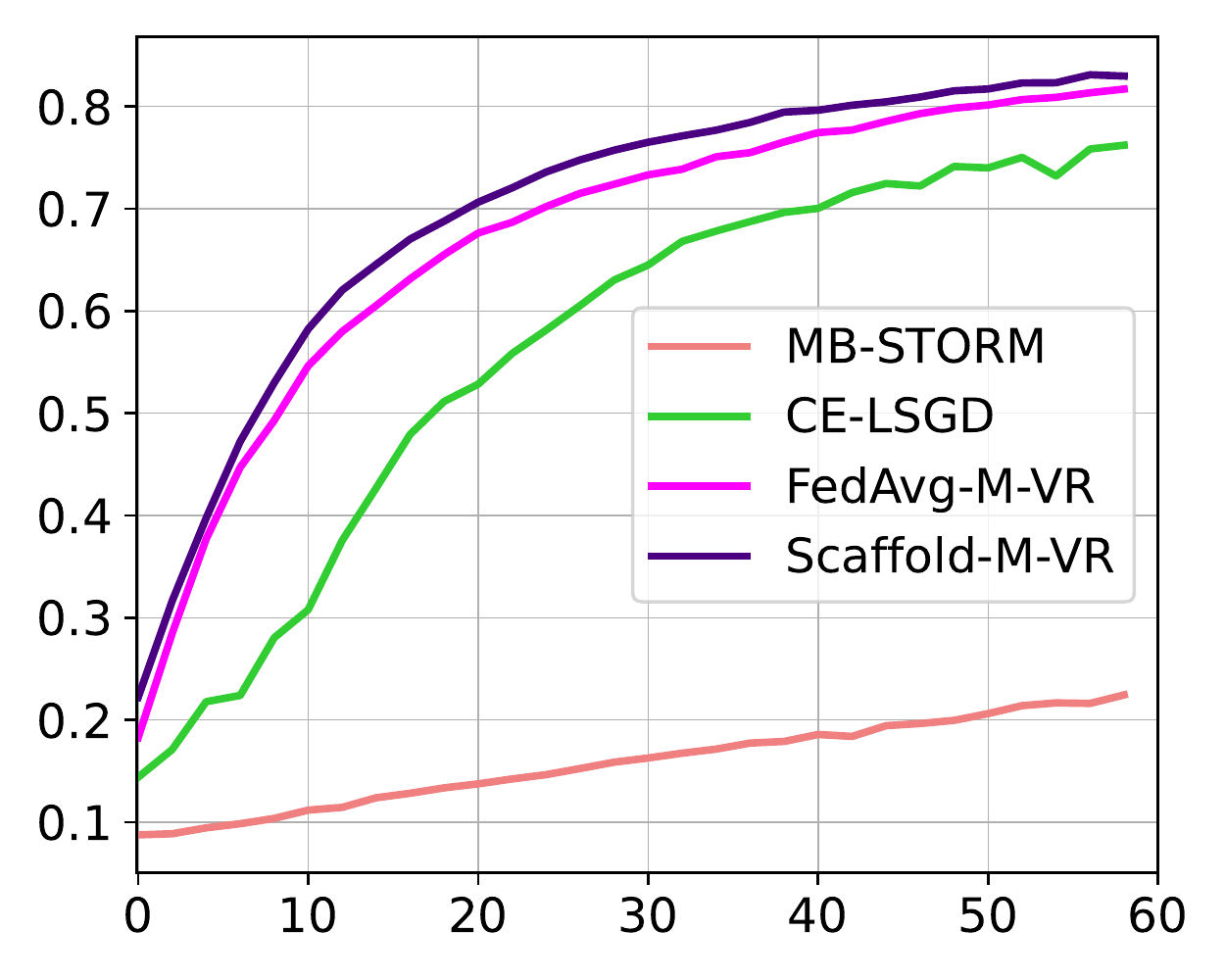}
    \caption{Comparing the test accuracy of VR methods with ResNet-18}
    \label{fig:vr_full_low}
    \hspace{-3mm}
\end{figure}


{\subsection{Experiments with more clients}}
\begin{figure}[h]
    \centering
    \hspace{-3mm}
    \subfigure[Full participation]{\includegraphics[clip,width=0.34\textwidth]{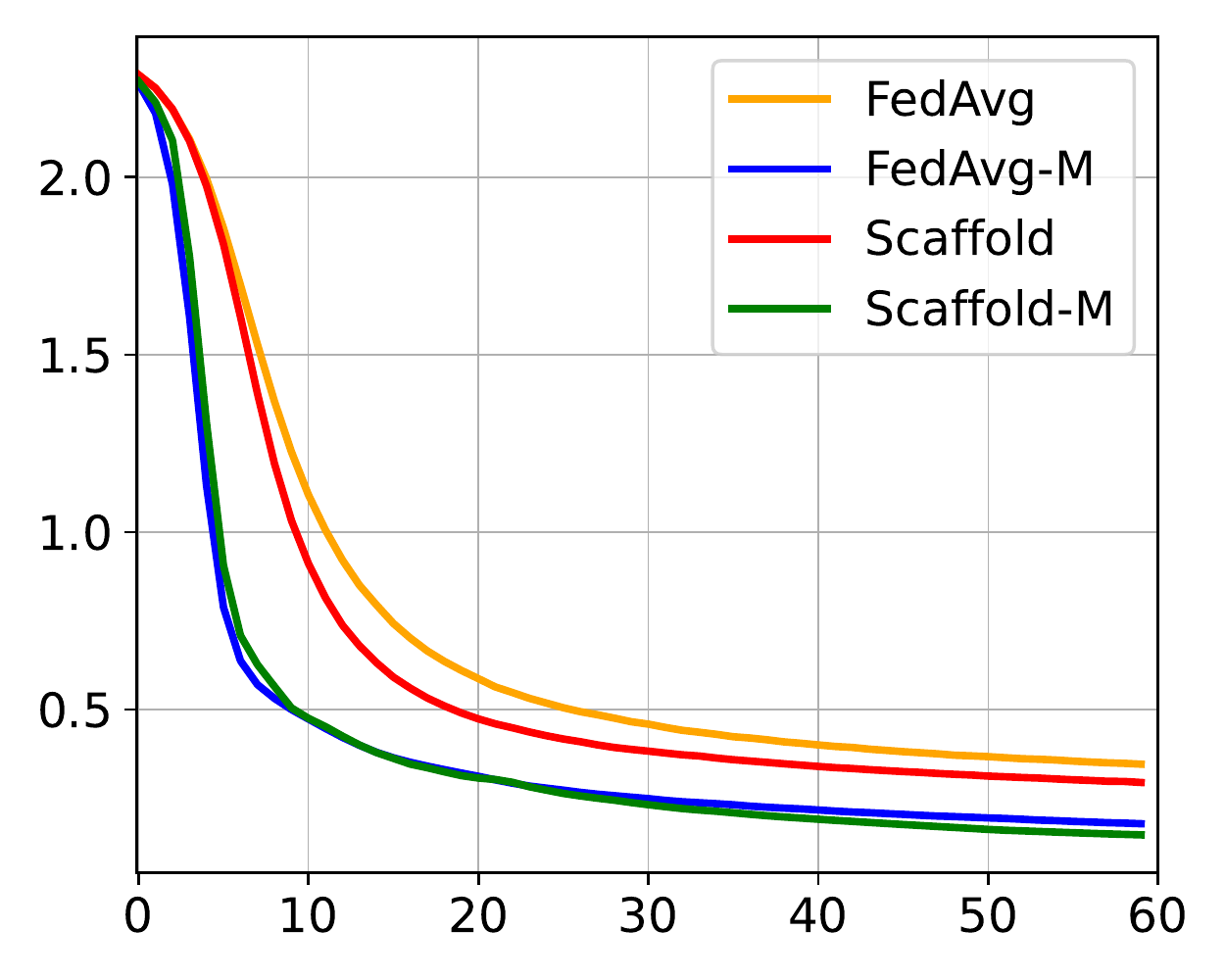} \label{fig:large_full_loss}} \hspace{-3mm}
    \subfigure[Partial participation, $S=10$]{\includegraphics[clip,width=0.34\textwidth]{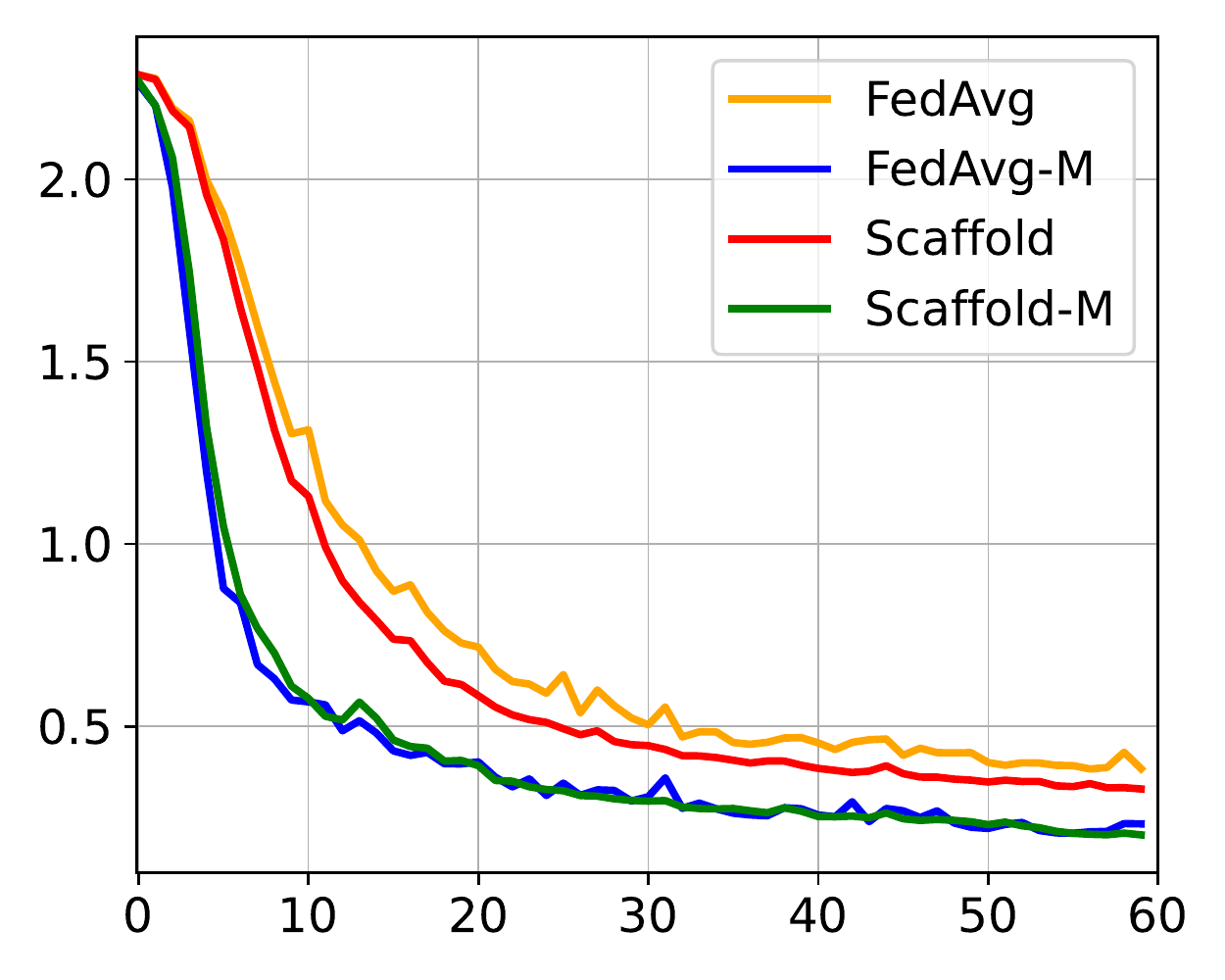} \label{fig:large_par_10_loss}}
    \hspace{-3mm}
    \subfigure[Partial participation, $S=5$]{\includegraphics[clip,width=0.34\textwidth]{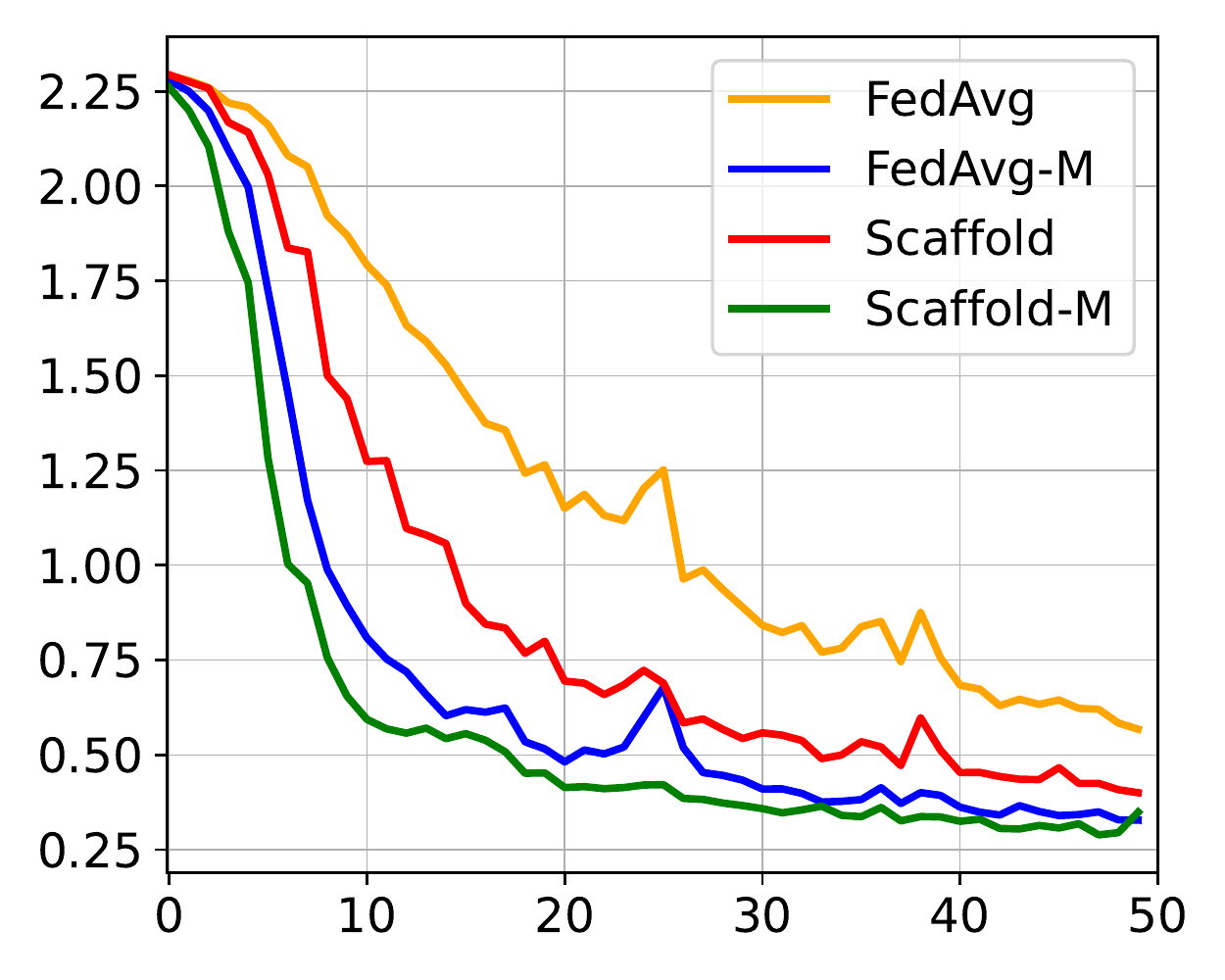} \label{fig:large_par_5_loss}}
    \hspace{-3mm}
    \caption{Test loss of MNIST versus the number of communication rounds}
    \label{fig:large_loss}
\end{figure}

\begin{figure}[h]
    \centering
    \hspace{-3mm}
    \subfigure[Full participation]{\includegraphics[clip,width=0.34\textwidth]{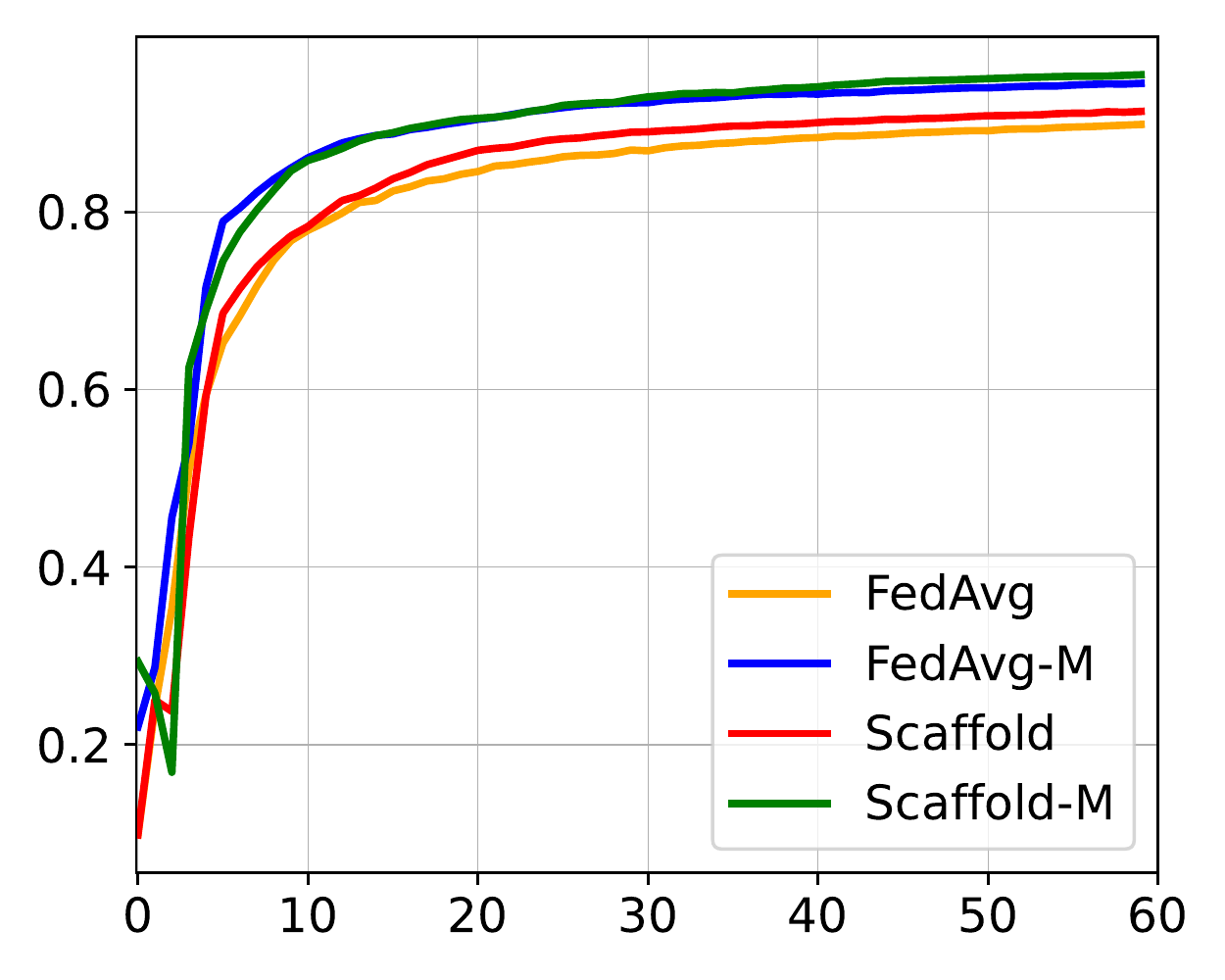} \label{fig:large_full_acc}} \hspace{-3mm}
    \subfigure[Partial participation, $S=10$]{\includegraphics[clip,width=0.34\textwidth]{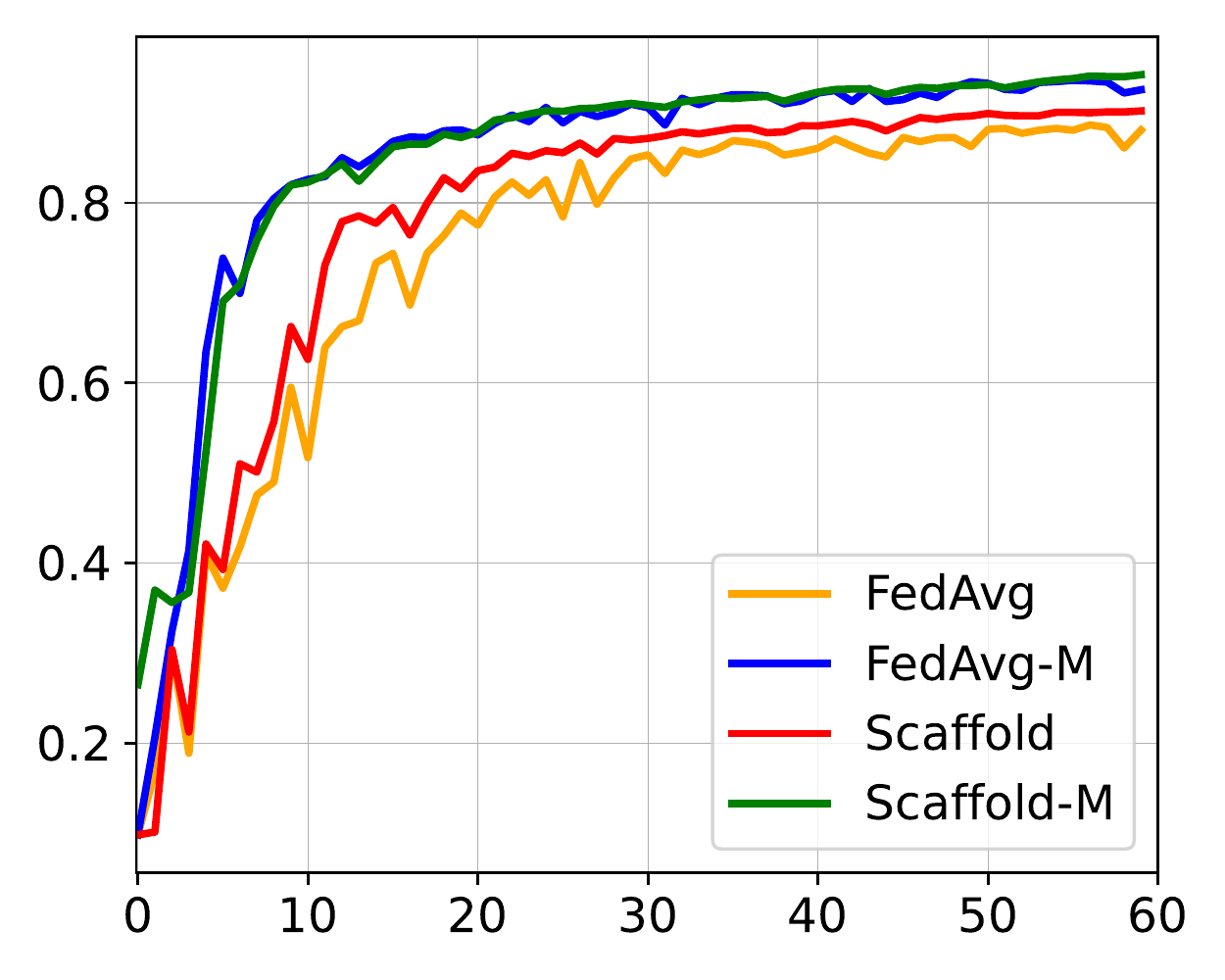} \label{fig:large_par_10_acc}}
    \hspace{-3mm}
    \subfigure[Partial participation, $S=5$]{\includegraphics[clip,width=0.34\textwidth]{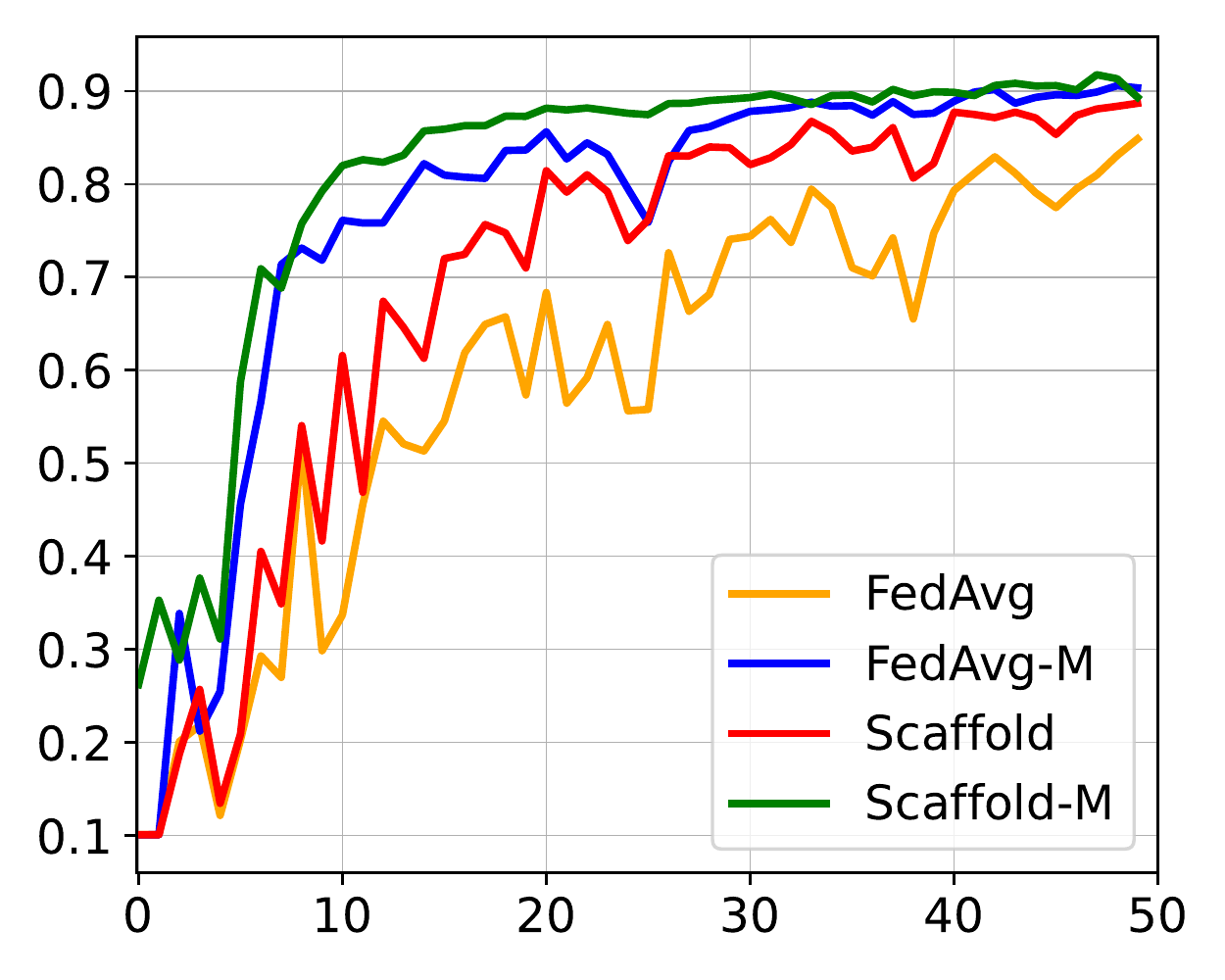} \label{fig:large_par_5_acc}}
    \hspace{-3mm}
    \caption{Test accuracy of MNIST versus the number of communication rounds}
    \label{fig:large_acc}
\end{figure}

{We further conduct experiments with $N=100$ on the MNIST dataset and two-layer fully connected ReLU neural network. 
The parameter of Dirichlet distribution is $0.2$ and the batchsize is $32$.
We set the number of local steps $K=16$. For \fedavgm and \scaffoldm, we set $\beta=0.2$. We plot the test loss and test accuracy of our proposed algorithms in the regime of full participation ($S=N$) and partial participation ($S=10$ and $S=5$).
The results are shown in Figure \ref{fig:large_loss} and \ref{fig:large_acc}.
Compared to former experiments where $N$, we observe that our proposed momentum-based algorithms scale well to FL setups with large $N$. Moreover, we observe that the advantage of our momentum-based variants over the vanilla \fedavg and \scaffold becomes more evident when fewer clients participate in training, suggesting a great utility of our algorithms in practical FL setups.}

{\subsection{Impact of momentum value $\beta$}
To further illustrate the effect of momentum, we examine different choices of $\beta$ in both \fedavgm and \scaffoldm under partial participation setting with $S=5$ and $N=100$. We again simulate with the MNIST dataset and two-layer fully connected ReLU neural networks. The results are shown in Figure \ref{fig:large_mom_loss} and \ref{fig:large_mom_acc}.
It is worth noting that when $\beta\to 1$, the momentum will anneal down to off, recovering the vanilla \fedavg and \scaffold. 
We observe that the stronger the momentum used, the better performance we eventually obtain. This directly demonstrates the benefit of momentum.}

\begin{figure}[ht]
    \centering
    \hspace{-3mm}
    \subfigure[\fedavgm]{\includegraphics[clip,width=0.45\textwidth]{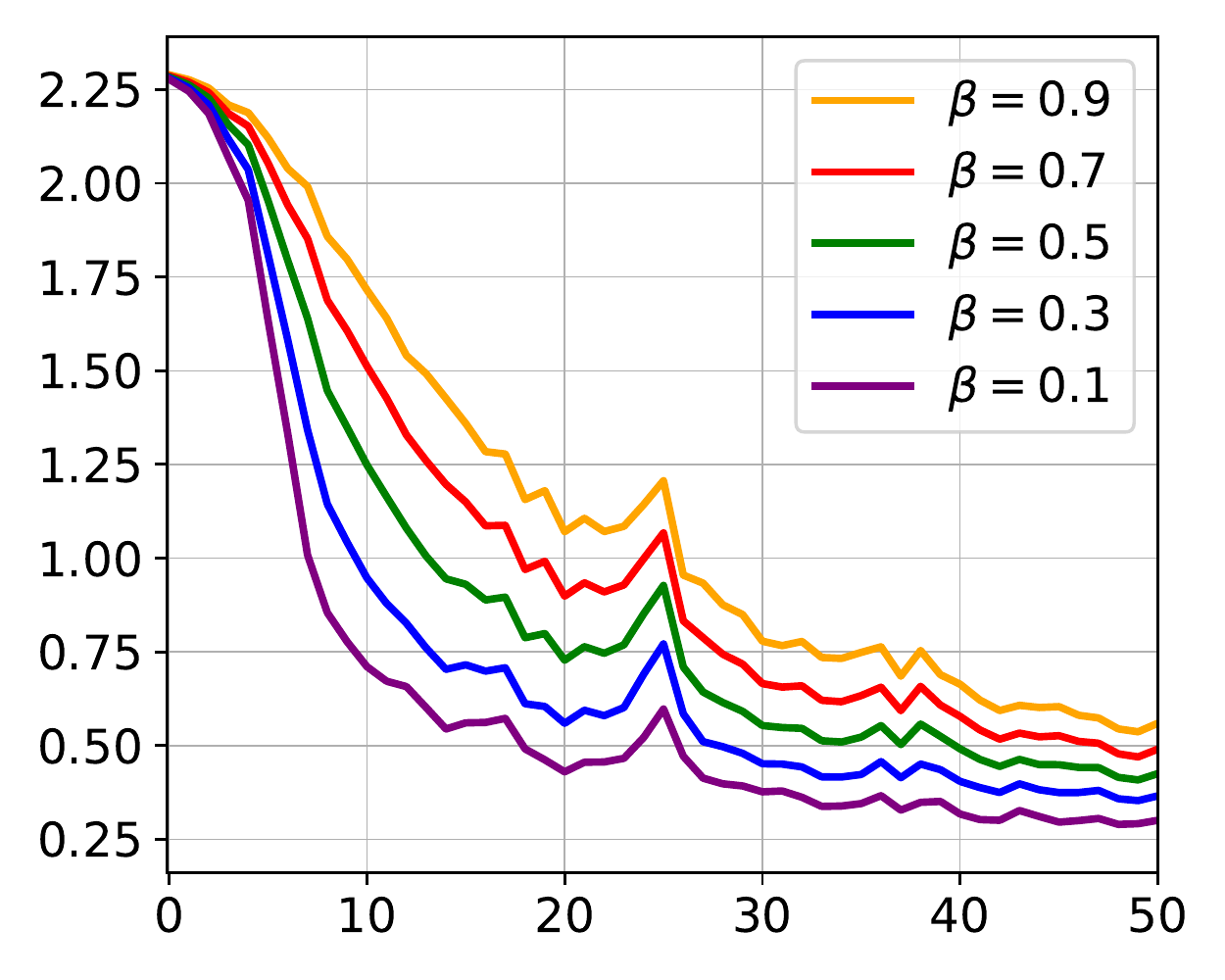} \label{fig:large_fedavg_mom_loss}} \hspace{-3mm}
    \subfigure[\scaffoldm]{\includegraphics[clip,width=0.45\textwidth]{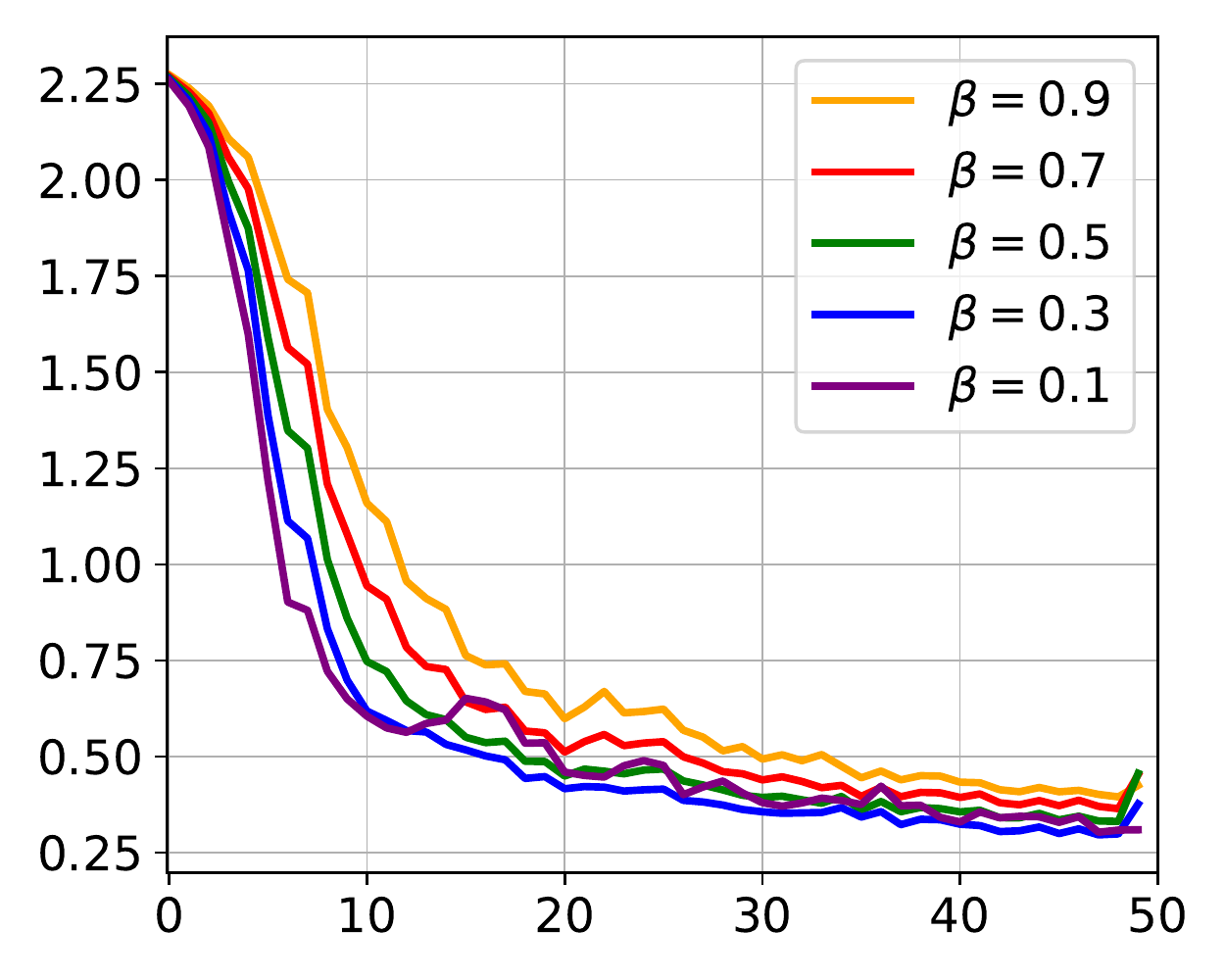} \label{fig:large_scaffold_mom_loss}}
    \hspace{-3mm}
    \caption{Test loss versus communication rounds with different momentum values}
    \label{fig:large_mom_loss}
\end{figure}

\begin{figure}[ht]
    \centering
    \hspace{-3mm}
    \subfigure[\fedavgm]{\includegraphics[clip,width=0.45\textwidth]{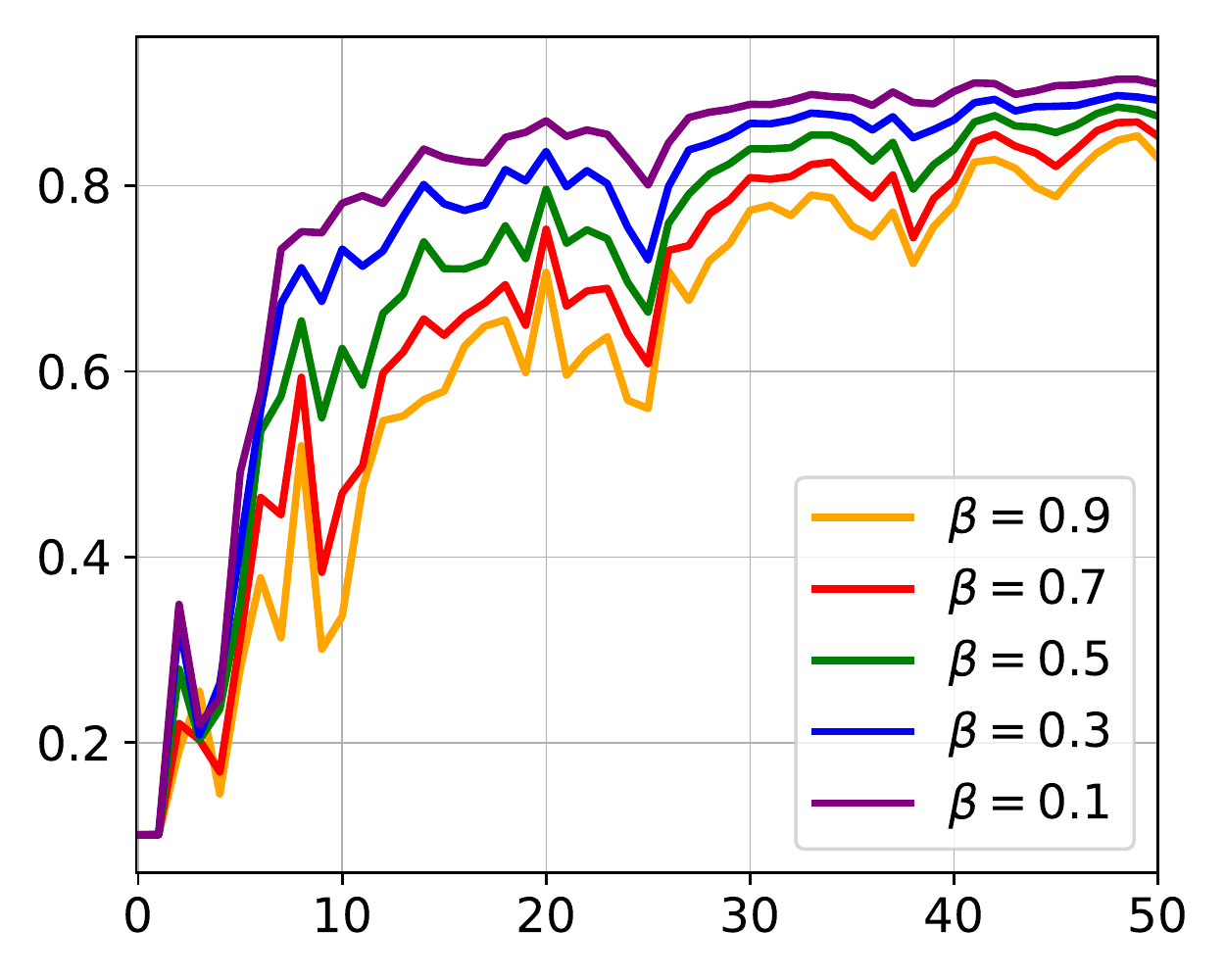} \label{fig:large_fedavg_mom_acc}} \hspace{-3mm}
    \subfigure[\scaffoldm]{\includegraphics[clip,width=0.45\textwidth]{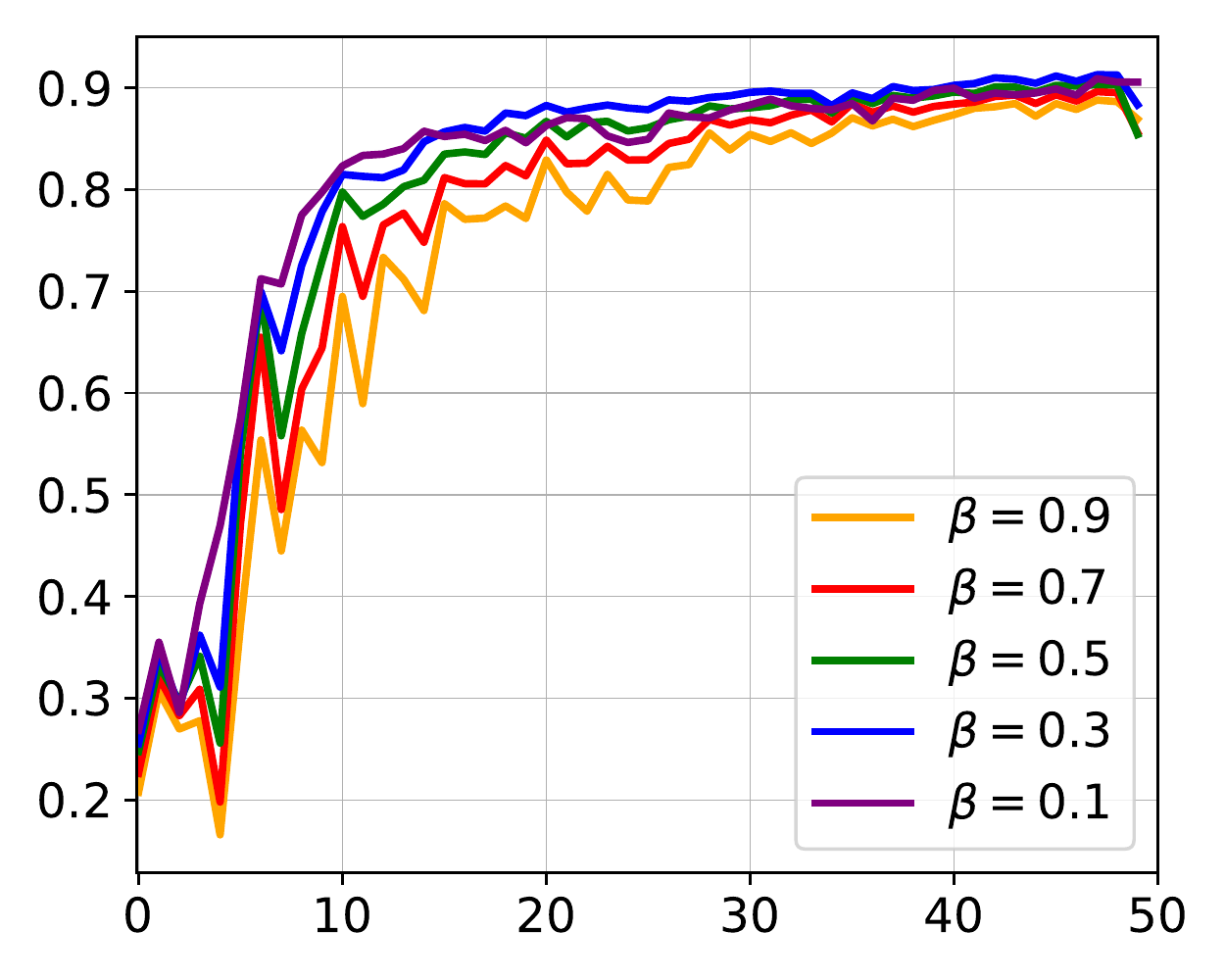} \label{fig:large_scaffold_mom_acc}}
    \hspace{-3mm}
    \caption{Test accuracy versus communication rounds with different momentum values}
    \label{fig:large_mom_acc}
\end{figure}

\end{document}